\newcommand{\ie}{\emph{i.e.,}\xspace}
\newcommand{\eg}{\emph{e.g.,}\xspace}
\theoremstyle{plain}
\newtheorem{theorem}{Theorem}
\newtheorem{corollary}{Corollary}
\theoremstyle{definition}
\newtheorem{definition}{Definition}
\theoremstyle{remark}
\icmltitlerunning{Learn to Accumulate Evidence from All Training Samples: Theory and Practice}
\begin{document}

\twocolumn[
\icmltitle{Learn to Accumulate Evidence from All Training Samples: Theory and Practice}




\begin{icmlauthorlist}
\icmlauthor{Deep Pandey}{sch}
\icmlauthor{Qi Yu}{sch}
\end{icmlauthorlist}

\icmlaffiliation{sch}{Rochester Institute of Technology}

\icmlaffiliation{sch}{Rochester Institute of Technology}

\icmlcorrespondingauthor{Qi Yu}{qi.yu@rit.edu}

\icmlkeywords{Machine Learning, ICML, Meta-Learning, Uncertainty-Quantification, Evidential Deep Learning}

\vskip 0.3in
]



\printAffiliationsAndNotice{}  

\begin{abstract}
\label{sec:abstract}
Evidential deep learning, built upon belief theory and subjective logic, offers a principled and computationally efficient way to turn a deterministic neural network uncertainty-aware. The resultant evidential models can quantify fine-grained uncertainty using the learned evidence. To ensure theoretically sound evidential models, the evidence needs to be non-negative, which requires special activation functions for model training and inference. This constraint often leads to inferior predictive performance compared to standard softmax models, making it challenging to extend them to many large-scale datasets. To unveil the real cause of this undesired behavior, we theoretically investigate evidential models and identify a fundamental limitation that explains the inferior performance: existing evidential 
activation functions create {\em zero evidence regions}, which prevent the model to learn from training samples falling into such regions.  
A deeper analysis of evidential activation functions based on our theoretical underpinning inspires the design of a novel regularizer that effectively alleviates this fundamental limitation. Extensive experiments over many challenging real-world datasets and settings confirm our theoretical findings and demonstrate the effectiveness of our proposed approach.    
\end{abstract}

\vspace{-2mm}\section{Introduction}\vspace{-2mm}
\label{sec:introduction}
Deep Learning (DL) models have found great success in many real-world applications such as speech recognition \cite{kamath2019deep}, machine translation \cite{singh2017machine}, and computer vision \cite{voulodimos2018deep}. 
However, these highly expressive models may easily fit the noise in the training data, which leads to overconfident predictions  \cite{nguyen2015deep}. The challenge is further compounded when learning from limited labeled data, which is common for applications from specialized domain (\eg medicine, public safety, and military operations) where data collection and annotation is highly costly. 
Accurate uncertainty quantification is essential for successful application of DL models in these domains. To this end, DL models have been augmented to become uncertainty-aware~\cite{gal2016dropout,blundell2015weight,pearce2020uncertainty}. However, commonly used extensions require expensive sampling operations \cite{gal2016dropout,blundell2015weight}, which significantly increase the computational costs \cite{lakshminarayanan2017simple}. 

The recently developed evidential models bring together evidential theory~\cite{shafer1976mathematical,josang2016subjective} and deep neural architectures 
that turn a deterministic neural network uncertainty-aware. By leveraging the learned evidence,  evidential models are capable of quantifying fine-grained uncertainty that helps to identify the sources of `unknowns'. Furthermore, since only lightweight modifications are introduced to  existing DL architectures, additional computational costs remain minimum. 
Such evidential models have been successfully extended to classification \cite{sensoy2018evidential}, regression \cite{amini2020deep}, meta-learning \cite{Pandey_2022_CVPR}, and open-set recognition \cite{bao2021evidential} settings. 

\begin{wrapfigure}{r}{0.26\textwidth}
\vspace{-7mm}
  \begin{center}
    \hspace{-0.72cm}\includegraphics[width=\linewidth]{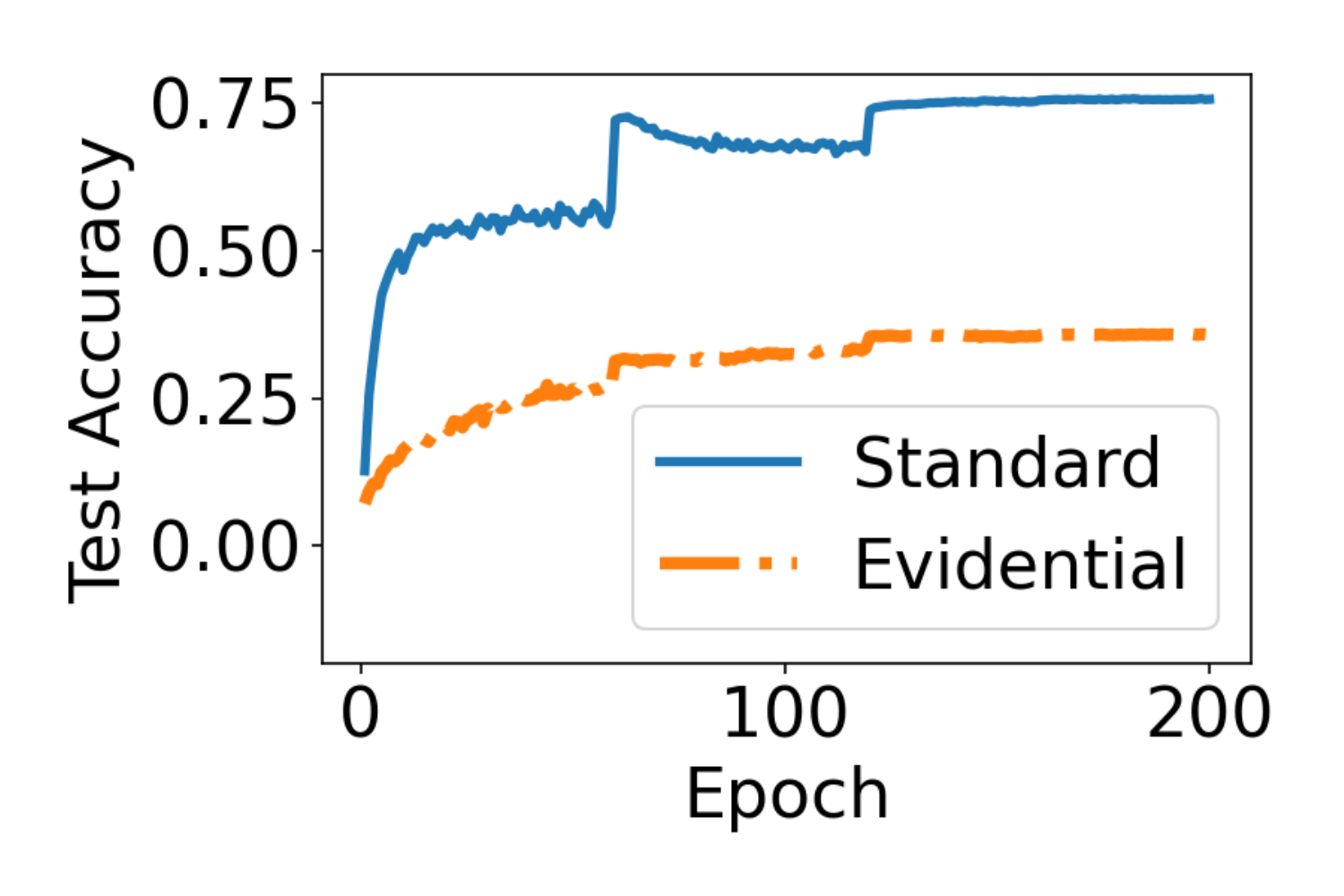}
  \end{center}
  \vspace{-8mm}
  \caption{Cifar100 Result}
  \label{fig:cifar100MSEComp}
\vspace{-4mm}
\end{wrapfigure}
Despite the attractive uncertainty quantification capacity, evidential models are only able to achieve a predictive performance on par with standard deep architectures in relatively simple learning problems. They suffer from a significant performance drop when facing large datasets with  more complex features even in the common classification setting. 
As shown in Figure \ref{fig:cifar100MSEComp}, an evidential model using ReLU activation and an evidential MSE loss \cite{sensoy2018evidential} only achieves  $~36\%$ test accuracy on Cifar100, which is almost 40\% lower than a standard model trained using softmax. Additionally, most evidential models can easily break down with minor architecture changes and/or have a much stronger dependency on hyperparameter tuning to achieve reasonable predictive performance. The experiment section provides more details on these failure cases. 

To train uncertainty-aware evidential models that can also predict well, we perform a novel theoretical analysis with a focus on the standard classification setting to unveil the underlying cause of the performance gap. Our theoretical results show that existing evidential models 
learn sub-optimally compared to corresponding softmax counterparts. Such sub-optimal training is mainly attributed to the inherent {\em learning deficiency} of evidential models that prevents them from learning across all training samples. More specifically, they are incapable to acquire new knowledge from training samples mapped to ``zero-evidence regions" in the evidence space, where the predicted evidence reduces to zero. The sub-optimal learning phenomenon is illustrated in Figure \ref{fig:intuitiveFailureOfRelu} (detailed discussion is presented in Section \ref{sec:intuitiveDescription}). We analyze different variants of evidential models present in the existing literature and observe this limitation across all the models and settings. Our theoretical results inspire the design of a novel \textbf{R}egularized \textbf{E}vidential mo\textbf{d}el (\textbf{RED}) that includes positive evidence regularization in its training objective to battle the learning deficiency. Our major contributions can be summarized as follows:
\begin{itemize}[noitemsep,topsep=0pt,leftmargin=*]
    \item We identify a fundamental limitation of evidential models, \ie lack the capability to learn from any data samples that lie in the ``zero-evidence" region in the evidence space.
    \item We theoretically show the superiority of evidential models with $\exp$ activation over other activation functions.
    \item We conduct novel evidence regularization that enables evidential models to avoid the ``zero-evidence" region so that they can effectively learn from all training samples.
    \item We carry out experiments over multiple challenging real-world datasets to empirically validate the presented theory, and show the effectiveness of our proposed ideas.
\end{itemize}
\begin{figure}[t!] 
\centering
  \includegraphics[width=0.90\linewidth]{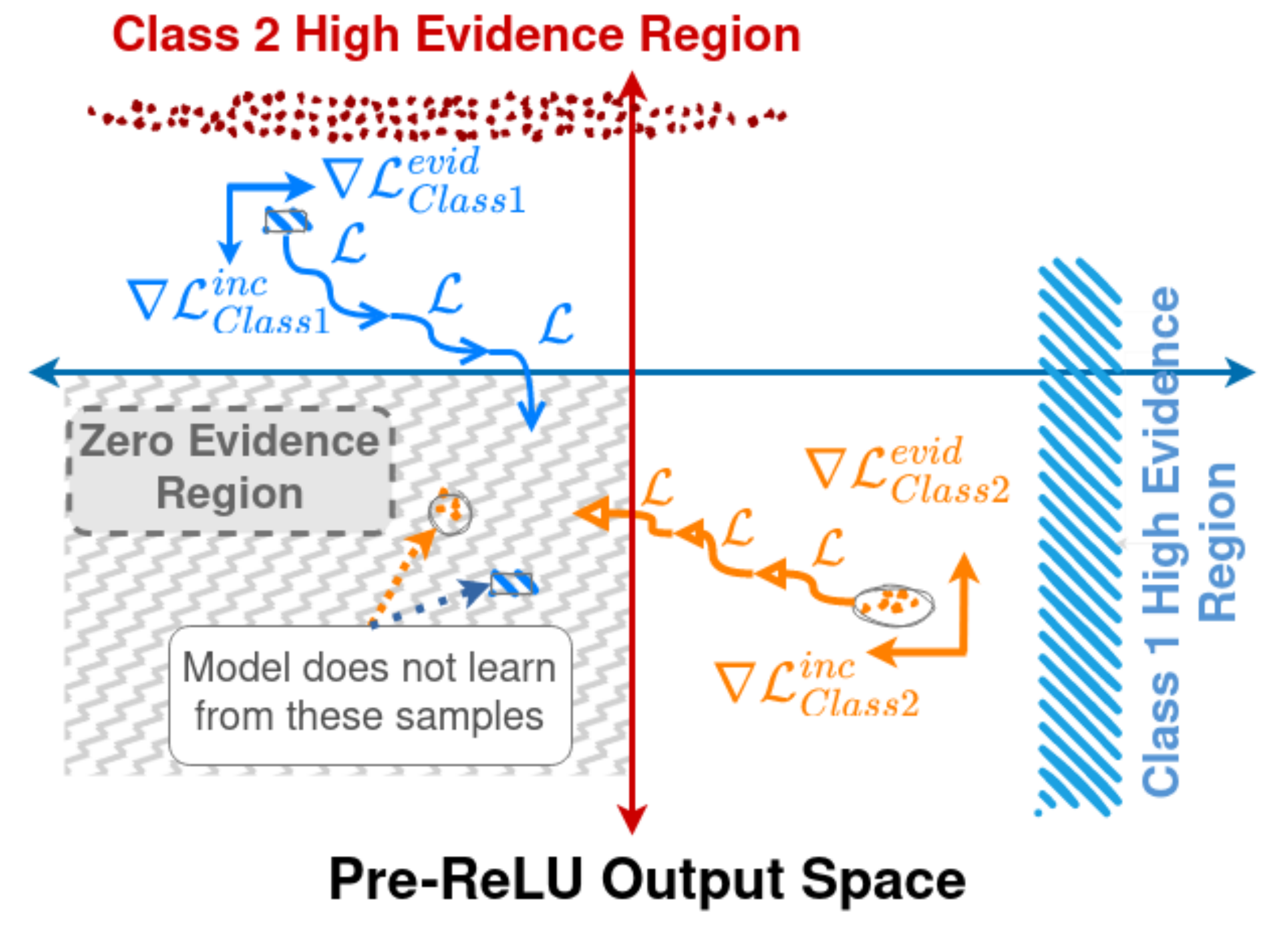}
  \vspace{-3mm}
\caption{\label{fig:intuitiveFailureOfRelu}Visualization of zero-evidence region for evidential models with $\texttt{ReLU}$ activation in a binary classification setting. Existing models fail to learn from samples that are mapped to such zero-evidence region (shared area at the bottom left quadrant).}
\vspace{-5mm}
\end{figure} 

\vspace{-2mm}\section{Related Works}\vspace{-2mm}

\paragraph{Uncertainty Quantification in Deep Learning.}
Accurate quantification of predictive uncertainty is essential for development of trustworthy Deep Learning (DL) models. Deep ensemble techniques \cite{pearce2020uncertainty,lakshminarayanan2017simple} have been developed for uncertainty quantification. An ensemble of neural networks is constructed and the agreement/disagreement across the ensemble components is used to quantify different uncertainties. Ensemble-based methods significantly increase the number of model parameters, which are computationally expensive at both training and test times. Alternatively, Bayesian neural networks  \cite{gal2016dropout}\cite{blundell2015weight}\cite{mobiny2021dropconnect} have been developed that consider a Bayesian formalism to quantify different uncertainties. For instance, \cite{blundell2015weight} use Bayes-by-backdrop to learn a distribution over neural network parameters, whereas \cite{gal2016dropout} enable dropout during inference phase to obtain predictive uncertainty. Bayesian methods resort to some form of approximation to address the intractability issue in marginalization of latent variables. Moreover, these methods are also computationally expensive as they require sampling for uncertainty quantification.
\vspace{-4mm}\paragraph{Evidential Deep Learning.}
Evidential models introduce a conjugate higher-order evidential prior for the likelihood distribution that enables the model to capture the fine-grained uncertainties. For instance, Dirichlet prior is introduced over the multinomial likelihood for evidential classification \cite{bao2021evidential,zhao2020uncertainty}, and NIG prior is introduced over the Gaussian likelihood \cite{amini2020deep, pandey2022evidential} for the evidential regression models. 
Adversarial robustness \cite{kopetzki2021evaluating} and calibration \cite{tomani2021towards} of evidential models have also been well studied. Usually, these models are trained with evidential losses in conjunction with heuristic evidence regularization to guide the uncertainty behavior \cite{Pandey_2022_CVPR,shi2020multifaceted} in addition to reasonable generalization performance. Some evidential models assume access to out-of-distribution data during training \cite{malinin2019reverse,malinin2018predictive} and use the OOD data to guide the uncertainty behavior. A recent survey  \cite{ulmer2021survey} provides a thorough review of the evidential deep learning field. 

In this work, we focus on evidential classification models and
consider settings where no OOD data is used during model training to make the proposed approach more broadly applicable to practical real-world situations.



\vspace{-2mm}\section{Learning Deficiency of Evidential Models}
\label{sec:EvDlModel}
\subsection{Preliminaries and problem setup}\vspace{-2mm}
Standard classification models use a softmax transformation on the output from the neural network $\mathcal{F}_{\Theta}$ for input $\mathbf{x}$ to obtain the class probabilities in $K$-class classification problem. 
Such models are trained with the cross-entropy based loss. For a given training sample $(\mathbf{x}, \mathbf{y})$, the loss is given by
\begin{align}
    \mathcal{L}_{cross} &= -\sum_{k = 1}^K \mathbf{y}_k \log (\texttt{sm}_k ) 
\end{align} 
where $\texttt{sm}_k$ is the softmax output.  
These models have achieved state-of-the-art performance on many benchmark problems. A detailed gradient analysis shows that they can effectively learn from all training data samples (see Appendix \ref{sec:appAnalysisStandardClassificationModels}). Nevertheless, these models lack a systematic mechanism to quantify different sources of uncertainty, a highly desired property in many real-world problems.

\begin{figure}[htpb]
\vspace{-3mm}
\centering
  \includegraphics[width=0.9\linewidth]{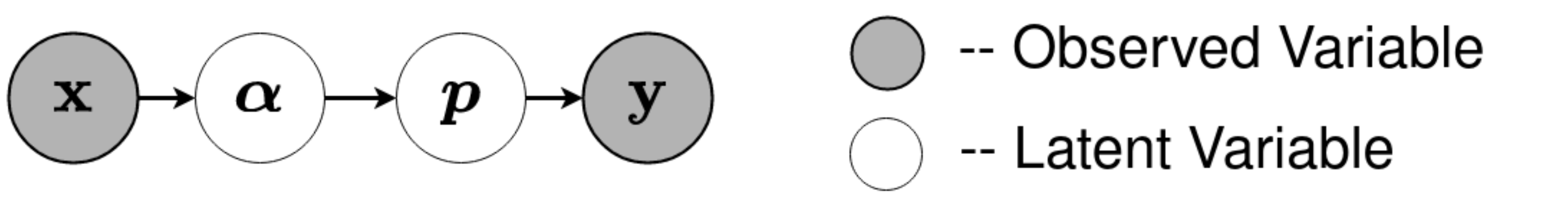}
  \caption{Graphical model for Evidential Deep Learning}
\label{fig:evidGraphicalRep}
\vspace{-2mm}
\end{figure}
Evidential classification models formulate training as an evidence acquisition process and consider a higher-order Dirichlet prior $\texttt{Dir}(\mathbf{p}|\boldsymbol{\alpha})$ over the predictive Multinomial distribution $\texttt{Mult}(\mathbf{y}|\mathbf{p})$. Different from a standard Bayesian formulation which optimizes {\em Type II Maximum Likelihood} to learn the Dirichlet hyperparameter~\cite{bishop2006pattern}, evidential models directly predict $\boldsymbol{\alpha}$ using data features ${\bf x}$ and then generate the prediction ${\bf y}$ by marginalizing the Multinomial parameter ${\bf p}$. Figure \ref{fig:evidGraphicalRep} describes this generative process. Such higher-order prior enables the model to systematically quantify different sources of uncertainty. In evidential models, the softmax layer of the standard neural networks is replaced by a non-negative activation function $\mathcal{A}$, where $\mathcal{A}({\bf x}) \geq 0 \quad \forall x \in [-\infty, \infty]$, such that for input $\mathbf{x}$, the neural network model $\mathcal{F}_{\Theta}$ with parameters  $\Theta$ can output evidence $\mathbf{e}$ for different classes. Dirichlet prior $\boldsymbol{\alpha}$ is evaluated as $\boldsymbol{\alpha} = \mathbf{e} + \boldsymbol{1}$ to ensure $\boldsymbol{\alpha} \geq 1$. The trained evidential model outputs Dirichlet parameters $\boldsymbol{\alpha}$ for input $\mathbf{x}$ that can quantify fine-grained uncertainties in addition to the prediction $\mathbf{y}$. Mathematically, for $K-$class classification problem,
\begin{align}
    &\texttt{Evidence}  (\mathbf{e}) = \mathcal{A}(\mathcal{F}_\Theta (\mathbf{x})) = \mathcal{A}(\mathbf{o})\\
    &\texttt{Dirichlet Parameter} (\boldsymbol{\alpha} ) = \mathbf{e} + \boldsymbol{1} \\
    &\texttt{Dirichlet Strength} (S) =  K + \sum_{k=1}^K \mathbf{e}_k 
\end{align}
The activation function $\mathcal{A}(\cdot)$ assumes three common forms to transform the neural network output into evidence: (1) $\texttt{ReLU}(\cdot) = \max( 0, \cdot )$, (2) $\texttt{SoftPlus}(\cdot) = \log ( 1 + \exp(\cdot) )$, and (3) $\exp(\cdot)$.

Evidential models assign input sample to that class for which the output evidence is greatest. Moreover, they quantify the confidence in the prediction for $K$ class classification problem through vacuity $\nu$ (\ie measure of lack of confidence in the prediction) computed as
\begin{align}
   \texttt{Vacuity} (\nu) = \frac{K}{S} 
\end{align}
For any training sample $(\mathbf{x}, \mathbf{y})$, the evidential models aim to maximize the evidence for the correct class, minimize the evidence for the incorrect classes, and output accurate confidence. To this end, three variants of evidential loss functions have been proposed~\cite{sensoy2018evidential}: 1) Bayes risk with sum of squares loss, 2) Bayes risk with cross-entropy loss, and 3) Type II Maximum Likelihood loss. Please refer to equations \eqref{eqn:evMSEloss}, \eqref{eqn:evDigammaloss}, and \eqref{eqn:evLogloss}  in the Appendix for the specific forms of these losses. Additionally, incorrect evidence regularization terms are introduced to guide the model to output low evidence for classes other than the ground truth class (See Appendix \ref{app:evIncReg} for discussion on the regularization). With evidential training, accurate evidential deep learning models are expected to output high evidence for the correct class, low evidence for all other classes, and output very high vacuity for unseen/out-of-distribution samples. 

\vspace{-2mm}\subsection{Theoretical Analysis of Learning Deficiency in Evidential Learning}\vspace{-2mm}

To identify the underlying reason that causes the performance gap of evidential models as described earlier, 
we consider a $K$ class classification problem and a representative evidential model trained using Bayes risk with sum of squares loss given in \eqref{eqn:evMSEloss}. We first provide an important definition that is critical for our theoretical analysis.  
\vspace{-1mm}
\begin{definition}[{\bf Zero-Evidence Region}] 
    A \textit{Zero-evidence sample} is a data sample for which the model outputs zero evidence for all classes. 
    A region in the evidence space that contains \textit{zero-evidence samples} is a \textit{zero-evidence region}.
\end{definition}
\vspace{-1mm}
For a reasonable evidential model, novel data samples not yet seen during training, difficult data samples, and out-of-distribution samples should become zero-evidence samples. 

\begin{theorem} 
\label{supotimalityTheorem}
{Given a training sample $(\mathbf{x}, \mathbf{y})$, if an evidential neural network outputs zero evidence $\mathbf{e}$, then the gradients of the evidential loss evaluated on this training sample over the network parameters reduce to zero. }
\vspace{-2mm}
\end{theorem}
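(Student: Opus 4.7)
The plan is a chain-rule computation that isolates the activation derivative as the factor responsible for the collapse. Writing $\mathbf{o}=\mathcal{F}_\Theta(\mathbf{x})$ and $e_k=\mathcal{A}(o_k)$, so that $\alpha_k=e_k+1$ and $S=K+\sum_k e_k$, for any parameter $\theta\in\Theta$ the chain rule gives
\begin{equation*}
    \frac{\partial \mathcal{L}_{\text{MSE}}}{\partial \theta}
    = \sum_{k=1}^K \frac{\partial \mathcal{L}_{\text{MSE}}}{\partial \alpha_k}\cdot\frac{\partial \alpha_k}{\partial e_k}\cdot\frac{\partial e_k}{\partial o_k}\cdot\frac{\partial o_k}{\partial \theta},
\end{equation*}
and since $\partial\alpha_k/\partial e_k=1$, the parameter gradient vanishes on the training sample as soon as $(\partial\mathcal{L}_{\text{MSE}}/\partial\alpha_k)\cdot(\partial e_k/\partial o_k)$ vanishes for every class $k$.

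The crux is showing that $\partial e_k/\partial o_k=0$ at any zero-evidence point, which I would do by inspecting each admissible activation. For $\texttt{ReLU}$, $e_k=0$ forces $o_k\le 0$, so the (sub)derivative $\mathbbm{1}[o_k>0]$ is zero under the standard back-propagation convention at the kink. For $\texttt{SoftPlus}$, the evidence $\log(1+\exp(o_k))$ reaches zero only as $o_k\to-\infty$, along which the derivative $\sigma(o_k)$ itself decays to zero. For $\exp$, the identity $\partial e_k/\partial o_k=\exp(o_k)=e_k$ makes the conclusion immediate. Thus the very non-negativity mechanism built into each activation is precisely what flattens its slope on the zero-evidence set.

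The final step is to check that $\partial\mathcal{L}_{\text{MSE}}/\partial\alpha_k$ stays bounded at $\mathbf{e}=\mathbf{0}$, so that the vanishing activation derivative genuinely kills the product rather than producing an indeterminate form. Plugging $\alpha_k=1$ and $S=K$ into the Bayes-risk sum-of-squares objective and differentiating term by term, using $\partial S/\partial\alpha_k=1$, gives a finite expression in $y_k$ and $K$ alone. Combining this bound with the zero activation derivative yields $\partial\mathcal{L}_{\text{MSE}}/\partial\theta=0$ for every $\theta$. The same template transfers to the Bayes-risk cross-entropy loss \eqref{eqn:evDigammaloss} and the Type~II maximum likelihood loss \eqref{eqn:evLogloss}, since in each case the $\theta$-dependence enters only through $\alpha_k$ and $S$, both of which are finite at the zero-evidence point, while the activation-derivative factor is unchanged.

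The main obstacle is the asymmetry between $\texttt{ReLU}$, for which the statement is a clean equality at finite $\mathbf{o}$, and $\texttt{SoftPlus}$/$\exp$, for which $\mathbf{e}=\mathbf{0}$ is only attained in the limit $\mathbf{o}\to-\boldsymbol{\infty}$. For the latter I would need a short continuity argument: $\sigma(o_k)$ and $\exp(o_k)$ decay exponentially in $o_k$ while $\partial\mathcal{L}/\partial\alpha_k$ depends only polynomially on $\alpha_k$, so the product tends to zero along any path on which $\mathbf{e}\to\mathbf{0}$. A secondary subtlety is the non-differentiability of $\texttt{ReLU}$ at the origin, which is dispatched by adopting the standard convention of taking the subgradient to be zero at the kink.
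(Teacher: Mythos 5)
Your proposal is correct and follows essentially the same route as the paper's proof: the same chain-rule factorization through $\partial e_k/\partial o_k$, the same three-case analysis of $\texttt{ReLU}$, $\texttt{SoftPlus}$, and $\exp$, and the same observation that the $\partial \mathcal{L}/\partial \alpha_k$ factor remains bounded at zero evidence so that the product (and hence every parameter gradient) vanishes. Your explicit decay-rate remark handling the fact that $\mathbf{e}=\mathbf{0}$ is only attained in the limit for $\texttt{SoftPlus}$ and $\exp$ is a mild sharpening of the paper's informal limit statements, but the argument is otherwise the same.
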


\begin{proof}
Consider an input $\mathbf{x}$ with one-hot ground truth label $\mathbf{y}$. Let the ground truth class index be $gt$, \ie $y_{gt} =1, $ with corresponding Dirichlet parameter $\alpha_{gt}$, and $y_{\neq gt} = 0$. Moreover, let $\mathbf{o}, \mathbf{e}, \text{and } \boldsymbol{\alpha}$ represent the neural network output vector before applying the activation $\mathcal{A}$, the evidence vector, and the Dirichlet parameters respectively. 

In this evidential model, the loss is given by
\begin{align}
    \mathcal{L}^{\texttt{MSE}}(\mathbf{x}, \mathbf{y}) &= \sum_{j=1}^K (y_j - \frac{\alpha_j}{S})^2 + \frac{\alpha_j (S - \alpha_j)}{S^2(S+1)} 
\end{align}
Now, the gradient of the loss with respect to the  neural network output can be computed using the chain rule:
\begin{align}\label{eqn:gradMseInt}
\begin{split}
    &\frac{\partial \mathcal{L}^{\texttt{MSE}}(\mathbf{x}, \mathbf{y})}{\partial o_k}  = \frac{\partial \mathcal{L}^{\texttt{MSE}}(\mathbf{x}, \mathbf{y})}{\partial \alpha_k}\frac{\partial e_k}{\partial o_k} \\
    & = \bigg[ \frac{2\alpha_{gt}}{S^2} - 2\frac{y_k}{S} - \frac{2( S - \alpha_k)}{S(S+1)} +\\
    &\quad\quad\quad+\frac{2(2S + 1)\sum_{i} \sum_{j}\alpha_i \alpha_j}{(S^2 + S)^2}
    \bigg] \times \frac{\partial e_k}{\partial o_k}
    \end{split}
\end{align}
Based on the actual form of $\mathcal{A}$, we have three cases:

\textbf{Case I:} $\texttt{ReLU}(\cdot)$ to transform logits to evidence
\begin{align}\label{eqn:reluGradoE}
e_k &= \text{ReLU}(o_k) 
\implies \frac{\partial e_k}{\partial o_k} = \begin{cases}
1 \quad  \text{if} \quad \quad o_k > 0 \\
0 \quad  \text{otherwise}
\end{cases}    
\end{align}
For a zero-evidence sample, the logits $o_k$ satisfy the relationship $o_k \leq 0 \; \forall \; k
\implies \frac{\partial e_k}{\partial o_k} = 0
\implies\frac{\partial \mathcal{L}^{\texttt{MSE}}(\mathbf{x}, \mathbf{y})}{\partial o_k}  = 0
$

\textbf{Case II:} $\texttt{SoftPlus}(\cdot)$ to transform logits to evidence
\begin{align}\label{eqn:softplusGradoE}
    e_k &=  \log ( \exp(o_k) + 1) \implies &\frac{\partial e_k}{\partial o_k} = \text{Sigmoid}(o_k)
\end{align}
For a zero-evidence sample, the logits $o_k \rightarrow -\infty \implies \text{Sigmoid}(o_k) \rightarrow 0 \; \& \; \frac{\partial e_k}{\partial o_k} \rightarrow 0$.

\textbf{Case III:} $\exp(\cdot)$ to transform logits to evidence
\begin{align}\label{eqn:expGradoE}
    e_k &= \exp(o_k)
    \implies &\frac{\partial e_k}{\partial o_k} = \exp(o_k) = \alpha_k - 1
\end{align}
 For a zero-evidence sample, $\alpha_k \rightarrow 1 \implies \frac{\partial e_k}{\partial o_k} \rightarrow 0$. Moreover, there is no term in the first part of the loss gradient in \eqref{eqn:gradMseInt} to counterbalance these zero-approaching gradients. So, for \textit{zero-evidence training samples}, for any node $k$,
\begin{align}
\frac{\partial \mathcal{L}^{\texttt{MSE}}(\mathbf{x}, \mathbf{y})}{\partial o_k}  = 0
\end{align}
Since the gradient of the loss with respect to all the nodes is zero, there is no update to the model from such samples. {This implies that the evidential models fail to learn from a zero-evidence data sample.} 
\end{proof}
For completeness, we present the analysis of standard classification models in Appendix \ref{sec:appAnalysisStandardClassificationModels}, detailed proof of the evidential models trained using Bayes risk with sum of squares error along with other evidential lossses in Appendix \ref{apSec:evidentialProof}, and impact of incorrect evidence regularization in Appendix \ref{app:evIncReg}. 

\vspace{-2mm}\paragraph{Remark:} Evidential models can not learn from a training sample that the model has never seen and for which the model accurately outputs ``I don't know", \ie  $e_k=0 \quad \forall k \in [1, K]$.  Such samples are expected and likely to be present during model training. However, the supervised information in such training data points is completely missed by evidential models so they fail to acquire any new knowledge from all such training data samples (\ie data samples in zero-evidence region of the evidence space). 
\begin{corollary}
    Incorrect evidence regularization can not help evidential models learn from zero-evidence samples.  
\end{corollary}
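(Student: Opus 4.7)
The plan is to reuse the chain-rule structure exhibited in the proof of Theorem~\ref{supotimalityTheorem} and show that any incorrect-evidence regularizer $\mathcal{L}^{\texttt{reg}}(\mathbf{x},\mathbf{y})$ used in the literature is, like $\mathcal{L}^{\texttt{MSE}}$, a smooth function of the Dirichlet parameters $\boldsymbol{\alpha}$ (equivalently of the evidences $\mathbf{e}$), and therefore its gradient with respect to the logits $o_k$ factors through the same offending term $\partial e_k / \partial o_k$. Since we already established that $\partial e_k / \partial o_k$ vanishes (or tends to zero) at every zero-evidence sample for all three activations $\mathcal{A}\in\{\texttt{ReLU},\texttt{SoftPlus},\exp\}$, the regularizer contributes a zero gradient there as well.

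Concretely, the standard choice (as in \cite{sensoy2018evidential} and followups) is a KL term
\begin{align}
\mathcal{L}^{\texttt{reg}}(\mathbf{x},\mathbf{y}) \;=\; \mathrm{KL}\bigl(\texttt{Dir}(\tilde{\boldsymbol{\alpha}}) \,\|\, \texttt{Dir}(\boldsymbol{1})\bigr),
\end{align}
where $\tilde{\boldsymbol{\alpha}} = \mathbf{y} + (\boldsymbol{1}-\mathbf{y})\odot \boldsymbol{\alpha}$ masks out the evidence supporting the ground-truth class. First I would expand this KL in closed form in terms of $\tilde{\alpha}_k$ and $\tilde{S}=\sum_k \tilde{\alpha}_k$, noting that every occurrence of the network output enters only through $\tilde{\alpha}_k = 1 + (1-y_k)\,e_k$, i.e.\ through $e_k = \mathcal{A}(o_k)$. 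Then by the chain rule, for each non-ground-truth class $k$,
\begin{align}
\frac{\partial \mathcal{L}^{\texttt{reg}}}{\partial o_k}
 \;=\; \underbrace{\frac{\partial \mathcal{L}^{\texttt{reg}}}{\partial \tilde{\alpha}_k}}_{\text{bounded at }\mathbf{e}=\mathbf{0}} \cdot (1-y_k) \cdot \frac{\partial e_k}{\partial o_k},
\end{align}
while for the ground-truth class the first factor is identically zero because $\tilde{\alpha}_{gt}\equiv 1$ independent of $o_{gt}$.

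The key step is to verify that the prefactor $\partial \mathcal{L}^{\texttt{reg}}/\partial \tilde{\alpha}_k$ (a combination of digamma terms $\psi(\tilde{\alpha}_k)$ and $\psi(\tilde{S})$) stays bounded as $\mathbf{e}\to\mathbf{0}$, which is immediate since $\tilde{\boldsymbol{\alpha}}\to\boldsymbol{1}$ and $\psi$ is smooth at $1$. With that in hand, applying the three cases from the proof of Theorem~\ref{supotimalityTheorem} gives $\partial e_k/\partial o_k \to 0$ at every zero-evidence sample, hence $\partial \mathcal{L}^{\texttt{reg}}/\partial o_k \to 0$ for all $k$. Adding this regularizer to any evidential loss considered in the paper therefore leaves the gradient identically zero on zero-evidence samples, so the model still cannot update from them.

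The main obstacle I anticipate is the $\exp$ case: there the chain-rule factor $\partial e_k / \partial o_k = \alpha_k - 1$ only \emph{approaches} zero, so one has to argue that no term inside $\partial \mathcal{L}^{\texttt{reg}}/\partial \tilde{\alpha}_k$ blows up fast enough to compensate. A brief asymptotic check of $\psi(\tilde{\alpha}_k)-\psi(\tilde{S})$ near $\tilde{\boldsymbol{\alpha}}=\boldsymbol{1}$ suffices; otherwise the argument is a routine chain-rule calculation parallel to the one already carried out for $\mathcal{L}^{\texttt{MSE}}$. A completely analogous argument, deferred to the appendix, handles the other two evidential losses \eqref{eqn:evDigammaloss} and \eqref{eqn:evLogloss}, so the corollary holds uniformly across the losses and activations considered.
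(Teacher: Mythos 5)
Your proposal is correct, but it argues the corollary by a different route than the paper. You formalize ``cannot help'' as a gradient statement: every incorrect-evidence regularizer in use is a smooth function of the evidences, so its gradient with respect to each logit factors as a bounded prefactor (digamma/trigamma terms evaluated near $\tilde{\boldsymbol{\alpha}}=\boldsymbol{1}$) times $\partial e_k/\partial o_k$, and the latter vanishes (or tends to zero) at zero-evidence samples for all three activations; hence adding the regularizer leaves the total gradient zero exactly where Theorem~\ref{supotimalityTheorem} already kills it. The paper's own justification is qualitative and directional instead: the regularizer never touches the ground-truth evidence and only suppresses incorrect evidence, so it tends to push samples \emph{toward} the zero-evidence region and therefore cannot remedy the deficiency; its appendix gradient analysis of the three regularizers uses the same chain-rule machinery as you do, but is aimed at the large-incorrect-evidence regime rather than at the zero-evidence point. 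Your version buys a sharper, self-contained proof of the corollary in the spirit of Theorem~\ref{supotimalityTheorem}; the paper's version buys the additional (and complementary) insight that the regularizer is not merely inert but actively drives samples closer to the problematic region. Two small points to tighten: your claim that the prefactor for the ground-truth logit is identically zero is specific to the masked-KL form (and to ADL); the Units-ML belief regularizer does depend on $e_{gt}$ through $S$, though your conclusion survives because $\partial e_{gt}/\partial o_{gt}$ also vanishes there. Also, for the \texttt{SoftPlus} and $\exp$ cases the zero-evidence condition is a limit ($o_k\to-\infty$), so ``identically zero'' should be stated as a limit, matching the paper's own phrasing.
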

Intuitively, the incorrect evidence regularization encourages the model to output zero evidence for all classes other than the ground truth class and the regularization does not have any impact on the evidence for the ground truth class. So, the regularization updates the model parameters such that the model is likely to map input samples closer to zero-evidence region in the evidence space. Thus, the regularization does not address the failure of evidential models to learn from zero evidence samples.
\begin{theorem}
\label{th:superirorityofExp}

{ For a data sample $\mathbf{x}$, if an evidential model outputs logits $\mathbf{o}_k \leq 0 $ $\forall k \in [0, K]$, the exponential activation function leads to a larger gradident update on the model parameters than \texttt{softplus} and \texttt{ReLu}.}
\end{theorem}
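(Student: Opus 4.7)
The plan is to compare the magnitude of the parameter-space gradient $\nabla_\theta \mathcal{L}^{\texttt{MSE}}$ across the three activations on the given input $\mathbf{x}$, and reduce this to a coordinate-wise comparison of $|\partial\mathcal{L}^{\texttt{MSE}}/\partial o_k|$. Since the factor $\partial o_k/\partial \theta$ in the chain rule is generated by the pre-activation network and does not depend on the choice of $\mathcal{A}$, the same Jacobian $\nabla_\theta \mathbf{o}$ is shared across activations, so a pointwise (in $k$) dominance of $|\partial\mathcal{L}^{\texttt{MSE}}/\partial o_k|$ transfers to a norm dominance of the parameter update via a straightforward Cauchy--Schwarz/triangle argument. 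Equation \eqref{eqn:gradMseInt} already factors each coordinate gradient as $B_k(\boldsymbol{\alpha}) \times \partial e_k/\partial o_k$, where the bracket $B_k$ depends only on the Dirichlet parameters.

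First I would dispose of ReLU: under the hypothesis $o_k\le 0$ for all $k$, Case I of Theorem \ref{supotimalityTheorem} gives $\partial e_k/\partial o_k = 0$ coordinatewise, so $\nabla_\theta \mathcal{L}^{\texttt{MSE}}_{\texttt{ReLU}} = \mathbf{0}$, and any strictly positive gradient from the $\exp$ branch suffices. Next I would compare $\exp$ and \texttt{SoftPlus} at the activation-derivative level, using the elementary identity
\begin{equation*}
\sigma(o_k) \;=\; \frac{\exp(o_k)}{1+\exp(o_k)} \;<\; \exp(o_k),
\end{equation*}
which is strict for every $o_k\in\mathbb{R}$ and, under $o_k\le 0$, confines the \texttt{SoftPlus} derivative to $(0,\tfrac12]$ and the $\exp$ derivative to $(0,1]$. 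This is the non-obstacle part of the argument: the activation derivative itself already favors $\exp$ on the given region.

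The main obstacle is that $B_k$ also depends on the activation, since $\boldsymbol{\alpha}=\mathbf{e}+\mathbf{1}$ differs between $\exp$ and \texttt{SoftPlus}; in principle a larger bracket for \texttt{SoftPlus} could offset its smaller activation derivative. I would address this in two complementary ways. First, in the representative limit $o_k\to -\infty$ for every $k$, both $\exp$ and \texttt{SoftPlus} evidences vanish, $\boldsymbol{\alpha}\to \mathbf{1}$, $S\to K$, and $B_k$ converges to the common activation-free value
\begin{equation*}
B_k^{\infty} \;=\; \tfrac{2}{K^2} - \tfrac{2y_k}{K} - \tfrac{2(K-1)}{K(K+1)} + \tfrac{2(2K+1)K^2}{K^2(K+1)^2},
\end{equation*}
at which point the $\exp$-vs-\texttt{SoftPlus} comparison is governed entirely by the derivative inequality above. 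Second, for the general regime $o_k\le 0$, the evidences lie in compact sets ($e_k^{\exp}\in(0,1]$, $e_k^{\texttt{SoftPlus}}\in(0,\log 2]$), so $S$ and each $\alpha_k$ lie in bounded intervals near $K$ and $1$ respectively; a uniform two-sided bound on $|B_k|$ over these intervals, combined with the strict multiplicative gap $\exp(o_k)/\sigma(o_k)=1+\exp(o_k)>1$, yields the desired coordinatewise inequality $|\partial \mathcal{L}^{\texttt{MSE}}/\partial o_k|_{\exp} \ge |\partial \mathcal{L}^{\texttt{MSE}}/\partial o_k|_{\texttt{SoftPlus}}$, with strictness at any coordinate where $B_k\neq 0$. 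Summing and pulling out the shared Jacobian $\nabla_\theta \mathbf{o}$ then gives the claimed dominance of the $\exp$ parameter update over both \texttt{SoftPlus} and \texttt{ReLU}.
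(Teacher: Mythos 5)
Your overall route is the same one the paper takes in Appendix~\ref{sec:impactLogitsEvTranf}: kill the \texttt{ReLU} case outright via $\partial e_k/\partial o_k=0$ for $o_k\le 0$, and then compare $\exp$ with \texttt{SoftPlus} through the coordinatewise identity $\exp(o_k)=[1+\exp(o_k)]\,\mathrm{Sigmoid}(o_k)\ge \mathrm{Sigmoid}(o_k)$, pulling the shared factors $\partial\mathcal{L}/\partial e_k$ and $\partial o_k/\partial w$ out of the chain rule. Where you go beyond the paper is in explicitly flagging that the bracket $B_k=\partial\mathcal{L}/\partial \alpha_k$ is \emph{not} actually shared, because $\boldsymbol{\alpha}=\mathbf{e}+\mathbf{1}$ differs between the two activations; the paper's proof simply treats this factor as common and concludes $\partial\mathcal{L}_3/\partial w\ge\partial\mathcal{L}_2/\partial w\ge\partial\mathcal{L}_1/\partial w$ from the activation-derivative comparison alone. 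Identifying that obstacle is a genuine improvement in precision over the published argument.

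The problem is that your proposed resolution of that obstacle does not close it. The limiting argument only covers $o_k\to-\infty$, where both gradients vanish (this is exactly Theorem~\ref{supotimalityTheorem}) and the multiplicative advantage $1+\exp(o_k)$ tends to $1$, so dominance there is only the trivial $0\ge 0$. In the general regime $o_k\le 0$ the advantage factor is bounded by $2$, and a ``uniform two-sided bound on $|B_k|$'' over the compact ranges $e_k^{\exp}\in(0,1]$, $e_k^{\texttt{SoftPlus}}\in(0,\log 2]$ controls neither the ratio $|B_k^{\exp}|/|B_k^{\texttt{SoftPlus}}|$ from below nor excludes the possibility that $B_k$ vanishes at the $\exp$-induced $\boldsymbol{\alpha}$ while being nonzero at the \texttt{SoftPlus}-induced one; the two brackets are evaluated at different points of \eqref{eqn:gradMseInt}, and nothing in a generic compactness bound forces $|B_k^{\exp}|\exp(o_k)\ge|B_k^{\texttt{SoftPlus}}|\,\mathrm{Sigmoid}(o_k)$. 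So the claimed coordinatewise inequality is asserted, not proved. A secondary looseness: even granting coordinatewise dominance of $|\partial\mathcal{L}/\partial o_k|$, passing to a larger parameter update is not a ``straightforward Cauchy--Schwarz/triangle argument,'' since $\nabla_\theta\mathcal{L}=\sum_k(\partial\mathcal{L}/\partial o_k)\nabla_\theta o_k$ and larger coefficients can produce more cancellation among the vectors $\nabla_\theta o_k$ (to be fair, the paper's proof compares signed scalar expressions and is equally loose on this point, effectively comparing the updates with the loss-sensitivity factor held fixed). As written, then, your proposal contains a real gap at precisely the step you correctly identified as the crux; either restrict the claim to the factor $\partial e_k/\partial o_k$ as the paper implicitly does, or supply an actual quantitative comparison of $B_k$ at the two different $\boldsymbol{\alpha}$ values.
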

Limited by space, we present the proof of Theorem \ref{th:superirorityofExp} {along with additional analysis } in the Appendix \ref{sec:impactLogitsEvTranf}. {The proof follows the gradient analysis of the exponential, $\texttt{Softplus}$, and $\texttt{ReLU}$ based models. It implies that the the training of evidential models is most effective with the exponential activation function.} Intuitively, the $\texttt{ReLU}$ based activation completely destroys all the information in the negative logits, and has largest region in evidence space in which training data have zero evidence. $\texttt{Softplus}$ activation improves over the $\texttt{ReLU}$, and  compared to $\texttt{ReLU}$, has smaller region in evidence space where training data have zero evidence. However, \texttt{Softplus} based evidential models fail to correct the acquired knowledge when the model has strong wrong evidence. Moreover, these models are likely to suffer from vanishing gradients problem when the number of classes increases (\ie classification problem becomes more challenging). Finally, exponential activation has the smallest zero-evidence region in the evidence space without suffering from the issues of $\texttt{SoftPlus}$ based evidential models. 

\section{Avoiding Zero-Evidence Regions Through Correct Evidence Regularization}
We now consider an evidential model with exponential function to transform the logits into evidence. We propose a novel vacuity-guided correct evidence regularization term
\begin{align}\label{eqn:propCorEvReg}
    \mathcal{L}_{\texttt{cor}}({\bm x},{\bm y}) = - \lambda_{\texttt{cor}} \log (\alpha_{gt} - 1)
\end{align} 
where $\lambda_{\texttt{cor}} = \nu=\frac{K}{S}$ represents the regularization term whose value is given by the magnitude of the vacuity output by the evidential model and $\alpha_{gt}-1$ represents the predicted evidence for the ground truth class. The regularization term $\lambda_{\texttt{cor}}$ determines the relative importance of the correct evidence regularization term compared to the evidential loss and incorrect evidence regularization and is treated as constant during model parameter update.
\begin{theorem}
\label{th:sovlingzeroevidenceIssue}
Correct evidence regularization  $\mathcal{L}_{\texttt{cor}}({\bm x},{\bm y})$ can address the issue of learning from zero-evidence training samples. 
\end{theorem}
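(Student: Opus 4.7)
The plan is to show that adding $\mathcal{L}_{\texttt{cor}}$ to the training objective produces a non-vanishing gradient on the logits at any zero-evidence sample, which is precisely what the earlier theorems showed was missing. Since Theorem~\ref{th:superirorityofExp} already singles out the exponential activation as superior, I would work with $e_k = \exp(o_k)$ throughout, so that $\alpha_{gt} - 1 = e_{gt} = \exp(o_{gt})$. The first step is then the key algebraic simplification: $\log(\alpha_{gt}-1) = \log(\exp(o_{gt})) = o_{gt}$, which turns $\mathcal{L}_{\texttt{cor}}$ into the simple linear function $-\lambda_{\texttt{cor}}\, o_{gt}$ of the ground-truth logit. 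This sidesteps any worry about $\log(0)$ being ill-defined when evidence collapses to zero.

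With this rewrite in hand, the gradient computation is immediate. Treating $\lambda_{\texttt{cor}} = K/S$ as a constant during the backward pass (as stated in the paper), I would write
\begin{equation}
\frac{\partial \mathcal{L}_{\texttt{cor}}}{\partial o_k} \;=\; -\lambda_{\texttt{cor}}\,\mathbbm{1}[k = gt],
\end{equation}
and contrast this with the per-logit gradient derived in Theorem~\ref{supotimalityTheorem}, Case~III. There, the MSE part picks up a factor $\partial e_k/\partial o_k = \alpha_k - 1$, which goes to zero on zero-evidence samples; here the $\log$ cancels the $\exp$, so the multiplicative factor $\partial e_k/\partial o_k$ never appears and the gradient magnitude stays bounded away from zero.

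Next I would verify that this non-zero gradient is actually in the right direction and has useful magnitude at a zero-evidence sample. At such a point, $e_k = 0$ for every $k$, so $S = K$ and consequently $\lambda_{\texttt{cor}} = \nu = K/S = 1$ attains its maximum value. Thus, precisely in the regime where the evidential loss becomes uninformative, the correct-evidence regularizer exerts its strongest pull, driving $o_{gt}$ upward (since the gradient is $-1$) and moving the sample out of the zero-evidence region; once $o_{gt}$ becomes sufficiently large that $e_{gt}>0$, the main evidential loss ``wakes up'' and can take over. Combining these observations, $\partial(\mathcal{L}^{\texttt{MSE}} + \mathcal{L}_{\texttt{cor}})/\partial o_{gt} \neq 0$ at every zero-evidence sample, which is what the theorem claims.

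The main obstacle I anticipate is not the gradient calculation itself but the subtlety of treating $\lambda_{\texttt{cor}}$ as a constant: $\lambda_{\texttt{cor}}$ depends on $S$, which depends on all the $e_k$ and hence on the network parameters, so a careless application of the chain rule would reintroduce factors of $\partial e_k/\partial o_k$ that vanish on zero-evidence samples and undo the argument. I would therefore be explicit that $\lambda_{\texttt{cor}}$ is a detached scalar coefficient in the optimization (analogous to a weighting hyperparameter recomputed each step), and only then invoke $\log \circ \exp = \mathrm{id}$ to obtain the clean linear-in-$o_{gt}$ form. A short remark at the end would note that for $\texttt{ReLU}$ and $\texttt{SoftPlus}$ activations the same construction would still suffer the vanishing $\partial e_{gt}/\partial o_{gt}$ factor, reinforcing why the exponential activation is the natural pairing with this regularizer.
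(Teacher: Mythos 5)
Your proposal is correct and follows essentially the same route as the paper's proof: it computes the gradient of $\mathcal{L}_{\texttt{cor}}$ at the ground-truth logit under the $\exp$ activation, observes that the $(\alpha_{gt}-1)$ factor from $\partial e_{gt}/\partial o_{gt}$ cancels (your $\log\circ\exp=\mathrm{id}$ rewrite is just a cleaner way of expressing that cancellation), treats $\lambda_{\texttt{cor}}=\nu$ as a detached constant, and concludes the gradient equals $-\lambda_{\texttt{cor}}$, which attains its maximal magnitude $1$ exactly on zero-evidence samples. Your explicit handling of the would-be $\log(0)$ issue and the closing remark about $\texttt{ReLU}$/$\texttt{SoftPlus}$ are minor refinements of, not departures from, the paper's argument.
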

\begin{proof}
    The proposed regularization term $\mathcal{L}_{\texttt{cor}}({\bm x},{\bm y})$ does not contain any evidence terms other than the evidence for the ground truth node. So, the gradient of the regularization for nodes other than the ground truth node will be 0 i.e. $\frac{\partial \mathcal{L}_{\texttt{cor}}({\bm x},{\bm y})}{\partial o_k}\Big|_{k \neq gt} = 0$ and there will be no update on these nodes. 
    For the ground truth node $gt, y_{gt} = 1$, the gradient is given by
    \begin{align} 
        \frac{ \partial \mathcal{L}_{\texttt{cor}}({\bm x},{\bm y}) }{\partial o_{{gt}}} &=  \frac{ \partial \big(- \lambda_{\texttt{cor}} \log (\alpha_{gt} - 1) \big) }{\partial o_{{gt}}} \\
        &= -\lambda_{\texttt{cor}} \frac{ \partial \log (\alpha_{{gt}} - 1) }{\partial \alpha_{{gt}}} \times \frac{ \partial\alpha_{{gt}} }{\partial o_{{gt}}}\\
        &= - \frac{\lambda_{\texttt{cor}} }{(\alpha_{{gt}} - 1)}  (\alpha_{{gt}} - 1) = - \lambda_{\texttt{cor}}
    \end{align}

The gradient value equals the magnitude of the vacuity. The vacuity is bounded in the range $[0,1]$, and \textit{zero-evidence sample}, the vacuity is maximum, leading to the greatest gradient value of $\frac{ \partial \mathcal{L}_{\texttt{cor}}({\bm x},{\bm y}) }{\partial o_{{gt}}} = -1$. In other words, the regularization encourages the model to update the parameters such that the correct evidence $\alpha_{gt} - 1$ increases. As the model evidence increases, the vacuity decreases, and the contribution of the regularization $\mathcal{L}_{\texttt{cor}}({\bm x},{\bm y})$ is minimized. Thus, the proposed regularization enables the evidential model to learn from \textit{zero-evidence samples}.
\end{proof}

\vspace{-2mm}\subsection{Evidential Model Training}\label{sec:evModelTrainingLoss}\vspace{-2mm}
We formulate an overall objective used to train the proposed \textbf{R}egularized \textbf{e}vidential mo\textbf{d}el (\textbf{RED}). Essentially, the evidential model is trained to maximize the correct evidence, minimize the incorrect evidence, and avoid the \textit{zero-evidence} region during training. The overall loss is 
\begin{align}\label{eqn:proposedEvidentialModelOverallLoss}
    \mathcal{L}(\bm{x},\bm{y}) = \mathcal{L}^{\texttt{evid}}(\bm{x},\bm{y}) + \eta_1 \mathcal{L}^{\texttt{inc}}(\bm{x},\bm{y}) + \mathcal{L}^{\texttt{cor}}(\bm{x},\bm{y})
\end{align}
where $\mathcal{L}^{\texttt{evid}}(\bm{x},\bm{y})$ is the loss based on the evidential framework given by  \eqref{eqn:evMSEloss}, \eqref{eqn:evLogloss}, or \eqref{eqn:evDigammaloss} (See Appendix~\ref{apSec:evidentialProof}), $\mathcal{L}^{\texttt{inc}}(\bm{x},\bm{y})$ represents the incorrect evidence regularization (See Appendix Section \ref{app:evIncReg}), $ \mathcal{L}^{\texttt{cor}}(\bm{x},\bm{y})$ represents the proposed novel correct evidence regularization term in \eqref{eqn:propCorEvReg}, and $\eta_1 = \lambda_1 \times \min(1.0, \text{epoch index}/10) $ controls the impact of incorrect evidence regularization to the overall model training. In this work, we consider the forward-KL based incorrect evidence regularization  given in \eqref{appeq:klsensoy} based on \cite{sensoy2018evidential}.

\vspace{-2mm}\subsection{Evidence Space Visualization}\label{sec:intuitiveDescription}\vspace{-2mm}
\begin{figure}[ht!] 
\centering
  \includegraphics[width=0.90\linewidth]{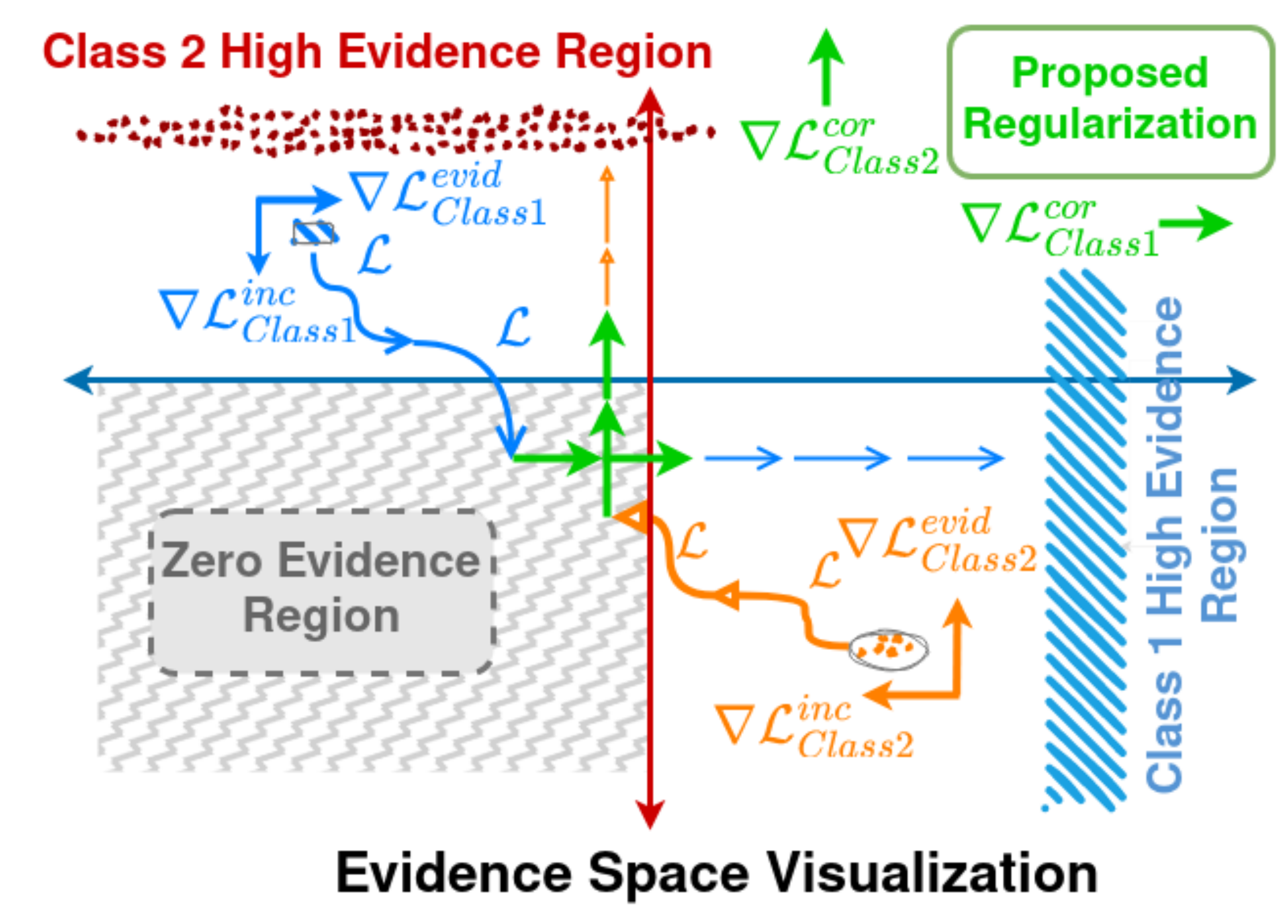}
\caption{\label{fig:fixZeroEvidenceIssue}Evidence space visualization to demonstrate the effectiveness of the proposed method. }
\vspace{-4mm}
\end{figure}

Figure \ref{fig:intuitiveFailureOfRelu} visualizes the evidence space in $\texttt{ReLU}$-based evidential models by considering the pre-ReLU output in a binary classification setting. Ideally, all samples that belong to Class 1 should be mapped to the blue region (region of high evidence for Class 1, low evidence for all other classes), all samples that belong to Class 2 should be mapped to the red region, and all out-of distribution samples should be mapped to the zero-evidence region (no evidence for all classes). To realize this goal, the models are trained using the evidential loss $\mathcal{L}^{evid}$ with incorrect evidence regularization $\mathcal{L}^{inc}$. However,  there is no update to the evidential model from such samples of \textit{zero-evidence region}. Model's prior belief of ``I don't know" for such samples does not get updated even after being exposed to the true label. For the samples with high incorrect evidence and low correct evidence, evidential model aims to correct itself. However, many such samples are likely to get mapped to the zero-evidence region (as shown by blue and orange arrows in Figure \ref{fig:intuitiveFailureOfRelu}) after which there is no update to the model. Such fundamental limitation holds true for all evidential models.

The evidence space visualization for RED is shown in Figure \ref{fig:fixZeroEvidenceIssue} to illustrate how it addresses the above limitation. Correct evidence regularization (indicated by green arrows) is weighted by the magnitude of the vacuity and is maximum in the zero-evidence region. In this problematic region, the proposed regularization fully dominates the model update as there is no update to the model from the two loss components ($\mathcal{L}^{evid}$ and $\mathcal{L}^{inc}$) in \eqref{eqn:proposedEvidentialModelOverallLoss}. As the sample gets far away from the zero evidence region, the vacuity decreases proportionally, the impact of the proposed regularization to model update becomes insignificant, and the evidential losses ($\mathcal{L}^{evid}$ \& $\mathcal{L}^{inc}$) guide the model training. In this way, RED can effectively learn from all training samples irrespective of the model's existing evidence.  

\vspace{-2mm}\section{Experiments}\vspace{-2mm}
\paragraph{Datasets and setup.} We consider the standard supervised classification problem with MNIST \cite{lecun1998mnist}, Cifar10, and Cifar100 datasets \cite{krizhevsky2009learning}, and few-shot classification with \textit{mini}-ImageNet dataset \cite{vinyals2016matching}. We employ the LeNet model for MNIST, ResNet18 model \cite{he2016deep} for Cifar10/Cifar100, and ResNet12 model \cite{he2016deep} for \textit{mini}-ImageNet. We first conduct experiments to demonstrate the learning deficiency of existing evidential models to confirm our theoretical findings. We then evaluate the proposed correct evidence regularization to show its effectiveness. We finally conduct ablation studies to investigate the impact of evidential losses on model generalization and the uncertainty quantification of the proposed evidential model. {Limited by space, additional clarifications, experiment results including few-shot classification experiments, experiments over challenging tiny-Imagenet datasett with Swin Transformer, }  hyperparameter details, and discussions are presented in the Appendix. 

\vspace{-2mm}\subsection{Learning Deficiency of Evidential Models}

\paragraph{Sensitivity to the change of the architecture.} 
We first consider a toy illustrative experiment with  two frameworks: 1) standard softmax, 2) evidential learning, and experiment with the LeNet \cite{lecun1999object} model considered in EDL~\cite{sensoy2018evidential} with a minor modification to the architecture: no dropout in the model. To construct the toy dataset, we randomly select 4 labeled data points from the MNIST training dataset as shown in the Figure \ref{fig:toyDatasetMnist}. For the evidential model, we use ReLU to transform the network outputs to evidence, and train the model with MSE-based evidential loss~\cite{sensoy2018evidential} given in \eqref{eqn:evMSEloss} without incorrect evidence regularization. We train both models using only these 4 training data points. 

Figure \ref{fig:compToyMnistEVAccLossTrend} compares the training accuracy and training loss trends of the evidential model with the standard softmax model (trained with the cross-entropy loss). Before any training, both models have $0\%$ accuracy and the loss is high as expected. For the evidential model, in the first few iterations, the model learns from the training dataset, and the model's accuracy increases to $50\%$. Afterward, the  evidential model fails to learn as the evidential model maps two of the training data samples to the \textit{zero-evidence region}. Even in such a trivial setting, the evidential model fails to fit the 4 training data points showing their learning deficiency that empirically verifies the conclusion in Theorem \ref{supotimalityTheorem}. It is also worth noting that the range of the evidential model's loss is significantly smaller than the standard model. This is mainly due to the bounded nature of the evidential MSE loss(\ie it is bounded in the range $[0,2]$) (a detailed theoretical analysis of the evidential losses is provided in the Appendix). In contrast, the standard model trained with cross-entropy loss easily fits the trivial dataset, obtains near $0$ loss, and perfect accuracy of $100\%$ after a few iterations of training. 

\begin{figure}[!t]
    \centering
    \includegraphics[width=.96\linewidth]{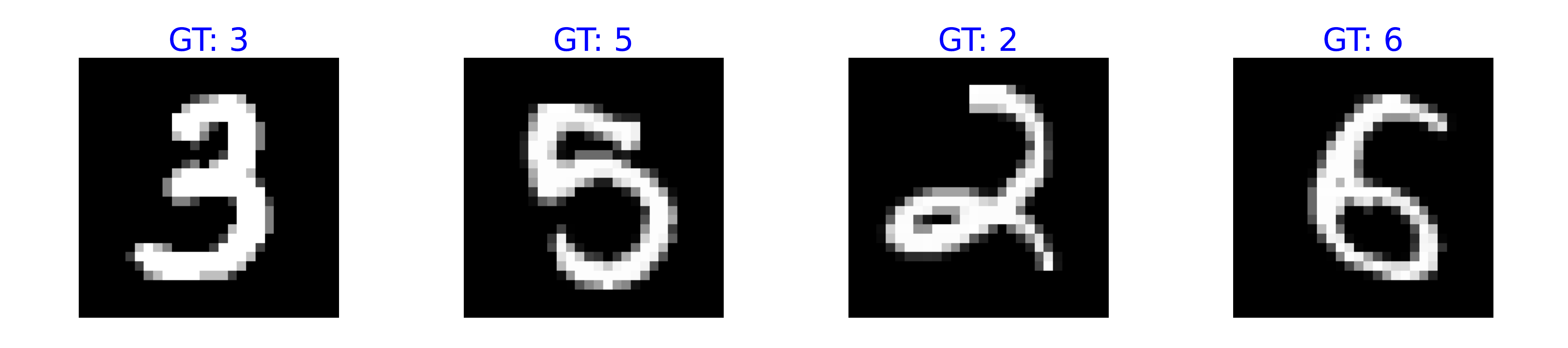}  
    \vspace{-2mm}
    \caption{Toy dataset with 4 data points. } 
    \label{fig:toyDatasetMnist}
    \vspace{-5mm}
\end{figure}

\begin{figure}
\centering
\subfigure[Training accuracy trend]{
  \includegraphics[width=0.46\linewidth]{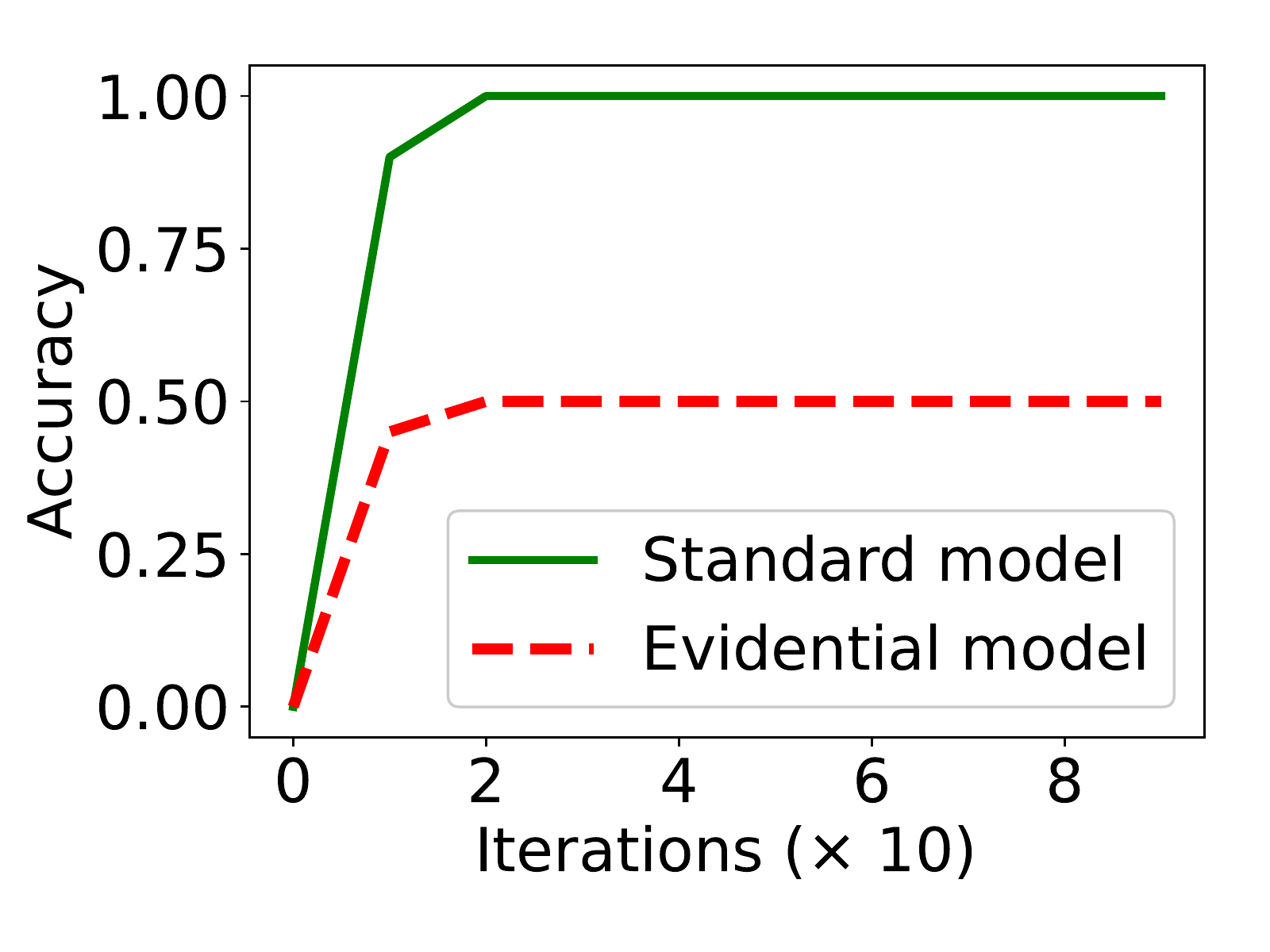} 
}
\subfigure[Training loss trend]{
  \includegraphics[width=0.46\linewidth]{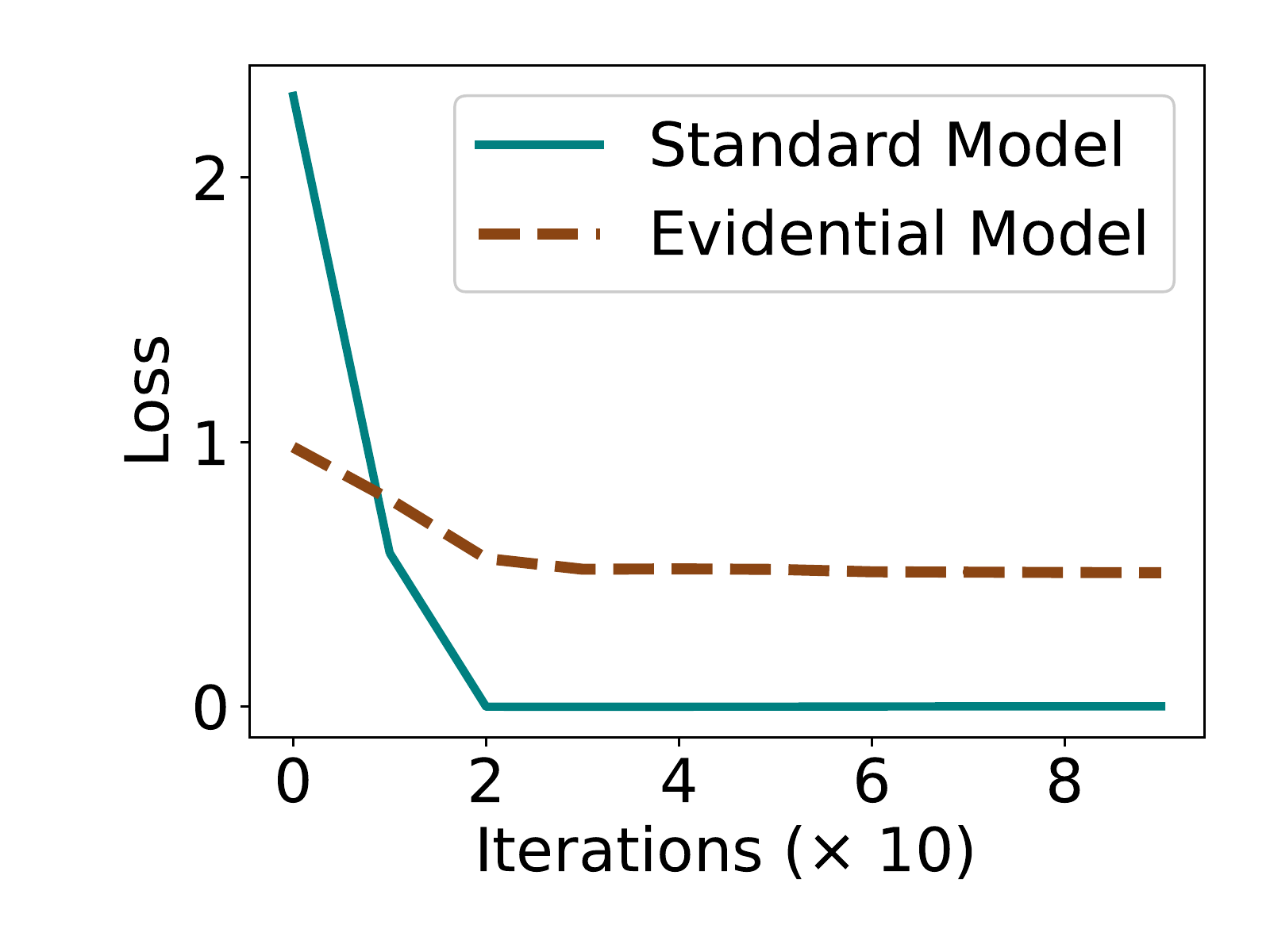} 
}
\vspace{-2mm}
\caption{Training of standard and evidential models } 
\label{fig:compToyMnistEVAccLossTrend}
\vspace{-5mm}
\end{figure} 

\begin{figure}[htpb]
    \centering
    \includegraphics[width=.8\linewidth]{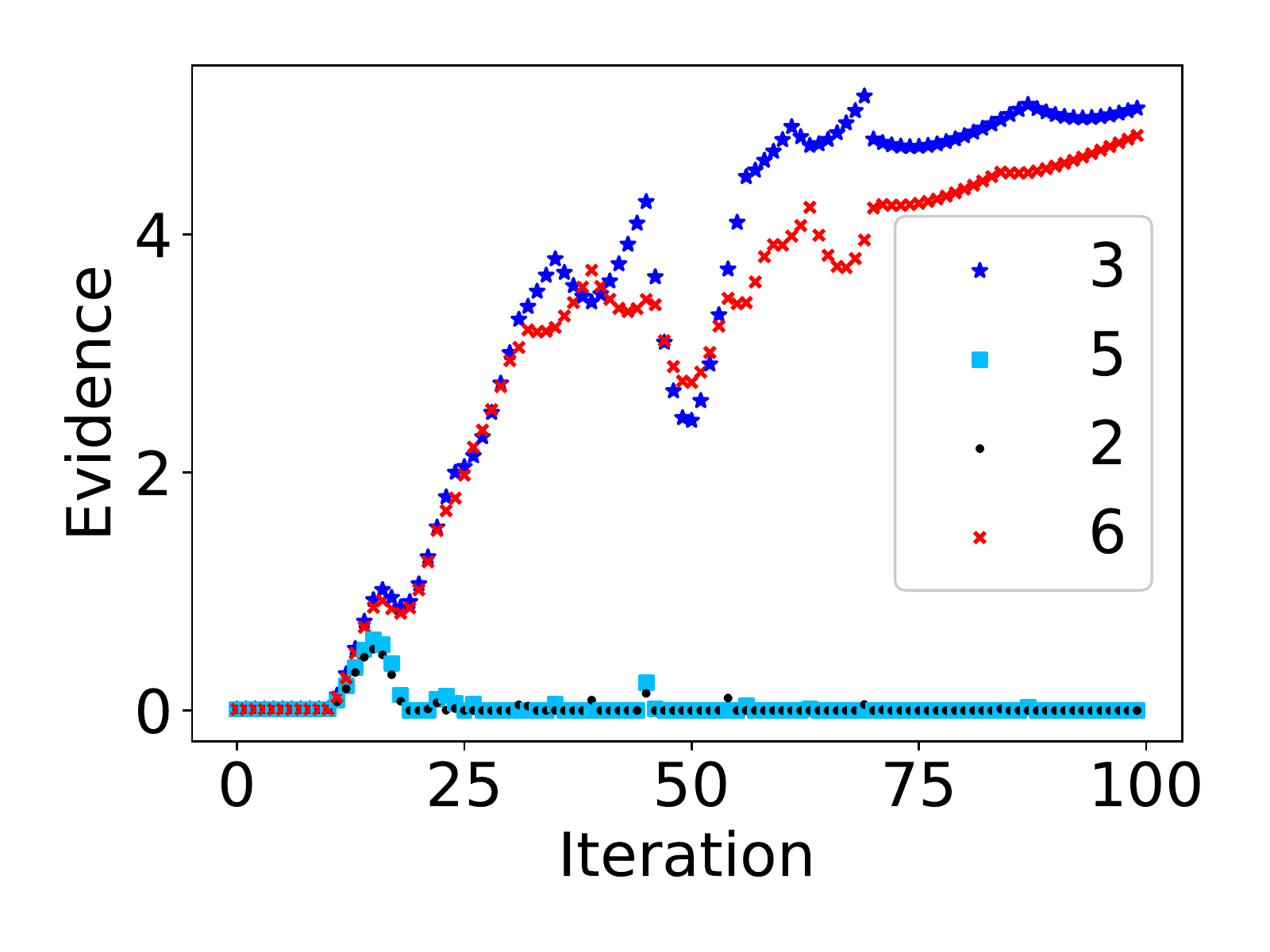}
    \vspace{-6mm}
    \caption{Zero-evidence trend during model training} 
    \label{fig:zeroEvidenceTrendVisualization}
    \vspace{-2mm}
\end{figure}

{Additionally, we visualize the zero-evidence data samples for the toy dataset setting. 
We plot the total evidence for each training sample as training progresses for the first 100 iterations. 
The total evidence trend as training progresses for the first 100 iterations is shown in Figure \ref{fig:zeroEvidenceTrendVisualization}. The evidential model’s predictions are correct for data samples with ground truth labels of 3 and 6, and incorrect for the remaining two data samples. After few iterations of training, the remaining two samples
have zero total evidence (i.e. samples are mapped to zero evidence region), the model never learns from them, and the model only achieves overall 50\% training accuracy even after 100 iterations. Clearly, the evidential model continues to output zero evidence for two of the training examples and fails to learn from them. Such learning deficiency of evidential models limits their extension to challenging settings. In contrast, the standard model easily overfits the 4 training examples and achieves 100\% accuracy.}


\begin{figure}[t!] 
\centering
  \includegraphics[width=0.88\linewidth]{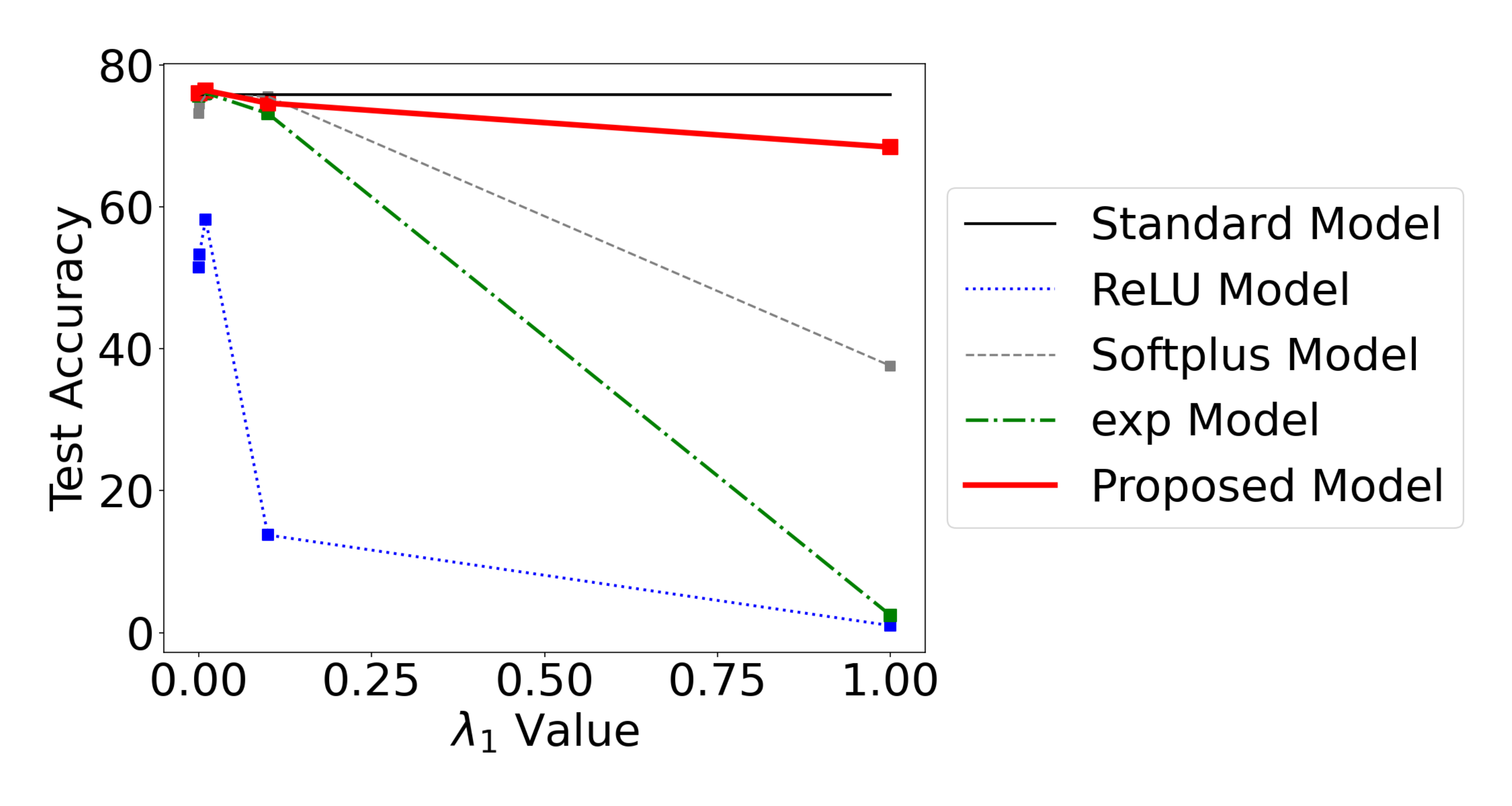}
  \vspace{-4mm}
\caption{\label{fig:IncorrectEvRegImpact}Impact of different incorrect evidence regularization strengths to the test set accuracy on Cifar100 dataset}
\vspace{-5mm}
\end{figure} 

\vspace{-2mm}\paragraph{Sensitivity to hyperparameter tuning.}
In this experiment, evidential models are trained using evidential losses given in \eqref{eqn:evMSEloss}, \eqref{eqn:evDigammaloss}, or \eqref{eqn:evLogloss}  with incorrect evidence regularization to guide the model for accurate uncertainty quantification. We study the impact of the incorrect evidence regularization $\lambda_1$ to the evidential model's performance using Cifar100. The result shows that the generalization performance of evidential models is highly sensitive to $\lambda_1$ values. To illustrate, we consider the Type II Maximum Likelihood loss in \eqref{eqn:evLogloss} with different $\lambda_1$ to control KL regularization (results on other loss functions are presented in the Appendix).  As shown in Figure \ref{fig:IncorrectEvRegImpact}, when some regularization is introduced, evidential model's test performance improves slightly. However, when strong regularization is used, the model focuses strongly on minimizing the incorrect evidence. Such regularization causes the model to push many training samples into or close to the zero-evidence regions, which hurts the model's learning capabilities. In contrast, the proposed model can continue to learn from samples in zero-evidence regions,  which shows its robustness to incorrect evidence regularization. Moreover, our model has stable performance across all hyperparameter settings as it can effectively learn from all training samples. 

\vspace{-2mm}\paragraph{Challenging datasets and settings.}
We next consider standard classification models for the Cifar100 dataset and 1-shot classification with the \textit{mini}-ImageNet dataset. We develop evidential extensions of the classification models using Type II Maximum Likelihood loss given in \eqref{eqn:evLogloss} without any incorrect evidence regularization and use \texttt{ReLU} to transform logits to evidence. As shown in Figure \ref{fig:ChallengingFailureEvidential}, compared to the standard classification model, the evidential model's predictive performance is sub-optimal (almost $20\%$ lower for both classification problems). This is mainly due to the fact that evidential model maps many of the training data points to $\textit{zero-evidence region}$, which is equivalent to the model saying ``I don't know to which class this sample belongs" and stopping to learn from them. Consequently,  the model fails to acquire new knowledge (\ie update itself), even after being exposed to correct supervision (the label information). In these cases, instead of learning, the evidential model chooses to ignore the training data on which it does not have any evidence and remains to be ignorant. 

\vspace{-2mm}\paragraph{Visualization of zero-evidence samples.} {We next show the 2-dimensional visualization of the latent representation for the randomly selected 500 training examples based on the tSNE plot for ReLU based evidential model trained on the Cifar100 dataset with $\lambda_1 = 0.1$. Figure \ref{fig:tsneChallenge} plot visualizes the
latent embedding of zero evidence (Zero E) training samples with non-zero evidence (Non-Zero E) training samples. As can be seen, both zero and non-zero evidence samples appear to be dispersed, overlap at different regions, and cover a large area in the embedding space. This further confirms the challenge of effectively learning from these samples}
\begin{figure}[!t]
    \centering
    \includegraphics[width=.76\linewidth]{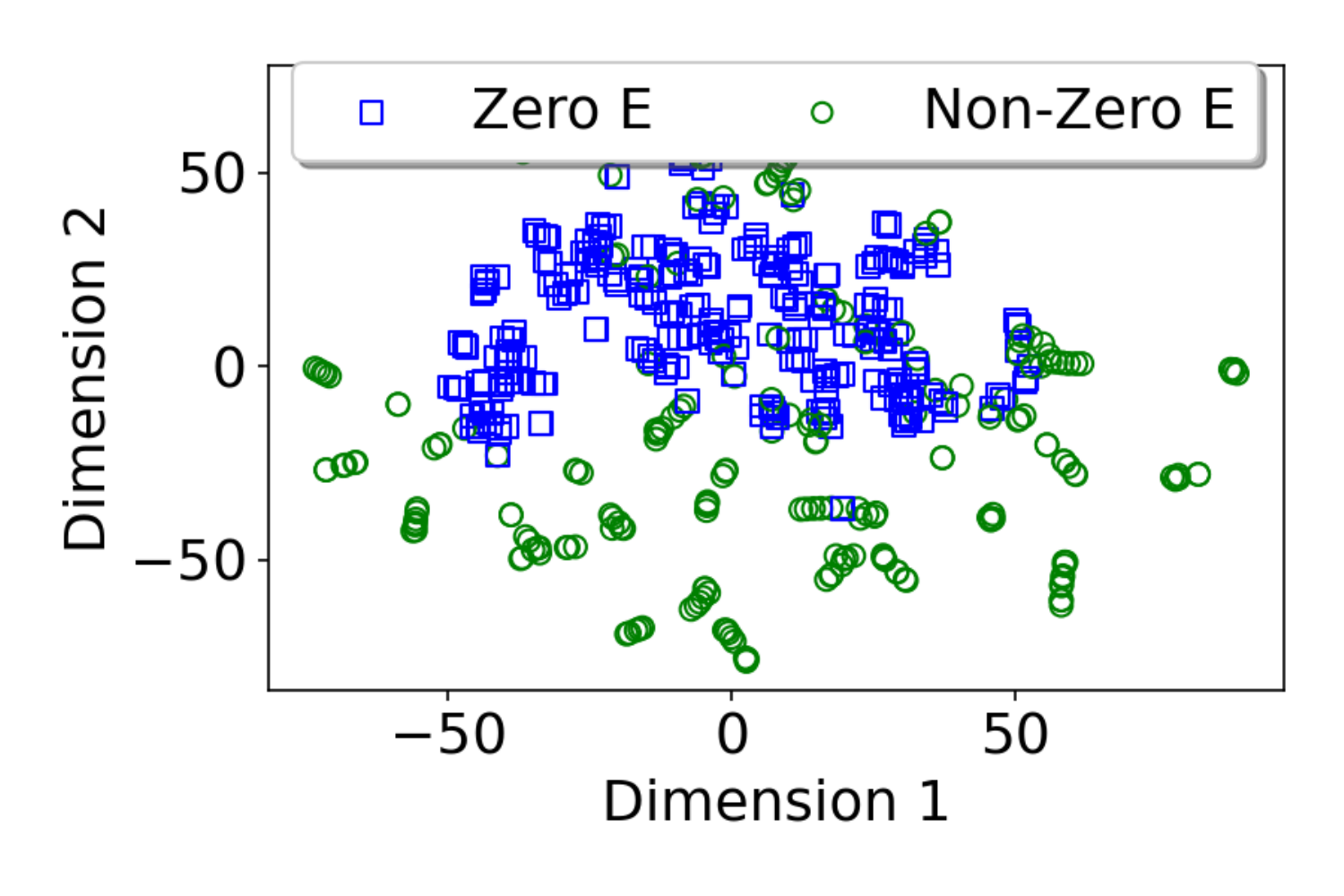}
    \vspace{-6mm}
    \caption{Zero-Evidence Sample Visualization}
    \label{fig:tsneChallenge}
    \vspace{-2mm}
\end{figure}

\begin{figure}[t!] 
\centering
\subfigure[Cifar100 Results]{
  \includegraphics[width=0.46\linewidth]{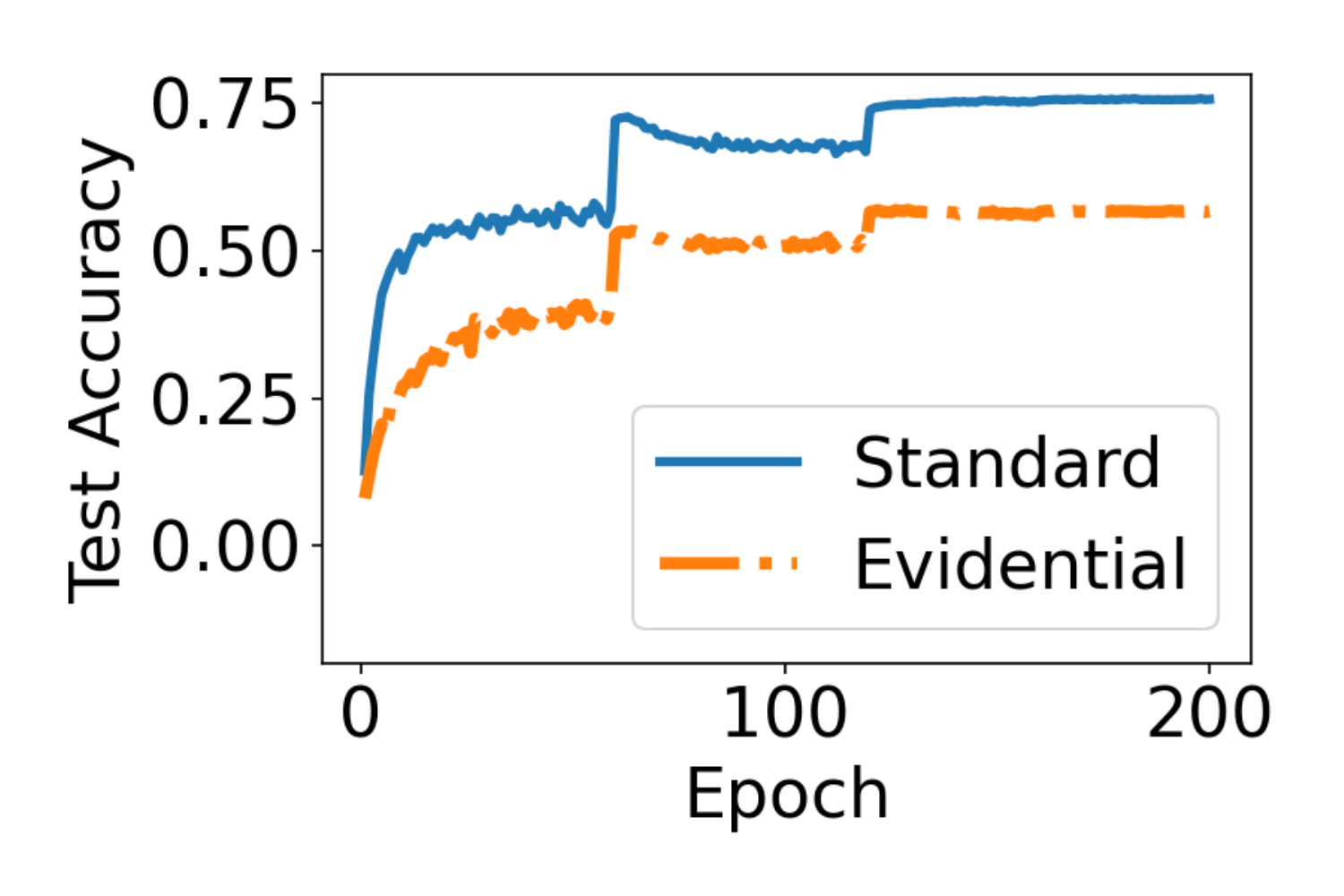}
}
\subfigure[$1$-Shot Results]{
  \includegraphics[width=0.46\linewidth]{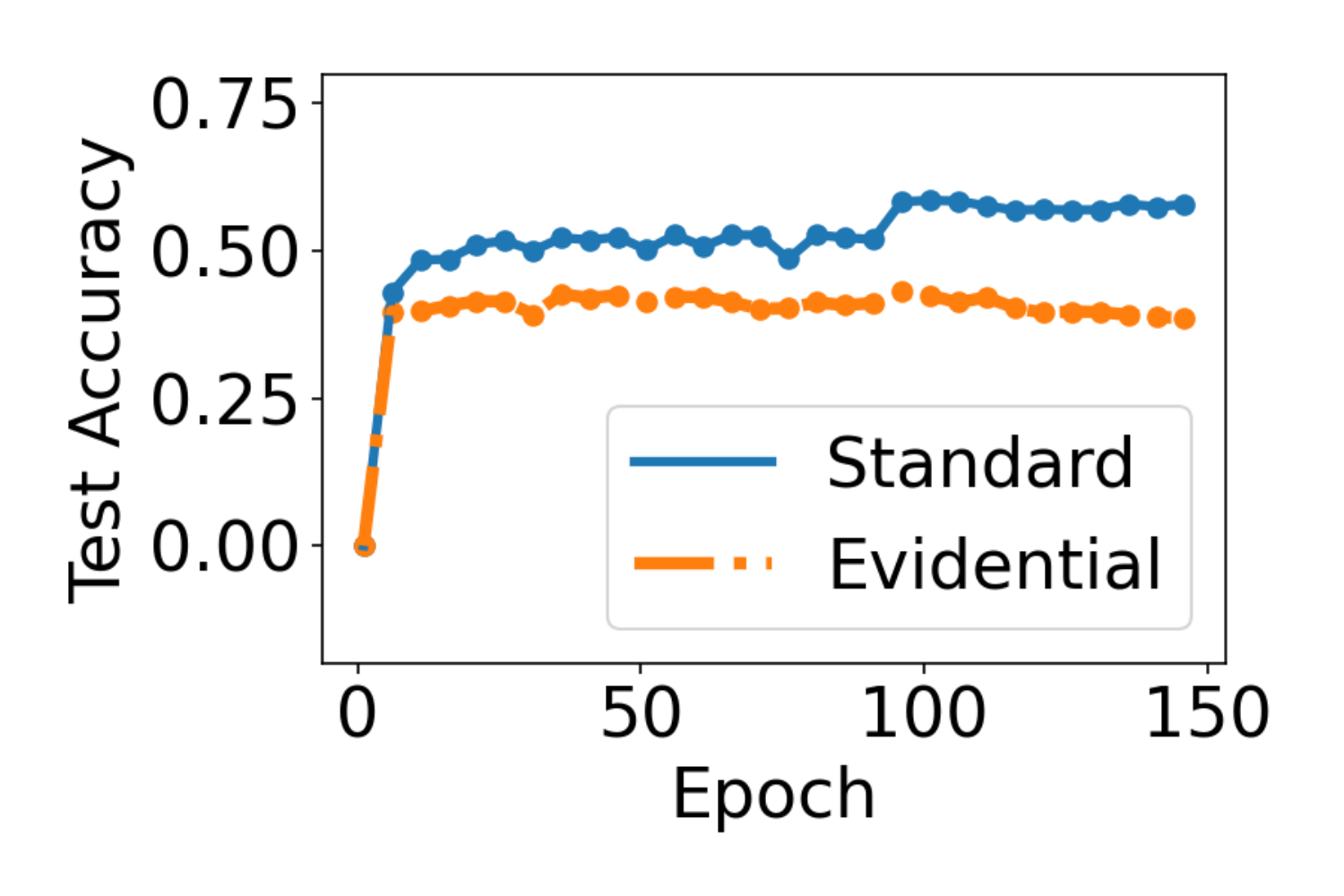}
}
\vspace{-3mm}
\caption{\label{fig:challengingDatasetsWeakness}Learning trends in complex classification problems }
\vspace{-3mm}
\label{fig:ChallengingFailureEvidential}
\end{figure} 

\subsection{Effectiveness of the RED}
\paragraph{Evidential activation function.}
\label{subsec:impactActivationFunction}
We first experiment with different activation functions for the evidential models to show the superior predictive performance and generalization capability of $\exp$ activation validating our Theorem \ref{th:superirorityofExp}. 
We consider evidential models trained with evidential log loss given by \eqref{eqn:evLogloss} in Table \ref{tab:ClassificationPerformanceCOmparison} (Additional results along with hyperparameter details are presented in Appendix Section \ref{ap:AdditionalExpResults}). As can be seen, $\exp$ activation to transform network outputs into evidence leads to superior performance compared to $\texttt{ReLU}$ and $\texttt{Softplus}$ based transformations. 
Furthermore, our proposed model with correct evidence regularization further improves over the $\exp$-based evidential models as it enables the evidential model to continue learning from \textit{zero-evidence} samples. 

\begin{table}[ht]
\vspace{-2mm}
\centering
\small
    \caption{Classification performance comparison 
    }
    \label{tab:impactEvidentialActivation}
\begin{tabular}{|p{0.09\textwidth}|p{0.09\textwidth}|p{0.09\textwidth}|p{0.09\textwidth}|}
\hline
Model & MNIST & Cifar10 &Cifar100 \\ 
\hline
\texttt{ReLU}     &$98.19_{\pm0.08}$&$41.43_{\pm19.60}$&$61.27_{\pm3.79}$\\
\texttt{SoftPlus} &$98.21_{\pm0.05}$&$95.18_{\pm0.11}$&$74.48_{\pm0.17}$\\
$\exp$            &$98.79_{\pm0.02}$&$95.11_{\pm0.10}$&$76.12_{\pm0.04}$\\
\textbf{RED(Ours)}     &$\bf{99.10_{\pm0.02}}$&$\bf{95.24_{\pm0.06}}$&$\bf{76.43_{\pm0.21}}$ \\
\hline
\end{tabular}
\label{tab:ClassificationPerformanceCOmparison}
\end{table} 

We next present the test set performance change as training progresses with MNIST dataset and two different evidential losses in Figure \ref{fig:new_Evid_actEvModel} where we observe similar results. The $\exp$ activation shows superior performance, as it has smallest \textit{zero-evidence region}, and does not suffer from many learning issues present in other activation functions.

\begin{figure}[ht!] 
\vspace{-2mm}
\centering
\subfigure[Evidential MSE loss]{
  \includegraphics[width=0.46\linewidth]{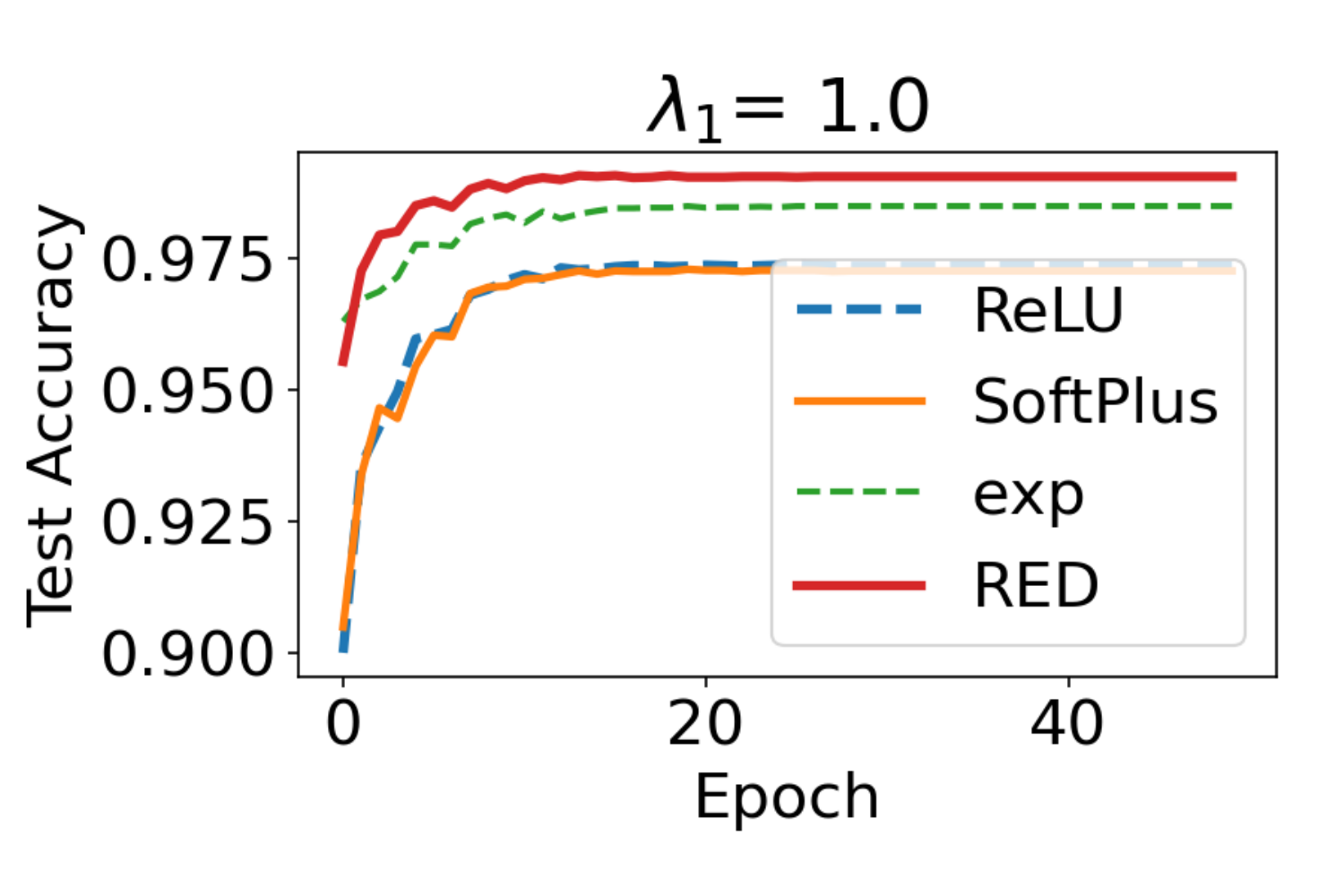}
}
\subfigure[Evidential Log loss]{
  \includegraphics[width=0.46\linewidth]{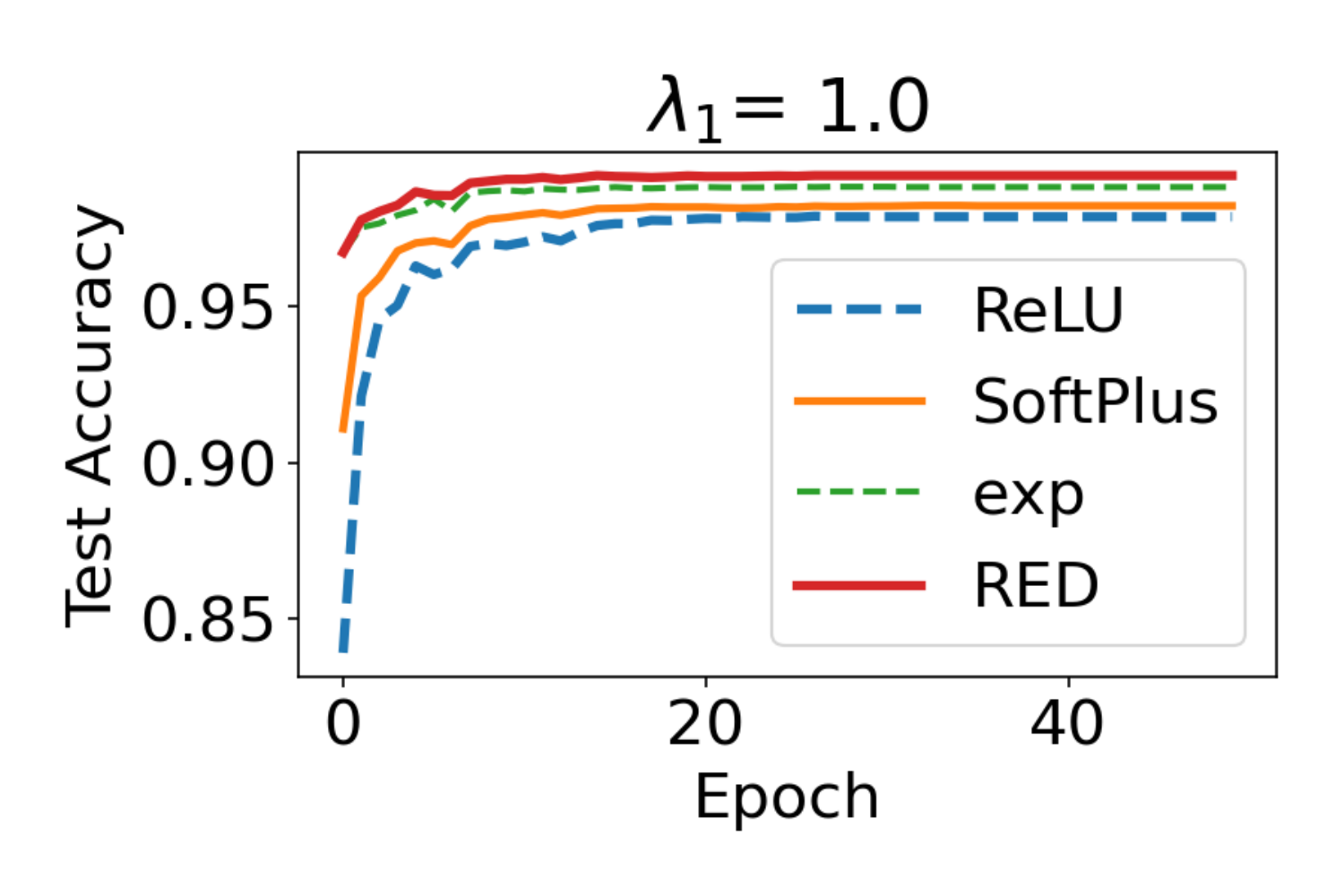}
}
\caption{\label{fig:new_Evid_actEvModel}Impact of evidential activation functions to the Test Accuracy }
\vspace{-5mm}
\end{figure}

\paragraph{Correct evidence regularization.}
We now study the impact of the proposed correct evidence regularization using the MNIST and Cifar100 classification problems. We consider the evidential baseline model that uses $\exp$ activation to acquire evidence, and is trained with Type II Maximum Likelihood based loss with different incorrect evidence regularization strengths. We introduce the proposed novel correct evidence regularization to the model. As can be seen in Figure \ref{fig:CorEvRegImpact}, the model with correct-evidence regularization has superior generalization performance compared to the baseline evidential model. This is mainly due to the fact that with proposed correct evidence regularization, the evidential model can also learn from the zero-evidence training samples to acquire new knowledge instead of ignoring them. Our proposed model considers knowledge from all the training data and aims to acquire new knowledge to improve its generalization instead of ignoring the samples on which it has no knowledge. Finally, even though strong incorrect evidence regularization hurts the model's generalization, the proposed model is robust and generalizes better, empirically validating our Theorem \ref{th:sovlingzeroevidenceIssue}. Limited by space, we present additional results in Appendix \ref{app:secImpactOfCorrectEvReg}.
\begin{figure}[ht!] 
\vspace{-3mm}
\centering
\subfigure[Trend for $\lambda_1 = 1.0$]{
  \includegraphics[width=0.46\linewidth]{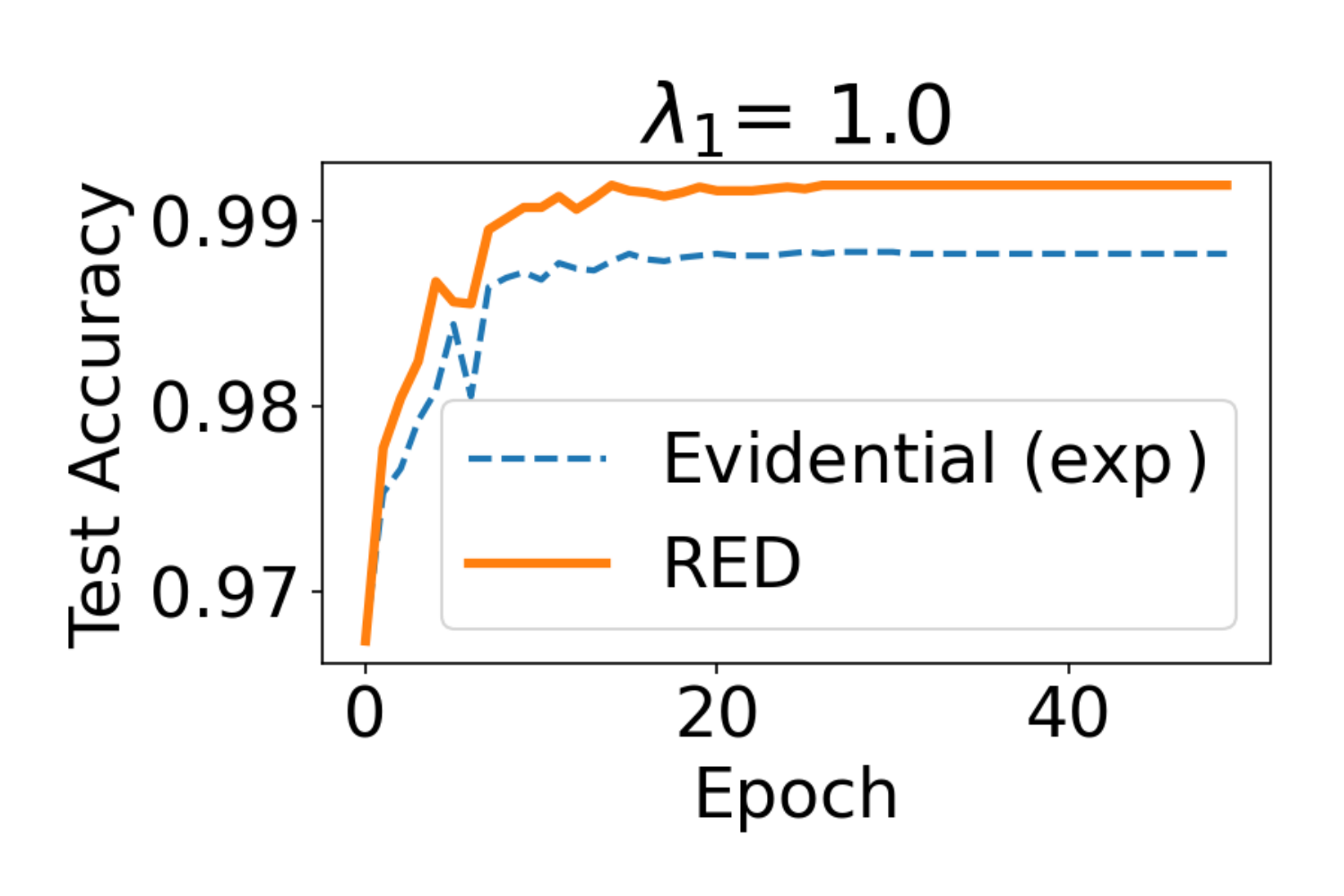}
}
\subfigure[Trend for $\lambda_1 = 10.0$]{
  \includegraphics[width=0.46\linewidth]{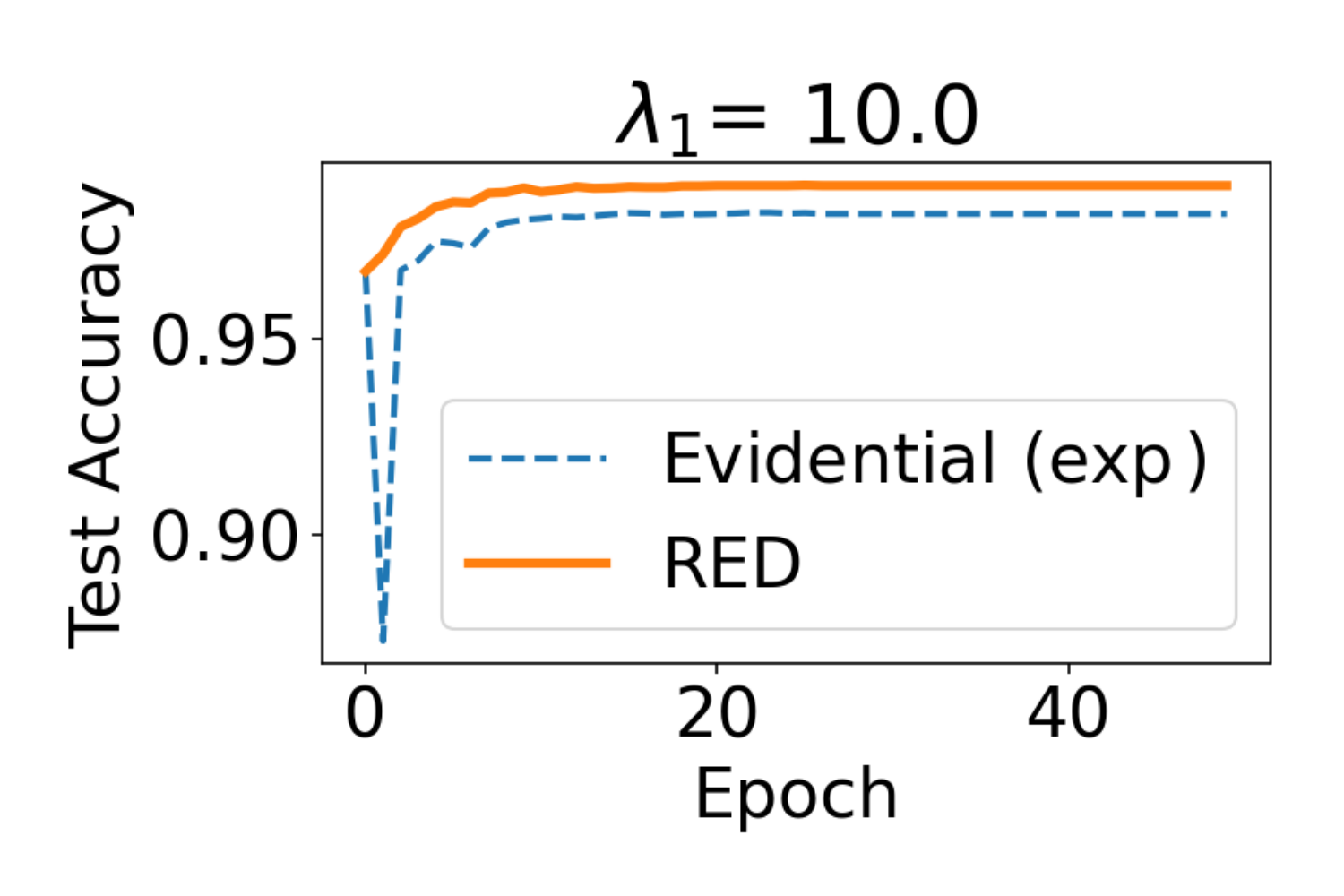}
}
\subfigure[Trend for $\lambda_1 = 0.1$]{
  \includegraphics[width=0.46\linewidth]{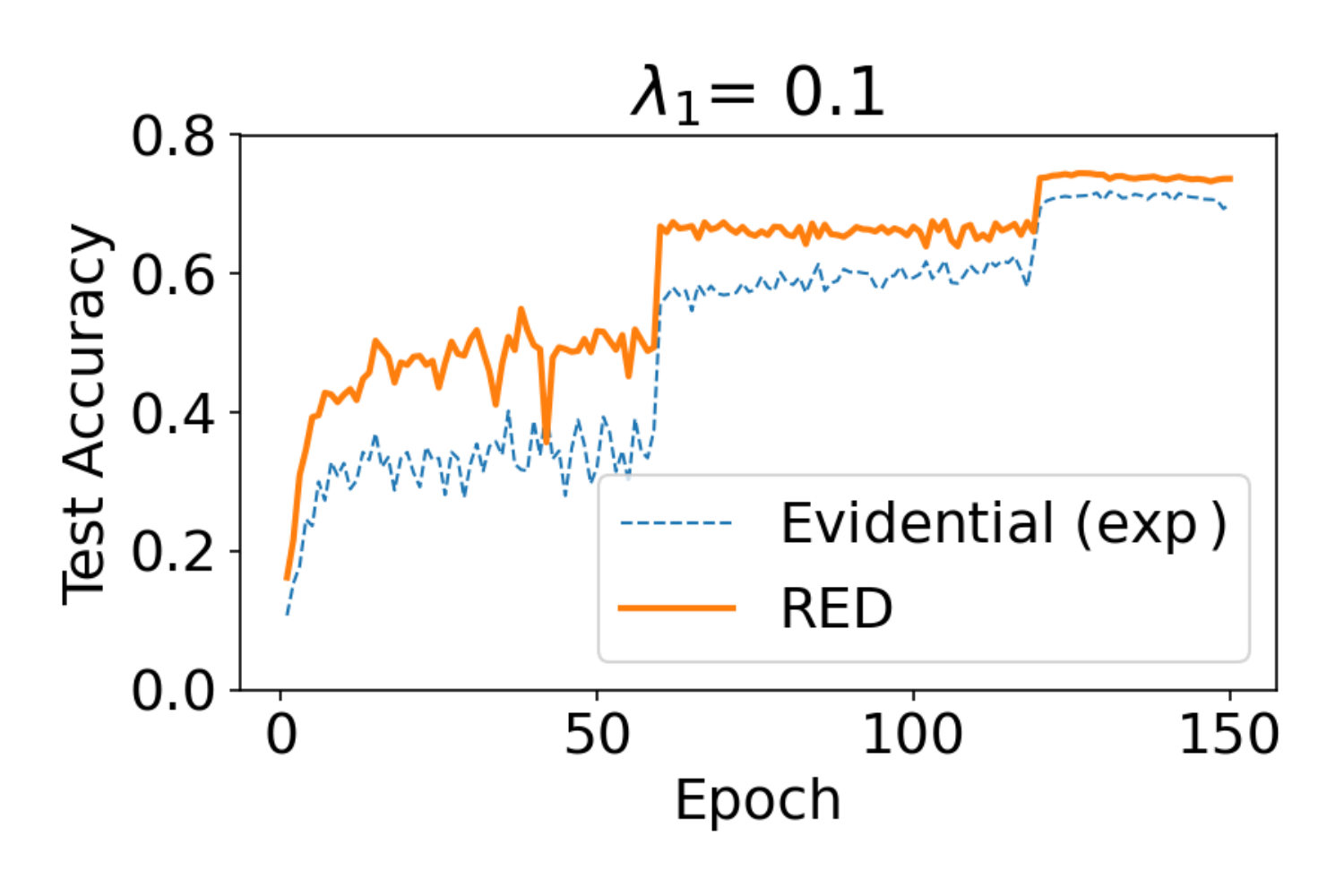}
}
\subfigure[Trend for $\lambda_1 = 1.0$]{
  \includegraphics[width=0.46\linewidth]{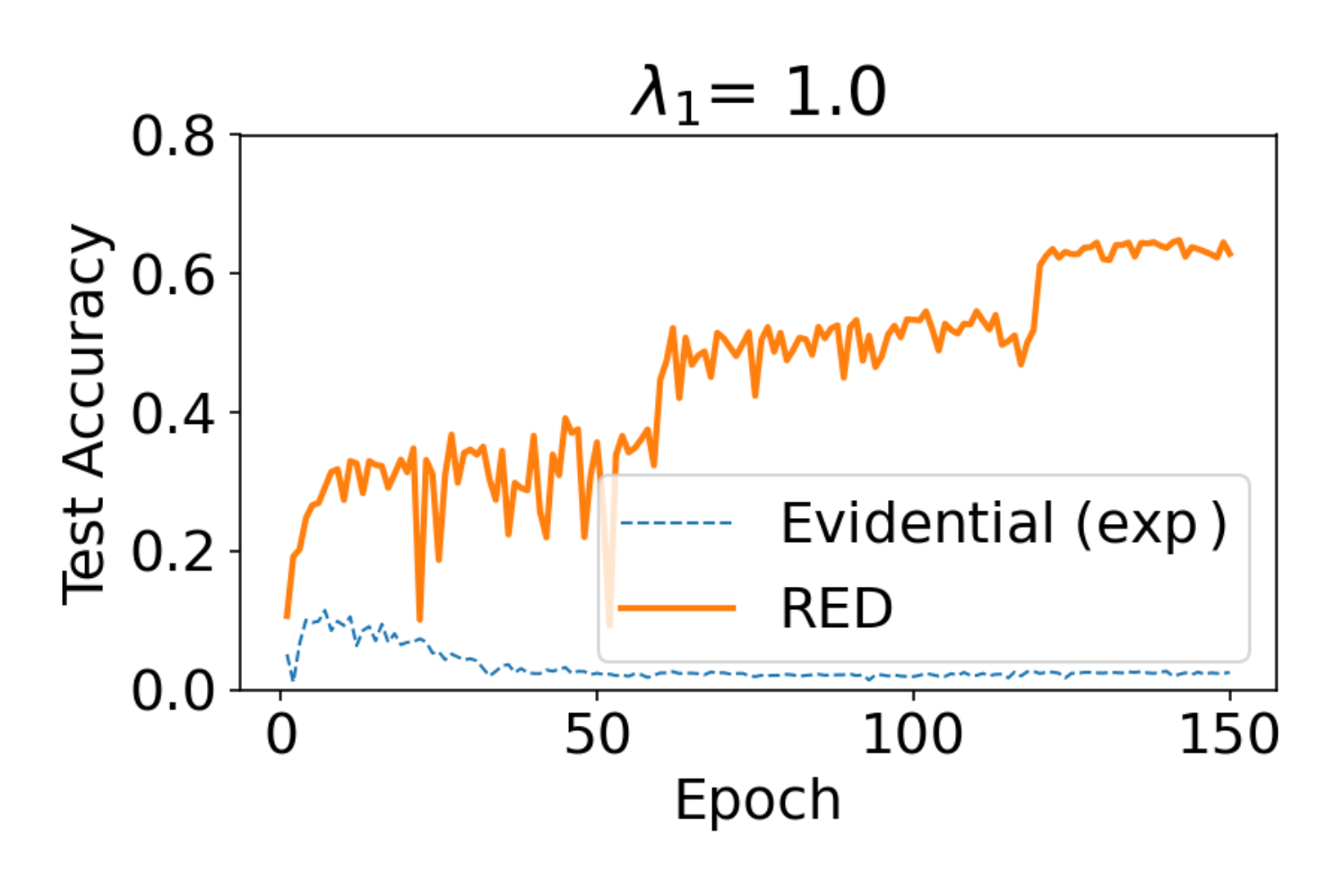}
}
\vspace{-3mm}
\caption{\label{fig:CorEvRegImpact}Impact of correct evidence regularization to test accuracy: (a), (b) - MNIST Results; (c), (d) - Cifar100 Results}
\vspace{-3mm}
\end{figure} 

\paragraph{Zero-evidence Sample Anaysis.}
{ Similar to the toy MNIST zero-evidence analysis,  we consider the Cifar100 dataset, and carry out the analysis for this complex dataset/setting. Instead of focusing on a few training examples, we present the average statistics of the evidence ($\mathcal{E}$) for the 50,000 training samples in the 100 class classification problem for a model trained for 200 epochs using a log-based evidential loss in \eqref{eqn:evLogloss} with $\lambda_1 = 1.0$. For reference, the samples with less than 0.01 average evidence (\ie $\mathcal{E} \leq 0.01$) are samples on which the model is not confident (\ie having a high vacuity of $\nu \geq 0.99$), and are close to the ideal zero-evidence region. Our proposed RED model effectively avoids such zero evidence regions, and has the lowest number of samples (i.e. only $0.06\%$ of total training dataset compared to 58.96\% of \texttt{SoftPlus} based, and 100\% of \texttt{ReLU} based evidential models) in very low evidence regions.}
\begin{table}[ht]
\vspace{-2mm}
\centering
\small
    \caption{Zero-Evidence Analysis for Complex Dataset-Setting
    }
    \label{tab:ZeroEvidenceComplexDatasetSetting}
\begin{tabular}{|p{0.06\textwidth}|p{0.07\textwidth}|p{0.07\textwidth}|p{0.07\textwidth}|p{0.08\textwidth}|}
\hline
Model & $\mathcal{E}\leq.01$ & $\mathcal{E} \leq 0.1$ & $\mathcal{E} \leq 1.0$ & $\mathcal{E} > 1.0$\\
\hline
\texttt{ReLU}&50000&50000&50000&0 \\
SoftPlus&29483&32006&49938&62 \\
Exp&48318&49881&49949&51 \\
\textbf{RED}&30&16322&25154&24846 \\
\hline
\end{tabular}
\vspace{-2mm}
\end{table} 

\vspace{-2mm}\subsection{Ablation Study}
\paragraph{Impact of loss function.}
We next study the impact of the evidential loss function on the model's performance using MNIST and CIFAR100 classification problems. We consider all three activations: $\texttt{ReLU}, \texttt{SoftPlus}, $ and $\exp$ to transform neural network outputs to evidence and carry out experiments over CIFAR100 with identical model and settings. As seen in Table \ref{tab:impactActivationLossesCifar100}, the generalization performance of evidential model is consistently sub-optimal when trained with evidential MSE loss given by  \eqref{eqn:evMSEloss} compared to the two other evidential losses \eqref{eqn:evDigammaloss} $\&$ \eqref{eqn:evLogloss}. This is consistent across all three evidence activation functions. This is mainly due to the bounded nature of the evidential MSE loss \eqref{eqn:evMSEloss}: for all training samples, evidential MSE loss is bounded in the range of $[0,2]$. Type II Maximum Likelihood loss given in \eqref{eqn:evLogloss} and cross-entropy based evidential loss given in \eqref{eqn:evDigammaloss} show comparable empirical results. 

Next, we consider $\exp$ activation and conduct experiments over the MNIST dataset for incorrect evidence regularization strengths of $\lambda_1 = 0 \& 1$. We again observe similar results where the training with the Evidential MSE loss in \eqref{eqn:evMSEloss} leads to sub-optimal test performance. Additional results, along with theoretical analysis are presented in the Appendix. In the subsequent experiments, we consider the Type II Maximum Likelihood loss \eqref{eqn:evLogloss} for evidential model training due to its simplicity and some theoretical advantages (see Appendix \ref{ap:evLossAnalysis}). We leave a thorough investigation of these two evidential losses (\eqref{eqn:evDigammaloss} \& \eqref{eqn:evLogloss}) as future work. 

\begin{table}[ht]
\vspace{-2mm}
\centering
\small
    \caption{Impact of evidential losses on classification performance
    }
    \label{tab:impactActivationLossesCifar100}
\begin{tabular}{|p{0.06\textwidth}|p{0.07\textwidth}|p{0.07\textwidth}|p{0.07\textwidth}|p{0.08\textwidth}|}
\hline
Loss & ReLU & SoftPlus & $\exp$ & \textbf{RED(Ours)}\\
\hline
MSE(\ref{eqn:evMSEloss})&$31.49_{\pm 0.3}$&$15.74_{\pm 0.5}$&$42.95_{\pm 0.7}$&$\bf{75.73_{\pm 0.3}}$\\
CE (\ref{eqn:evDigammaloss})&$68.62_{\pm2.4}$&$74.44_{\pm0.1}$&$76.23_{\pm0.1}$&$\bf{76.35_{\pm0.1}}$\\
Log(\ref{eqn:evLogloss})&$61.27_{\pm3.8}$&$74.48_{\pm0.1}$&$76.12_{\pm0.1}$&$\bf{76.43_{\pm0.2}}$\\
\hline
\end{tabular}
\vspace{-5mm}
\end{table} 

\begin{figure}[ht!] \label{fig:LossImpactEvidentialModel}
\centering
\subfigure[Trend for $\lambda_1 = 0.0$]{
  \includegraphics[width=0.46\linewidth]{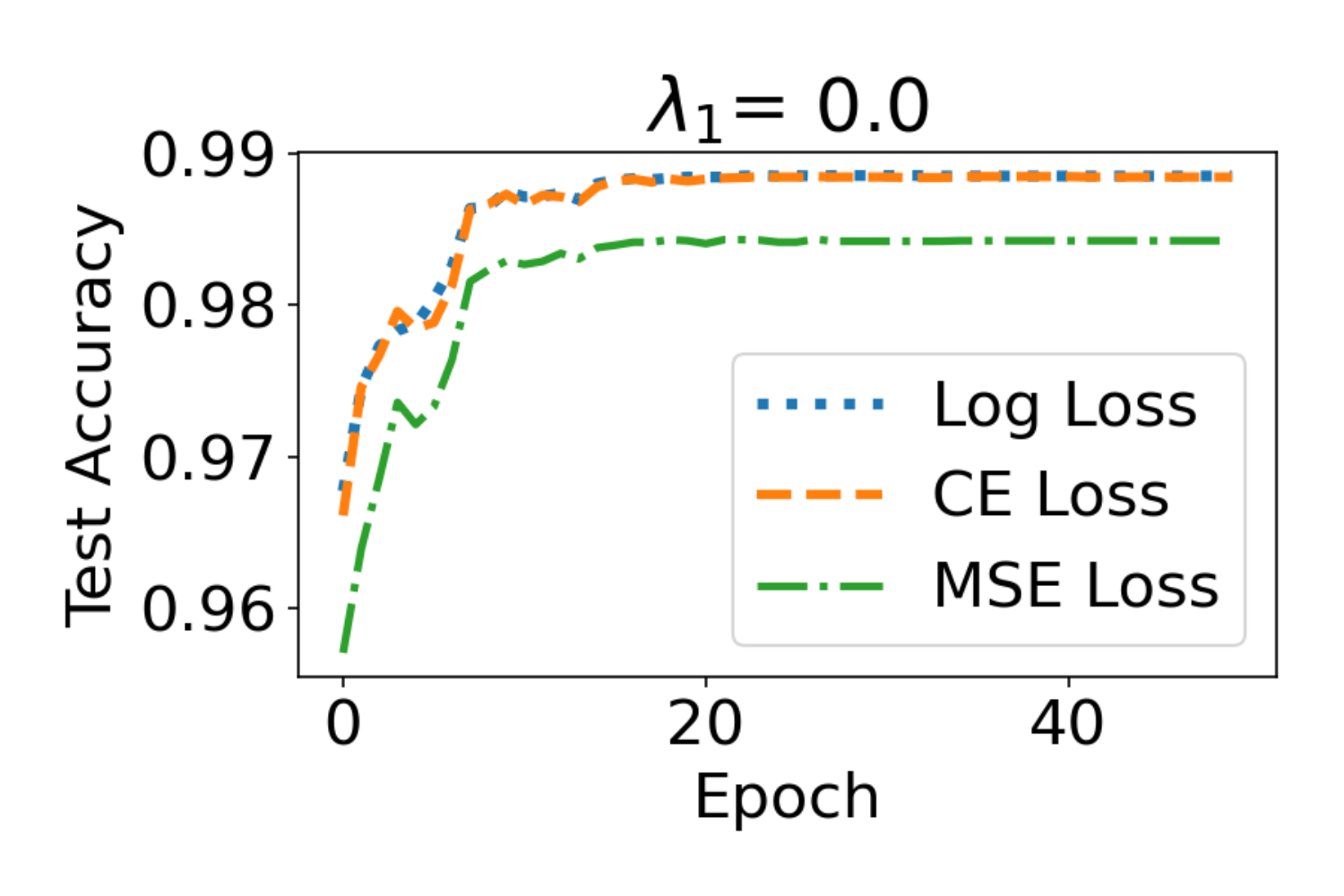}
}
\subfigure[Trend for $\lambda_1 = 1.0$]{
  \includegraphics[width=0.46\linewidth]{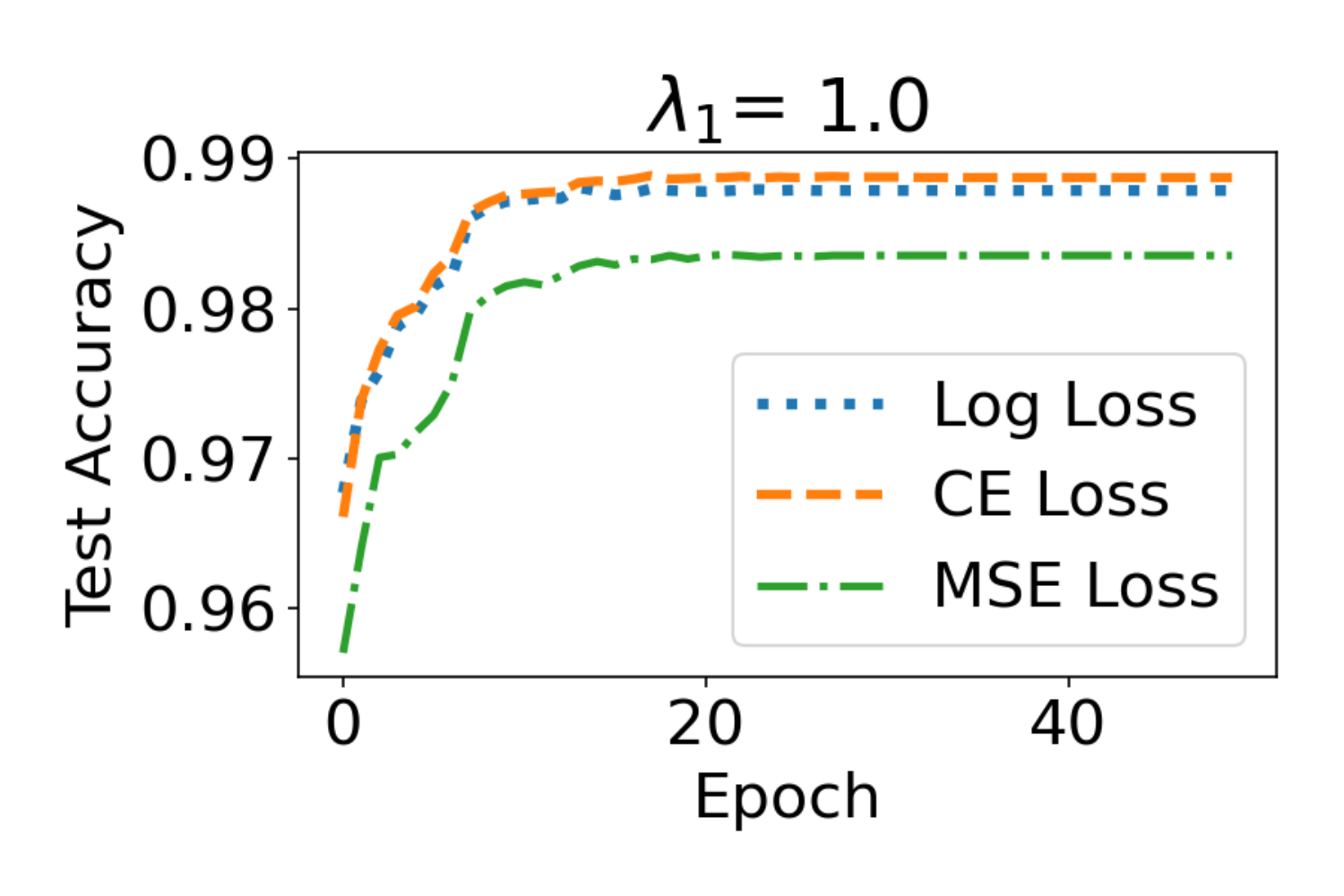}
}
\vspace{-3mm}
\caption{Impact of evidential losses on test set accuracy}
\vspace{-2mm}
\end{figure} 



\begin{figure}[ht!] 
\vspace{-1mm}
\centering
  \includegraphics[width=0.7\linewidth]{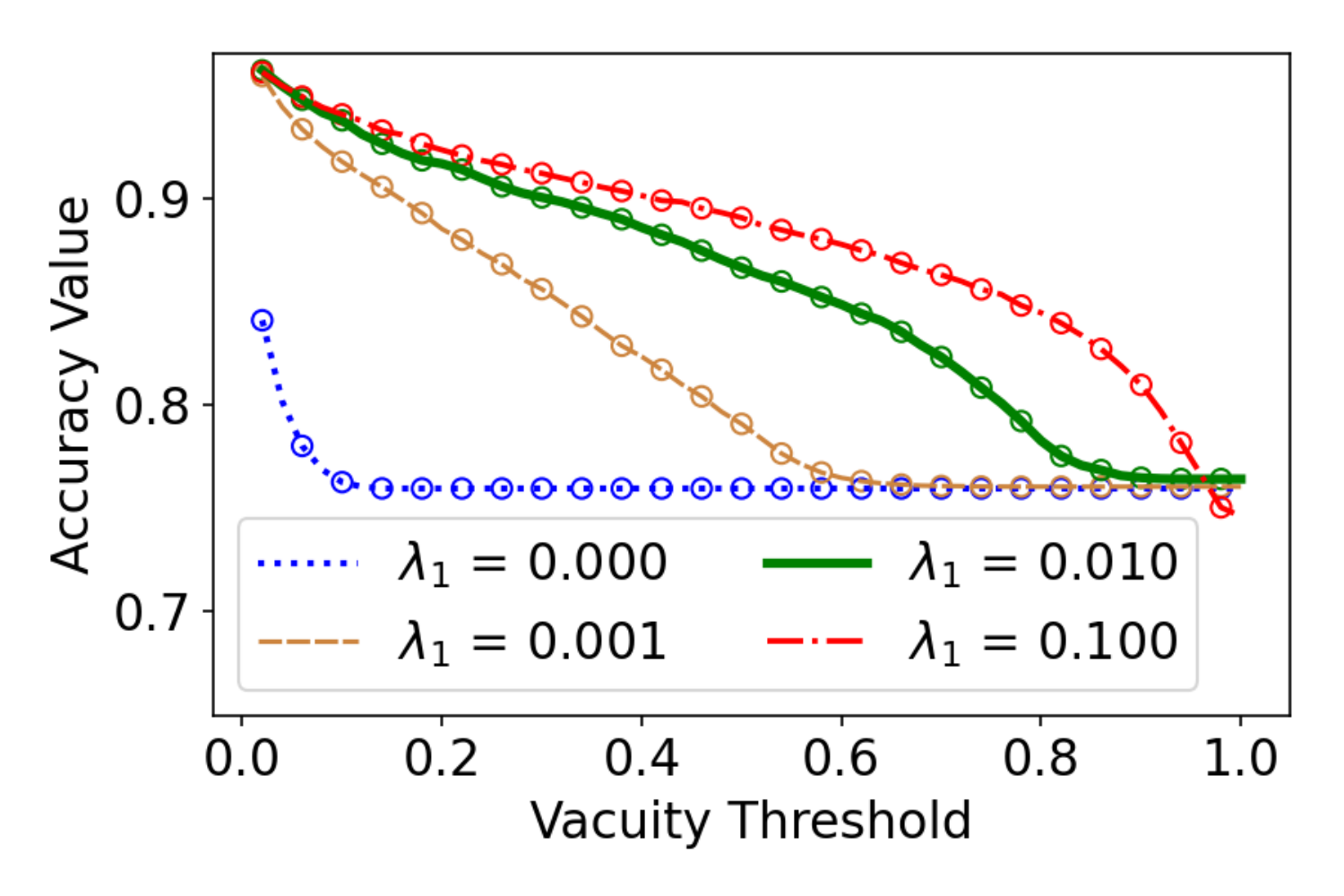}
\vspace{-5mm}
\caption{\label{fig:accVacuityCurve}Accuracy-Vacuity curve}
\vspace{-2mm}
\end{figure}

\paragraph{Study of uncertainty information.}\vspace{-2mm}

We now investigate the uncertainty behavior of the proposed evidential model with Cifar100 experiments. 
We present the Accuracy-Vacuity curve for different incorrect evidence regularization strengths ($\lambda_1$) in Figure \ref{fig:accVacuityCurve}. Vacuity reflects the lack of confidence in the predictions, and the accuracy of effective evidential model should increase with lower vacuity threshold. Without any incorrect evidence regularization (\ie $\lambda_1 = 0$), the evidential model is highly confident on its predictions and all test samples are concentrated on the low vacuity region. As the incorrect evidence regularization strength is increased, the model outputs more accurate confidence in the predictions. Strong incorrect evidence regularization hurts the generalization over the test set as indicated by low accuracy when all test samples are considered (\ie vacuity threshold of 1.0). In all cases, the evidential model shows reasonable uncertainty behavior: the model's test set accuracy increases as the vacuity threshold is decreased. 

Next, we look at the accuracy of the evidential models on their top-$K$ $\%$ most confident predictions over the test set. Table \ref{tab:topKConfidentAccuracyTrend} shows the accuracy trend of Top-$K$ (\%) confident samples. Consider the most confident 20\% samples (corresponding to 2000 test samples of Cifar100 dataset). The proposed model leads to highest accuracy (of 99.35\%) compared to all the models. Similar trend is seen for different $K$ values where the proposed model shows comparable to superior results demonstrating its accurate uncertainty quantification capability. 

\begin{table}[ht]
\vspace{-4mm}
\centering
\small
    \caption{Accuracy on Top-K$\%$ confident samples (\%)
    }
    \label{tab:topKConfidentAccuracyTrend}
\begin{tabular}{|p{0.08\textwidth}|p{0.035\textwidth}|p{0.035\textwidth}|p{0.035\textwidth}|p{0.035\textwidth}|p{0.035\textwidth}|p{0.035\textwidth}|}
\hline
Model & 10\% & 20\% & 30\%  & 50\% & 80\% & 100\% \\
\hline
\texttt{ReLU} &98.50&98.30&97.27&90.60&71.54&61.27 \\
\texttt{SoftPlus} &99.10&98.75&98.30&95.86&85.56&74.48 \\
$\exp$ &99.40&98.95&98.50&96.52&86.46&76.12 \\
\textbf{RED} &99.60&99.35&98.83&96.24&86.38&76.43 \\
\hline
\end{tabular}
\end{table} 

We next consider out-of-distribution (OOD) detection experiments for the Cifar100-trained evidential model using SVHN dataset (as OOD) \cite{netzer2011reading}. As seen in Table \ref {tab:OODSVHNCifar100}, the evidential models, on average, output very high vacuity for the OOD samples, showing the potential for OOD detection. 

\begin{table}[ht]
\vspace{-4mm}
\centering
\small
    \caption{Out-of-Distribution sample detection
    }
    \label{tab:OODSVHNCifar100}
\begin{tabular}{|p{0.1\textwidth}|p{0.12\textwidth}|p{0.17\textwidth}|}
\hline
Model & InD Vacuity& OOD Vacuity (SVHN) \\
\hline
$\exp$ &0.3227&0.7681 \\
\textbf{RED (Ours)} &0.2729&0.7552 \\
\hline
\end{tabular}
\vspace{-2mm}
\end{table} 

We present the AUROC score for Cifar100 trained models with SVHN dataset test set as the OOD samples in Table \ref{tab:aurocComparison}. In AUROC calculation, we use the maximum softmax score for the standard model, and predicted vacuity score for all the evidential models. As can be seen, the \texttt{exp}-based model outperforms all other activation functions, and the proposed model RED can learn from all the training samples that leads to the best performance. 
\begin{table}[ht]
\vspace{-2mm}
\centering
\small
    \caption{AUROC for Cifar100-SVHN experiment
    }
    \label{tab:aurocComparison}
\begin{tabular}{|p{0.05\textwidth}|p{0.05\textwidth}|p{0.06\textwidth}|p{0.06\textwidth}|p{0.05\textwidth}|p{0.05\textwidth}|}
\hline
Model & ReLU&SoftPlus & Standard & $\exp$ &\textbf{RED}\\ 
\hline
AUROC &$0.7430$&$0.8058$&$0.8669$&$0.8804$&$\bf{0.8833}$ \\
\hline
\end{tabular}
\vspace{-4mm}
\end{table}

\vspace{-1mm}
\section{Conclusion}
\vspace{-1mm}
In this paper, we theoretically investigate the evidential models to identify their learning deficiency, which makes them fail to learn from zero-evidence regions. We then show the superiority of the evidential model with $\exp$ evidential activation over the $\texttt{ReLU}$ and $\texttt{SoftPlus}$ based models. We further analyze the evidential losses, and introduce a novel correct evidence regularization over the $\exp$-based evidential model. The proposed model effectively pushes the training samples out of the zero-evidence regions, leading to superior learning capabilities. We conduct extensive experiments that empirically validate all theoretical claims while demonstrating the effectiveness of the proposed approach.    

\vspace{-1mm}
\section*{Acknowledgements}
\vspace{-1mm}
This research was supported in part by an NSF IIS award IIS-1814450 and an ONR award N00014-18-1-2875. The views and conclusions
contained in this paper are those of the authors and should not
be interpreted as representing any funding agency.

\clearpage
\newpage
\bibliography{main}
\bibliographystyle{icml2023}
\newpage
\appendix
\onecolumn
\begin{center}
\large{\bf Appendix}
\end{center}
\section*{Organization of the Appendix} 
\begin{itemize}
    \item In Section \ref{sec:appAnalysisStandardClassificationModels}, we present an analysis of standard classification models trained with cross-entropy loss to show their learning capabilities.
    \item In Section \ref{apSec:evidentialProof}, we present a complete proof of Theorem \ref{supotimalityTheorem} for different evidential losses that demonstrates the inability of evidential models to learn from zero-evidence samples.
    \item In Section \ref{app:evIncReg}, we describe different incorrect evidence regularizations used in the existing literature and carry out a gradient analysis to study their impact on evidential model learning.
    \item In Section \ref{sec:impactLogitsEvTranf}, we present the proof for Theorem \ref{th:superirorityofExp} that shows the superiority of $\exp$ activation over the $\texttt{SoftPlus}$ and $\texttt{ReLU}$ functions to transform logits to evidence.
    \item In Section \ref{ap:evLossAnalysis}, we analyze the evidential losses that reveals the theoretical limitation of evidential models trained using Bayes risk with sum of squares loss. 
    \item In Section \ref{ap:AdditionalExpResults}, we present additional experiment results, {clarifications}, hyperparameter details, and discuss some limitations along with possible future works.
\end{itemize}
     The source code for the experiments carried out in this work is attached in the supplementary materials and is available at the link: \href{https://github.com/pandeydeep9/EvidentialResearch2023}{https://github.com/pandeydeep9/EvidentialResearch2023} 
\section{Standard Classification Model}\label{sec:appAnalysisStandardClassificationModels}
Consider a standard cross-entropy based model for $K-$class classification. Let the overall network be represented by $f_{\Theta}(.)$, and let $\mathbf{o} = f_{\Theta}(\mathbf{x})$ be the output from this network before the softmax layer for input $\mathbf{x}$ and one-hot ground truth label of $\mathbf{y}$. The output after the softmax layer is given by 
\begin{align}
\texttt{sm}_i = \frac{\exp({o}_i)}{\sum_{k=1}^{K} \exp({o}_k)} = \frac{\exp({o}_i)}{S^{\text{ce}}}    
\end{align}
Where $S^{\text{ce}} =  \sum_{i=1}^K \exp(o_i)$. The model is trained with cross-entropy loss. For a given sample $(\mathbf{x}, \mathbf{y})$, the loss is given by
\begin{align}
    \mathcal{L}_{\texttt{cross-entropy}} &= -\sum_{k = 1}^K y_k \log (\texttt{sm}_k ) 
    = -\sum_{k = 1}^K \Big[y_k {o}_k - y_k \log \Big( \sum_{i=1}^K \exp(o_i) \Big) \Big] \\
    &= \log S^{\text{ce}} - \sum_{k=1}^K y_k o_k
\end{align}
Now, looking at the gradient of this loss with respect to the pre-softmax values $\mathbf{o}$
\begin{align}
    \text{grad}_k &= \frac{\partial \mathcal{L}_{\texttt{cross-entropy}}}{\partial o_k} = \Big( \frac{1}{S^{\text{ce}}}\frac{\partial S^{\text{ce}}}{\partial o_k} - y_k \Big) 
    = \Big( \frac{\exp (o_k)}{S^{\text{ce}}} - y_k \Big) = \texttt{sm}_k - y_k
\end{align}
\subsection*{Analysis of the gradients For Standard Classification Model.} \label{par:gradInterpCross} The gradient measures the error signal, and for standard classification models, it is bounded in the range [-1, 1] as $0 \leq \texttt{sm}_k \leq 1$ and $y_k \in \{0, 1\}$. The model is updated using gradient descent based optimization objectives. For input $\mathbf{x}$, the neural network outputs K values $o_1$ to  $o_K$, and the corresponding ground truth is $\mathbf{y}$, $y_{gt} = 1, y_{\neq gt} = 0$.

When $y_i$ = 0, the gradient signal is $\text{grad}_i = \texttt{sm}_i$ and the model optimizes the parameters to minimize this value. Only when $\texttt{sm}_i = 0$, the gradient is zero, and the model is not updated. In all other cases when $\texttt{sm}_i \neq 0$, there is a non-zero gradient dependent on $\texttt{sm}_i$, and the model is updated to minimize the $\texttt{sm}_i$ as expected.

When $y_i$ = 1, the gradient signal is $\text{grad}_i = \texttt{sm}_i - 1$ and the model optimizes the parameters to minimize this value. As $ \texttt{sm}_i \in [0, 1]$, only when the model outputs a large logit on $i$ (corresponding to the ground truth class) and small logit for all other nodes, $\texttt{sm}_i = 1$, the gradient is zero, and the model is not updated. In all other cases when $\texttt{sm}_i < 1$, there is a non-zero gradient dependent on $\texttt{sm}_i$ and the model is updated to maximize the $\texttt{sm}_i$ and minimize all other $\texttt{sm}_{\neq i}$ as expected. The gradient signal in standard classification models trained with standard cross-entropy loss is reasonable and enables learning from all the training data samples. 

\section{Evidential Classification Models} \label{apSec:evidentialProof}
\textbf{Theorem \ref{supotimalityTheorem}:}
{Given a training sample $(\mathbf{x}, \mathbf{y})$, if an evidential neural network outputs zero evidence $\mathbf{e}$, then the gradients of the evidential loss evaluated on this training sample over the network parameters reduce to zero. }
\begin{proof}
In the main paper, we  considered a $K-$class classification problem and a representative evidential model trained using Bayes risk with sum of squares loss (Eqn. \ref{eqn:evMSEloss}) in the proof. Following 3 variants of evidential losses (\cite{sensoy2018evidential}) have been commonly used in evidential classification works: 
\begin{enumerate}[noitemsep,topsep=0pt,leftmargin=*]
    \item Bayes risk with sum of squares loss (\ie Evidential MSE loss) \cite{zhao2020uncertainty}
    \begin{align} 
    \label{eqn:evMSEloss} 
    \mathcal{L}^{\texttt{MSE}}(\mathbf{x}, \mathbf{y}) &= \sum_{j=1}^K (y_j - \frac{\alpha_j}{S})^2 + \frac{\alpha_j (S - \alpha_j)}{S^2(S+1)}
\end{align}
    \item Bayes risk with cross-entropy loss (\ie Evidential CE loss)\cite{charpentier2020posterior}
\begin{align}\label{eqn:evDigammaloss}
    \mathcal{L}^{\texttt{CE}}(\mathbf{x}, \mathbf{y}) &= \sum_{j=1}^K y_k \Big( \Psi(S) - \Psi(\alpha_k) \Big) 
\end{align}
    \item Type II Maximum Likelihood loss (\ie Evidential log loss)\cite{Pandey_2022_CVPR}
\begin{align}\label{eqn:evLogloss}
    \mathcal{L}^{\texttt{Log}}(\mathbf{x}, \mathbf{y}) = \sum_{k=1}^K y_k \Big( \log(S) - \log (\alpha_k) \Big)
\end{align}
\end{enumerate}

For completeness, we consider all three loss functions used in evidential classification models and carry out their analysis.
\subsection{Gradient of Evidential Activation Functions $\mathcal{A}(.)$}
Three non-linear functions are proposed and commonly used in the existing literature to transform the neural network output to evidence: 1) $\texttt{ReLU}$ function, 2) $\texttt{SoftPlus}$ function, and 3) Exponential function. In this section, we compute the gradients of the evidence output $e_i$ from these non-linear activation functions with respect to the logit input $o_i$
\begin{enumerate} \label{eqn:Activation Functions}
    \item $\mathcal{A}(.) = \texttt{ReLU}(.) = \max( 0, . )$
    \begin{align}
e_k &= \texttt{ReLU}(o_k) = \max(0, o_k) \implies \frac{\partial e_k}{\partial o_k} = \begin{cases}
0 \quad \quad \text{if} \quad \quad o_k \leq 0 \\
1 \quad \quad \text{otherwise}
\end{cases}    
\end{align}
    \item $\mathcal{A}(.) = \texttt{SoftPlus}(.) = \log ( 1 + \exp(.))$
    \begin{align} \label{eqn:gradSoftplusActivation}
    e_k =  \log ( \exp(o_k) + 1)  \implies \frac{\partial e_k}{\partial o_k} = \frac{1}{1 + \exp(-o_k)} = \text{Sigmoid}(o_k)
\end{align}
    \item $\mathcal{A}(.) = \exp(.)$
    \begin{align}\label{eqn:gradExpActivation}
    e_k &= \exp(o_k)
    \implies \frac{\partial e_k}{\partial o_k} = \exp(o_k) = e_k = \alpha_k - 1
\end{align}
\end{enumerate}

\subsection{Evidential Model Trained using Bayes risk with sum of squares loss (\ie Eqn. \ref{eqn:evMSEloss})}
\begin{proof}
Consider an input $\mathbf{x}$ with one-hot ground truth label of $\mathbf{y}$. Let the ground truth class be $g$ i.e. $y_{gt} =1, $ with corresponding Dirichlet parameter $\alpha_{gt}$, and $y_{\neq gt} = 0$. Moreover, let $\mathbf{o}, \mathbf{e}, \text{and } \boldsymbol{\alpha}$ represent the neural network output vector before applying the activation $\mathcal{A}$, the evidence vector, and the Dirichlet parameters respectively. 

In this evidential framework, the loss is given by
\begin{align}
    \mathcal{L}^{\texttt{MSE}}(\mathbf{x}, \mathbf{y}) &= \sum_{j=1}^K (y_j - \frac{\alpha_j}{S})^2 + \frac{\alpha_j (S - \alpha_j)}{S^2(S+1)} 
    = 1 - \frac{2 \alpha_{gt}}{S} + \frac{\sum_k \alpha_k^2}{S^2} + \frac{2\sum_i \sum_j \alpha_i\alpha_j}{S^2(S+1)} \\
    &= 2 - \frac{2 \alpha_{gt}}{S} - \frac{2\sum_i \sum_j \alpha_i\alpha_j}{S(S+1)} 
\end{align}
Now, consider different components of the loss and compute the gradients of the components with respect to Dirichlet parameters $\alpha$,
\begin{align*}
    \frac{\partial \frac{\alpha_{gt}}{S}}{\partial \alpha_{gt}} &= \frac{1}{S} - \frac{\alpha_{gt}}{S^2} \quad \& \quad
    \frac{\partial \frac{\alpha_{gt}}{S}}{\partial \alpha_{\neq gt}} =  - \frac{\alpha_{gt}}{S^2} 
       \implies \frac{\partial \frac{\alpha_{gt}}{S}}{\partial \alpha_{k}} =  \frac{y_k}{S}- \frac{\alpha_{gt}}{S^2}
\end{align*}
The gradient of the variance term is the same for all the $K$ Dirichlet parameters and is given by 
\begin{align*}
    \frac{\partial  \frac{\sum_i \sum_j \alpha_i\alpha_j}{S(S+1)}}{\partial \alpha_k} = \frac{( S - \alpha_k)}{S(S+1)} - \frac{(2S + 1)\sum_{i} \sum_{j}\alpha_i \alpha_j}{(S^2 + S)^2}
\end{align*}
Now, the gradient of the loss with respect to the  neural network output can be computed using the chain rule as
\begin{align*}\label{eqn:apgradMseInt}
    &\frac{\partial \mathcal{L}^{\texttt{MSE}}(\mathbf{x}, \mathbf{y})}{\partial o_k}  = \frac{\partial \mathcal{L}^{\texttt{MSE}}(\mathbf{x}, \mathbf{y})}{\partial \alpha_k}\frac{\partial e_k}{\partial o_k} 
    = -\bigg[2 \frac{\partial \frac{\alpha_k}{S}}{\partial \alpha_{k}} -2 \frac{\partial  \frac{\sum_i \sum_j \alpha_i\alpha_j}{S(S+1)}}{\partial \alpha_k}\bigg] \times \frac{\partial e_k}{\partial o_k} \\
    & = \bigg[ \frac{2\alpha_{gt}}{S^2} - 2\frac{y_k}{S} - \frac{2( S - \alpha_k)}{S(S+1)} +
    \frac{2(2S + 1)\sum_{i} \sum_{j}\alpha_i \alpha_j}{(S^2 + S)^2}
    \bigg] \times \frac{\partial e_k}{\partial o_k}
\end{align*}
\textbf{Case I:} $\texttt{ReLU}(.)$ to transform logits to evidence
\begin{align}
e_k &= \text{ReLU}(o_k) = \max(0, o_k) 
\implies \frac{\partial e_k}{\partial o_k} = \begin{cases}
1 \quad \quad \text{if} \quad \quad o_k > 0 \\
o \quad \quad \text{otherwise}
\end{cases}    
\end{align}
For zero-evidence sample with $\texttt{ReLU}(.)$ used to transform the logits to evidence, the logits $o_k$ satisfy the relationship $o_k \leq 0 \; \forall \; k
\implies \frac{\partial e_k}{\partial o_k} = 0
\implies\frac{\partial \mathcal{L}^{\texttt{MSE}}(\mathbf{x}, \mathbf{y})}{\partial o_k}  = 0
$

\textbf{Case II:} $\texttt{SoftPlus}(.)$ to transform logits to evidence
\begin{align}
    e_k &=  \log ( \exp(o_k) + 1) \implies 
    \frac{\partial e_k}{\partial o_k} = \text{Sigmoid}(o_k)
\end{align}
\textbf{Case II:} $\exp(.)$ to transform logits to evidence
\begin{align}
    e_k &= \exp(o_k)
    \implies 
    \frac{\partial e_k}{\partial o_k} = \exp(o_k) = \alpha_k - 1
\end{align}
For zero-evidence sample with $\texttt{SoftPlus}(.)$ used to transform the logits to evidence, the logits $o_k \rightarrow -\infty \implies \text{Sigmoid}(o_k) \rightarrow 0 \; \& \; \frac{\partial e_k}{\partial o_k} \rightarrow 0$. For zero-evidence sample with $\exp(.)$ used to transform the logits to evidence, $\alpha_k \rightarrow 1 \implies \frac{\partial e_k}{\partial o_k} \rightarrow 0$. Moreover, there is no term in the first part of the loss gradient (see Eqn. \ref{eqn:apgradMseInt}) to counterbalance these zero-approaching gradients. So, for zero-evidence samples,

\begin{align}
\frac{\partial \mathcal{L}^{\texttt{MSE}}(\mathbf{x}, \mathbf{y})}{\partial o_k}  = 0
\end{align}
Since the gradient of the loss with respect to all the nodes is zero, there is no update to the model from such samples. Thus, the evidential models fail to learn from such zero-evidence samples. 
\end{proof}

\subsection{Evidential Model Trained using Type II Maximum Likelihood formulation of Evidential loss (\ie Eqn. \ref{eqn:evLogloss})}
Consider a $K-$class evidential classification model that trains the model using Type II Maximum Likelihood formulation of the evidential loss. Consider an input $\mathbf{x}$ with one-hot ground truth label of $\mathbf{y}$, $\sum_{k=1}^K y_k = 1$. For this evidential framework, the Type II Maximum Likelihood loss is given by
\begin{align}
     \mathcal{L}^{\texttt{Log}}(\mathbf{x}, \mathbf{y}) = \sum_{k=1}^K y_k \Big( \log(S) - \log (\alpha_k) \Big) 
     =  \log S - \sum_{k=1}^K y_k \log \alpha_k
\end{align} 
Taking the gradient of the loss with the logits $\mathbf{o}$, we get
\begin{align} \label{eq:gradtype2}
    \text{grad}_k &= \frac{\partial \mathcal{L}^{\texttt{Log}}(\mathbf{x}, \mathbf{y})}{\partial o_k} 
    = \frac{1}{S}\frac{\partial S}{\partial o_k} - y_k \frac{1}{\alpha_k} \frac{\partial \alpha_k}{\partial o_k} 
    = \Big(\frac{1}{S} - \frac{y_k}{\alpha_k} \Big)\frac{\partial e_k}{\partial o_k}
\end{align}
\textbf{Case I:} $\texttt{ReLU}(.)$ to transform logits to evidence

For any zero-evidence sample with $\texttt{ReLU}(.)$ used to transform the logits to evidence, the logits $o_k$ satisfy the relationship $o_k \leq 0 \; \forall \; k \;
\implies \frac{\partial e_k}{\partial o_k} = 0 
\implies\frac{\partial \mathcal{L}^{\texttt{Log}}(\mathbf{x}, \mathbf{y})}{\partial o_k}  = 0 \; \forall k \in [1,K]$

\textbf{Case II:} $\texttt{SoftPlus}(.)$ to transform logits to evidence.
Considering Eqn. \ref{eq:gradtype2} and Eqn \ref{eqn:gradSoftplusActivation}, the gradient of the loss with respect to the logits becomes 
\begin{align} \label{eq:gradtype2SoftplusGrad}
    \text{grad}_k = \frac{\partial \mathcal{L}^{\texttt{Log}}(\mathbf{x}, \mathbf{y})}{\partial o_k} = \Big(\frac{1}{S} - \frac{y_k}{\alpha_k} \Big)\text{Sigmoid}(o_k)
\end{align}

\textbf{Case III:} $\exp(.)$ to transform logits to evidence. Considering Eqn. \ref{eq:gradtype2} and Eqn \ref{eqn:gradExpActivation}, the gradient of the loss with respect to the logits becomes 
\begin{align} \label{eq:gradtype2ExpGrad}
    \text{grad}_k = \frac{\partial \mathcal{L}^{\texttt{Log}}(\mathbf{x}, \mathbf{y})}{\partial o_k} = \Big(\frac{1}{S} - \frac{y_k}{\alpha_k} \Big)(e_k) = \Big(\frac{1}{S} - \frac{y_k}{\alpha_k}\Big) (\alpha_k - 1)
\end{align}

For zero-evidence sample with $\texttt{SoftPlus}(.)$ used to transform the logits to evidence, the logits $o_k \rightarrow -\infty \implies \text{Sigmoid}(o_k) \rightarrow 0 \; \& \; \frac{\partial e_k}{\partial o_k} \rightarrow 0$. Similarly, for zero-evidence sample with $\exp(.)$ used to transform the logits to evidence, $\alpha_k \rightarrow 1 \implies \frac{\partial e_k}{\partial o_k} \rightarrow 0$. Moreover, there is no term in the first part of the loss gradient (see Eqn. \ref{eq:gradtype2SoftplusGrad} and Eqn. \ref{eq:gradtype2ExpGrad} ) to counterbalance these zero-approaching gradient terms. 

Since the gradient of the loss with respect to all the nodes is zero, there is no update to the model from such samples. Thus, the evidential models trained with Type II Maximum Likelihood formulation of the evidential loss fail to learn from such zero-evidence samples. 

\subsection{Evidential Model Trained using Bayes risk with cross-entropy formulation of Evidential loss (\ie Eqn. \ref{eqn:evDigammaloss})}
Consider a $K-$class evidential classification model that trains model using Bayes risk with cross-entropy loss for evidential learning (Eqn. \ref{eqn:evDigammaloss}). Consider an input $\mathbf{x}$ with one-hot ground truth label of $\mathbf{y}$, $\sum_{k=1}^K y_k = 1$. For this evidential framework, the loss is given by
\begin{align}
    \mathcal{L}^{\texttt{CE}}(\mathbf{x}, \mathbf{y}) &= \sum_{j=1}^K y_k \Big( \Psi(S) - \Psi(\alpha_k) \Big)  =  \Psi(S) - \Psi(\alpha_{gt})
\end{align}
Where $\alpha_{gt}$ represents the output Dirichlet parameter for the ground truth class i.e. $y_{gt} = 1$, $y_{\neq gt} = 0$, and $\Psi(.)$ represents the Digamma function, and for $z\geq 1$, is given by
\begin{align*}
    \Psi(z) &= \frac{d}{dz} \log \Gamma(z) = \frac{d}{dz} \bigg( -\gamma z - \log z + \sum_{n=1}^{\infty}\Big( \frac{z}{n} - \log \big(1 + \frac{z}{n}\big)\Big) \bigg) 
    = -\gamma -\frac{1}{z} + z\sum_{n=1}^{\infty} \frac{1}{n(n+z)} \\
\end{align*}
Here, $\gamma$ is the Euler–Mascheroni constant, and $\Gamma(.)$ is the gamma function,  Using Weierstass's definition of gamma function \cite{knopp1996weierstrass} for values outside negative integers that is given by
\begin{align*}
    \Gamma(z) = \frac{e^{-\gamma z}}{z} \prod_{n=1}^{\infty} \Big(1 + \frac{z}{n}\Big)^{-1} e^{\frac{z}{n}}
\end{align*}
Using the definition of the digamma functions, the loss updates as 
\begin{align}
    \mathcal{L}^{\texttt{CE}}(\mathbf{x}, \mathbf{y})  =  \Psi(S) - \Psi(\alpha_{gt}) = \frac{1}{\alpha_{gt}} - \frac{1}{S} + 
    S\sum_{n=1}^{\infty} \frac{1}{n(n+S)} - \alpha_{gt} \sum_{n=1}^{\infty} \frac{1}{n(n+\alpha_{gt})}
\end{align}
The derivative of the digamma function is bounded and is given by
\begin{align*}
    \frac{\partial \Psi(z)}{\partial z} &= \frac{\partial }{\partial z} \bigg( -\gamma -\frac{1}{z} + \sum_{n=1}^{\infty} \frac{1}{n} - \frac{1}{n+z} \bigg) = \frac{1}{z^2} + \sum_{n=1}^{\infty}\frac{1}{(n+z)^2} \\
    \frac{1}{z^2} <& \frac{\partial \Psi(z)}{\partial z} < \frac{1}{z^2} + \frac{\pi^2}{6} , \quad z \geq 1
\end{align*}
With this, we can compute the gradients of the loss with respect to the logits as 
\begin{align} \label{eq:gradDigap}
    \text{grad}_k &= \frac{\partial \mathcal{L}^{\texttt{CE}}(\mathbf{x}, \mathbf{y})}{\partial o_k} 
    = \frac{\partial   }{\partial \alpha_k} \big(\Psi(S) - \Psi(\alpha_{gt}) \big)\frac{\partial  \alpha_k }{\partial o_k} \
    = \Big(\frac{1}{S^2} + \sum_{i=1}^{\infty}\frac{1}{(n + S)^2} - \frac{y_k}{\alpha_{gt}^2} - \sum_{i=1}^{\infty}\frac{y_k}{(n + \alpha_{gt})^2}\Big)\frac{\partial  e_k }{\partial o_k}
\end{align}
\textbf{Case I:} $\texttt{ReLU}(.)$ to transform logits to evidence

For any zero-evidence sample with $\texttt{ReLU}(.)$ used to transform the logits to evidence, the logits $o_k$ satisfy the relationship $o_k \leq 0 \; \forall \; k \;
\implies \frac{\partial e_k}{\partial o_k} = 0 
\implies\frac{\partial \mathcal{L}^{\texttt{CE}}(\mathbf{x}, \mathbf{y})}{\partial o_k}  = 0 \; \forall k \in [1,K]$

\textbf{Case II:} $\texttt{SoftPlus}(.)$ to transform logits to evidence.
Considering Eqn. \ref{eqn:gradSoftplusActivation} and Eqn \ref{eq:gradDigap}, the gradient of the loss with respect to the logits becomes 
\begin{align}
    \text{grad}_k = \frac{\partial \mathcal{L}^{\texttt{CE}}(\mathbf{x}, \mathbf{y})}{\partial o_k} = \Big(\frac{1}{S^2} + \sum_{i=1}^{\infty}\frac{1}{(n + S)^2} - \frac{y_k}{\alpha_{gt}^2} - \sum_{i=1}^{\infty}\frac{y_k}{(n + \alpha_{gt})^2}\Big)\text{Sigmoid}(o_k)
\end{align}

\textbf{Case III:} $\exp(.)$ to transform logits to evidence. Considering Eqn. \ref{eqn:gradExpActivation} and Eqn \ref{eq:gradDigap}, the gradient of the loss with respect to the logits becomes 
\begin{align} \label{eq:digammaExpGrad}
    \text{grad}_k = \frac{\partial \mathcal{L}^{\texttt{CE}}(\mathbf{x}, \mathbf{y})}{\partial o_k} = \Big(\frac{1}{S^2} + \sum_{i=1}^{\infty}\frac{1}{(n + S)^2} - \frac{y_k}{\alpha_{gt}^2} - \sum_{i=1}^{\infty}\frac{y_k}{(n + \alpha_{gt})^2}\Big) (\alpha_k - 1)
\end{align}
For zero-evidence sample with $\texttt{SoftPlus}(.)$ used to transform the logits to evidence, the logits $o_k \rightarrow -\infty \implies \text{Sigmoid}(o_k) \rightarrow 0 \; \& \; \frac{\partial e_k}{\partial o_k} \rightarrow 0$. Similarly, for zero-evidence sample with $\exp(.)$ used to transform the logits to evidence, $\alpha_k \rightarrow 1 \implies \frac{\partial e_k}{\partial o_k} \rightarrow 0$. Moreover, there is no term in the first part of the loss gradient (see Eqn. \ref{eqn:apgradMseInt}) to counterbalance these zero-approaching gradient terms. 

 {The gradient of the loss with respect to all the nodes is zero for all the considered cases.} Since the gradient of the loss with respect to all the nodes is zero for all three cases, there is no update to the model from such samples. Thus, the evidential models fail to learn from such zero-evidence samples in all cases. 
\end{proof}

\section{Regularization in the Evidential Classification Models}\label{app:evIncReg}
Based on the evidence $\mathbf{e}$, beliefs $\mathbf{b}$, and the Dirichlet parameters $\bm{\alpha}$, various regularization terms have been introduced that aim to penalize the incorrect evidence/incorrect belief of the model, leading to the model with accurate uncertainty estimates. Here, we briefly summarize the key regurlaizations:
\begin{enumerate}

    \item  Introduce a forward KL regularization term as in EDL \cite{sensoy2018evidential} that regularizes the model to output no incorrect evidence. 
    \begin{align}
    \label{appeq:klsensoy}
    \begin{split}       \mathcal{L}_{\texttt{reg}}^{\texttt{EDL}}({\bm x},{\bm y}) &=  \text{KL}\big(\texttt{Dir}({\bm p}|\boldsymbol{\tilde\alpha}) ||\texttt{Dir}({\bm p}|\bm{1})\big) = \log \Big( \frac{\Gamma \sum_{k=1}^K \tilde \alpha_{k}}{\Gamma(K) \prod_{k=1}^K \Gamma \tilde \alpha_{k}} \Big)  + 
       \sum_{k =1}^K ({ \tilde \alpha_{k} -1 }) \bigg[ \psi(\tilde \alpha_{k}) - \psi \Big( \sum_{j = 1}^K \tilde\alpha_{j} \Big) \bigg] 
    \end{split}
    \end{align}
    Where $\boldsymbol{\tilde{\alpha}} = {\bm y} + (\bm{1} - {\bm y}) \odot \boldsymbol{\alpha} = (\tilde{\alpha}_1, \tilde{\alpha}_2,...\tilde{\alpha}_N)$ parameterize a dirichlet distribution, $\tilde{\alpha}_{i = gt} = 1, \tilde{\alpha}_{i} = \alpha_{i} \forall i \neq gt $. Here, the KL regularization term encourages the Dirichlet distribution based on the incorrect evidence i.e., $\text{Dir}({\bm p}|\boldsymbol{\tilde\alpha})$ to be flat which is possible when there is no incorrect evidence. From Eqn. \ref{appeq:klsensoy}, we can see that the regularization term, introduces digamma functions for the loss and may require evaluation of higher-order polygamma functions for challenging problems (e.g. involving bi-level optimizations as in MAML \cite{finn2017model}).
    
    
    \item Introduce an incorrect evidence regularization term as in ADL~\cite{shi2020multifaceted} that is the sum of the incorrect evidence for a sample
    \begin{align}
    \label{eq:incevreg_ap}
    \mathcal{L}_{\texttt{reg}}^{\texttt{ADL}}({\bm x},{\bm y}) =  \sum_{k=1}^{K} \big(\mathbf{e} \odot (\mathbf{1}-\mathbf{y})\big)_k = \sum_{k=1}^K e_k \times (1 - y_k) 
    \end{align}
    Here, $\odot$ represents element-wise product. The evidence for a class $e_k$ is only restricted to be non-negative and can take large positive values leading to large variation in the overall loss. 
    
    \item Introduce incorrect belief-based regularization as in Units-ML~\cite{Pandey_2022_CVPR} 
    \begin{align}
    \label{eq:inbelreg_apGradAnalysis}
    \mathcal{L}_{\texttt{reg}}^{\texttt{Units}}({\bm x},{\bm y}) &=  \sum_{k=1}^{K} \big(\frac{\mathbf{e}}{S} \odot (\mathbf{1}-\mathbf{y})\big)_k = \sum_{k=1}^K \frac{e_k}{S} \times (1 - y_k) 
    \end{align}
    The regularization value is bounded to be in a range of $[0, 1]$ for all the data samples, no matter how severe the mistake is.
\end{enumerate}

All three regularizations aim to guide the model such that the incorrect evidence is minimized (ideally close to zero). These regularizations help the evidential model acquire desired uncertainty quantification capabilities in evidential models. Such guidance is expected to update the model such that it maps input samples near zero-evidence regions in the evidence space. Thus, the regularization does not help address the issue of learning from zero-evidence samples and is likely to hurt the model's learning capabilities.

\subsection{Gradient Analysis of the Incorrect Evidence Regularizations}
The regularization terms use ground truth information to consider only the incorrect evidence. Thus, the gradient of the regularization loss with respect to the ground truth node $\alpha_{gt}$ is $0$. In this analysis, we consider the gradient with respect to non-ground truth nodes i.e. $\alpha_k$, and $o_k, k \neq gt$.
\begin{enumerate}
    \item Gradient for EDL regularization (Eqn. \ref{appeq:klsensoy} )
    \begin{align}
    \begin{split}
       \mathcal{L}_{\texttt{reg}}^{\texttt{EDL}}({\bm x},{\bm y}) &=  \text{KL}\big(\texttt{Dir}({\bm p}|\boldsymbol{\tilde\alpha}) ||\texttt{Dir}({\bm p}|\bm{1})\big) = \log \Big( \frac{\Gamma \sum_{k=1}^K \tilde \alpha_{k}}{\Gamma(K) \prod_{k=1}^K \Gamma \tilde \alpha_{k}} \Big)  + 
       \sum_{k =1}^K ({ \tilde \alpha_{k} -1 }) \bigg[ \psi(\tilde \alpha_{k}) - \psi \Big( \sum_{j = 1}^K \tilde\alpha_{j} \Big) \bigg] \\
 &= \log \Gamma (S - \alpha_{gt}) - \log \Gamma(K) -\sum_{k=1}^K \log{\Gamma \tilde \alpha_{k}}   + 
 \sum_{k =1}^K ({ \tilde \alpha_{k} -1 }) \bigg[ \psi(\tilde \alpha_{k}) - \psi (S - \alpha_{gt}) \bigg] 
\end{split}
\end{align}

\begin{align*}
    &\frac{\partial \mathcal{L}_{\texttt{reg}}^{\texttt{EDL}}({\bm x},{\bm y})}{\partial \alpha_k} = \frac{\partial }{\partial \alpha_k}\bigg(\log \Gamma (S - \alpha_{gt}) - \log \Gamma(K) -
    \sum_{k=1}^K \log{\Gamma \tilde \alpha_{k}}   + \sum_{k =1}^K ({ \tilde \alpha_{k} -1 }) \bigg[ \psi(\tilde \alpha_{k}) - \psi (S - \alpha_{gt}) \bigg]\bigg) \\
    &=\psi (S - \alpha_{gt}) - \psi(\alpha_{k}) + 
    \frac{\partial}{\partial \alpha_k}\bigg(\sum_{k =1}^K ({ \tilde \alpha_{k} -1 }) \bigg[ \psi(\tilde \alpha_{k}) - \psi (S - \alpha_{gt}) \bigg]\bigg)\\
    &=\psi (S - \alpha_{gt}) - \psi(\alpha_{k}) + \psi(\alpha_{k}) -\psi (S - \alpha_{gt}) +
    ({ \alpha_{k} -1 }) \frac{\partial}{\partial \alpha_k}\bigg( \psi(\tilde \alpha_{k}) - \psi (S - \alpha_{gt}) \bigg) \\
    &=({ \alpha_{k} -1 }) \frac{\partial}{\partial \alpha_k}\bigg( \psi( \alpha_{k}) - \psi (S - \alpha_{gt}) \bigg)
    =({ \alpha_{k} -1 })\big( \psi_1( \alpha_{k}) - \psi_1 (S - \alpha_{gt}) \big)
\end{align*}
Where $\psi_1$ is the trigamma function. Further, using the definition of trigamma function,
\begin{align}
    &\frac{\partial \mathcal{L}_{\texttt{reg}}^{\texttt{EDL}}({\bm x},{\bm y})}{\partial \alpha_k} =({ \alpha_{k} -1 })\big( \psi_1( \alpha_{k}) - \psi_1 (S - \alpha_{gt}) \big)
    = ({ \alpha_{k} -1 })\bigg( \sum_{n=0}^{\infty}\frac{1}{(n+\alpha_k)^2} -  \frac{1}{(n+S - \alpha_{gt})^2} \bigg)
\end{align}
Now, the gradients with respect to the logits $o_k$ becomes
\begin{align}
    &\frac{\partial \mathcal{L}_{\texttt{reg}}^{\texttt{EDL}}({\bm x},{\bm y})}{\partial o_k} =\frac{\partial \mathcal{L}_{\texttt{reg}}^{\texttt{EDL}}({\bm x},{\bm y})}{\partial \alpha_k} \frac{\partial \alpha_k}{\partial o_k}
    = ({ \alpha_{k} -1 })\bigg( \sum_{n=0}^{\infty}\frac{1}{(n+\alpha_k)^2} -  \frac{1}{(n+S - \alpha_{gt})^2} \bigg) \times \frac{\partial e_k}{\partial o_k}
\end{align}
\textbf{Case I:} $\texttt{ReLU}(.)$ to transform logits to evidence.
The gradients with respect to the logits $o_k$ for zero evidence is zero. For all non-zero evidence, the gradient updates as $\frac{\partial e_k}{\partial o_k} = 1 \forall e_k > 0$ and 
\begin{align}
    &\frac{\partial \mathcal{L}_{\texttt{reg}}^{\texttt{EDL}}({\bm x},{\bm y})}{\partial o_k}
    = ({ \alpha_{k} -1 })\bigg( \sum_{n=0}^{\infty}\frac{1}{(n+\alpha_k)^2} -  \frac{1}{(n+S - \alpha_{gt})^2} \bigg)
\end{align}
Now, when $\alpha_k \rightarrow \infty$, the value of the gradient $\frac{\partial \mathcal{L}_{\texttt{reg}}^{\texttt{EDL}}({\bm x},{\bm y})}{\partial o_k} \rightarrow 0$. There is close to zero model update from regularization for very large incorrect evidence.

\textbf{Case II:} $\texttt{SoftPlus}(.)$ to transform logits to evidence. The gradients with respect to the logits $o_k$ is given by the sigmoid i.e. $\frac{\partial e_k}{\partial o_k} = \text{sigmoid}(o_k) \; , \;  \lim_{o_k \rightarrow \infty} \frac{\partial e_k}{\partial o_k} = 1$, and
\begin{align}
    &\frac{\partial \mathcal{L}_{\texttt{reg}}^{\texttt{EDL}}({\bm x},{\bm y})}{\partial o_k}
    = ({ \alpha_{k} -1 })\bigg( \sum_{n=0}^{\infty}\frac{1}{(n+\alpha_k)^2} -  \frac{1}{(n+S - \alpha_{gt})^2} \bigg) \sigma(\alpha_k - 1)
\end{align}
Now, similar to $\texttt{ReLU}$, when $\alpha_k \rightarrow \infty$, the value of the gradient $\frac{\partial \mathcal{L}_{\texttt{reg}}^{\texttt{EDL}}({\bm x},{\bm y})}{\partial o_k} \rightarrow 0$. There is close to zero model update from regularization for very large incorrect evidence.

\textbf{Case III:} $\exp(.)$ to transform logits to evidence. 
When using exponential non-linearity to transform the neural network output to evidence, the $\alpha_k $ is given by $\alpha_k = \exp(o_k) + 1, \frac{\partial \alpha_k}{\partial o_k} = \alpha_k - 1$. Now the gradients with respect to the neural network output $o_k$ becomes:
\begin{align}
    &\frac{\partial \mathcal{L}_{reg}^{2}({\bm x},{\bm y})}{\partial o_k} =\frac{\partial \mathcal{L}_{reg}^{2}({\bm x},{\bm y})}{\partial \alpha_k} \times \frac{\partial \alpha_k }{\partial o_k} 
    = ({ \alpha_{k} -1 })^2\bigg( \sum_{n=0}^{\infty}\frac{1}{(n+\alpha_k)^2} -  \frac{1}{(n+S - \alpha_{gt})^2} \bigg)
\end{align}
Here, the gradient values increase as $\alpha_k \rightarrow \infty$, and the gradient values do not vanish. Simply, as the incorrect evidence becomes very large, the model updates also become large in the accurate direction.

Thus, considering Case I, II, and II, we see that the incorrect evidence-based regularization with forward KL divergence is not effective in regions of incorrect evidence when using $\texttt{ReLu}$ and $\texttt{SoftPlus}$ functions to transform logits to evidence. This issue of correcting very large incorrect evidence does not appear when using $\exp$ function to transform the logits into evidence. 

\item  Gradient for ADL regularization (\cite{shi2020multifaceted} )
\begin{align}
\label{eq:incevreg_apGradAnalysis}
\mathcal{L}_{\texttt{reg}}^{\texttt{ADL}}({\bm x},{\bm y}) =  \sum_{k=1}^{K} \big(\mathbf{e} \odot (\mathbf{1}-\mathbf{y})\big)_k = \sum_{k=1}^K e_k \times (1 - y_k) = S - K - \alpha_{gt} + 1
\end{align}
Considering the gradient of the regularization with respect to the parameters $\alpha_{k}, k \neq gt$, and corresponding logits $o_k$, we get
\begin{align}
    &\frac{\partial \mathcal{L}_{\texttt{reg}}^{\texttt{ADL}}({\bm x},{\bm y}) }{\partial \alpha_k} =1 \quad \implies \frac{\partial \mathcal{L}_{\texttt{reg}}^{\texttt{ADL}}({\bm x},{\bm y}) }{\partial o_k} = \frac{ \partial e_k }{o_k}
\end{align}
When considering the $\exp$ function to transform logits to evidence,  $\frac{ \partial e_k }{o_k} = e_k = \exp(o_k)$ and the gradient value becomes very large when the model's predicted incorrect evidence value is large. This may lead to exploding gradients and stability issues in the model training. For \texttt{ReLU} and \texttt{SoftPlus} functions, the gradients in positive evidence regions are $\frac{ \partial e_k }{o_k} = 1$, and $\frac{ \partial e_k }{o_k} = \sigma(o_k)$ respectably. Thus, the gradient and corresponding model updates for high incorrect evidence are as desired. 

\item{Gradient analysis of incorrect belief regularization term as in Units-ML\cite{Pandey_2022_CVPR} }
    \begin{align}
    \mathcal{L}_{\texttt{reg}}^{\texttt{Units}}({\bm x},{\bm y}) &=  \sum_{k=1}^{K} \big(\frac{\mathbf{e}}{S} \odot (\mathbf{1}-\mathbf{y})\big)_k = \sum_{k=1}^K \frac{e_k}{S} \times (1 - y_k) = \frac{1}{S} \big(S - K - \alpha_{gt} + 1 \big)
    \end{align}

The regularization value is bounded to be in a range of $[0, 1]$ for all the data samples, no matter how severe  the mistake which may limit its effectiveness. Next, the gradient of the regularization with respect to the parameters $\alpha_k$, and logits $o_k$ is given by
\begin{align}
    \frac{\partial \mathcal{L}_{\texttt{reg}}^{\texttt{Units}}({\bm x},{\bm y}) }{\partial \alpha_k} &=\frac{\partial\Big( \frac{1}{S} \big(S - K - \alpha_{gt} + 1 \big)\Big) }{\partial \alpha_k} 
    = \frac{\alpha_{gt} + K  - 1}{S^2} 
    = \frac{e_{gt} + K  }{(K + \sum_{k=1}^K e_k) ^2}
\end{align}
\begin{align}
    \frac{\partial \mathcal{L}_{reg}^3({\bm x},{\bm y})}{\partial o_k} &=\frac{\partial \mathcal{L}_{\texttt{reg}}^{\texttt{Units}}({\bm x},{\bm y}) }{\partial \alpha_k} \times \frac{\partial \alpha_k}{\partial o_k} = \frac{e_{gt} + K }{S^2} \times \frac{\partial e_k}{\partial o_k}
\end{align}
The gradient value decreases as the number of classes $K$ in the classification problem increases. For all three transformations:  $\texttt{ReLU}$, $\texttt{SoftPlus}$, and $\exp$ to transform logits to evidence, the gradients will go to zero as the incorrect evidence increases i.e. $e_k \rightarrow \infty$ and $S\rightarrow\infty \implies \frac{\partial \mathcal{L}_{reg}^3({\bm x},{\bm y})}{\partial o_k} \rightarrow 0$. So, the regularization may be ineffective when the incorrect evidence is very high.
\end{enumerate}

\section{Impact of Non-linear Transformation}\label{sec:impactLogitsEvTranf}
\textbf{Theorem \ref{th:superirorityofExp}: }
{ For a data sample $\mathbf{x}$, if an evidential model outputs logits $\mathbf{o}_k \leq 0 $ $\forall k \in [0, K]$, the exponential activation function leads to a larger gradident update on the model parameters than \texttt{softplus} and \texttt{ReLu}.}
\begin{proof}
{
Consider an evidential loss $\mathcal{L}$, which is formally defined in Eqns. \eqref{eqn:evMSEloss}, \eqref{eqn:evDigammaloss}, and \eqref{eqn:evLogloss}, is used to train the evidential model, let ${\bf o, e} \in \mathbb{R}^K$ denote the neural network output vector before applying the activation $\mathcal{A}$, and the evidence vector, respectively, for a network with weight $w$. For a data sample {$\bf x$}, if the network outputs $o_k<0, \forall k \in [K]$, we have: }

1. \texttt{ReLu}: 
$$\frac{\partial \mathcal{L}_1}{\partial w}=\sum_k\frac{\partial \mathcal{L}_1}{\partial e_k}\frac{\partial e_k}{\partial o_k}\frac{\partial o_k}{\partial w} = 0  \quad \quad \text{(see Eqn. \ref{eqn:reluGradoE}),}
$$
2. \texttt{SoftPlus}: 
$$\frac{\partial \mathcal{L}_2}{\partial w}=\sum_k\frac{\partial \mathcal{L}_2}{\partial e_k}\frac{\partial e_k}{\partial o_k}\frac{\partial o_k}{\partial w} = \sum_k\frac{\partial \mathcal{L}_2}{\partial e_k} \frac{\partial o_k}{\partial w} \text{Sigmoid}(o_k)\quad \quad \text{( see Eqn. \ref{eqn:softplusGradoE})},
$$

3. Exponential: 
$$\frac{\partial \mathcal{L}_3}{\partial w}=\sum_k\frac{\partial \mathcal{L}_3}{\partial e_k}\frac{\partial e_k}{\partial o_k}\frac{\partial o_k}{\partial w} = \sum_k\frac{\partial \mathcal{L}_3}{\partial e_k} \frac{\partial o_k}{\partial w} \exp(o_k)=\sum_k\frac{\partial \mathcal{L}_3}{\partial e_k} \frac{\partial o_k}{\partial w} \{[1 + \exp(o_k)]\text{Sigmoid}(o_k)\} \quad \quad \text{(see Eqn. \ref{eqn:expGradoE})}
$$
{Thus, we have $\frac{\partial \mathcal{L}_3}{\partial w}\geq\frac{\partial \mathcal{L}_2}{\partial w}\geq\frac{\partial \mathcal{L}_1}{\partial w}$, which implies that $\mathcal{A}=\exp$ leads to a larger update to the network than both Softplus and ReLu. This completes the proof. Now we carry out an analysis of the three activations.}
\end{proof}

\textbf{Analysis:}

    Consider a representative $K-$class evidential classification model that trains using Type II Maximum Likelihood evidential loss. Consider an input $\mathbf{x}$ with one-hot label of $\mathbf{y}$, $\sum_{k=1}^K y_k = 1$. For this evidential framework, the Type II Maximum Likelihood loss ($\mathcal{L}^{\texttt{Log}}(\mathbf{x}, \mathbf{y})$) and its gradient with the logits $\mathbf{o}$ ( Eqn. \ref{eq:gradtype2}) are given by
\begin{align}
     \mathcal{L}^{\texttt{Log}}(\mathbf{x}, \mathbf{y}) 
     =  \log S - \sum_{k=1}^K y_k \log \alpha_k \quad \& \quad 
    \text{grad}_k = \frac{\partial \mathcal{L}^{\texttt{Log}}(\mathbf{x}, \mathbf{y}) }{\partial o_k}
    = \Big(\frac{1}{S} - \frac{y_k}{\alpha_k} \Big)\frac{\partial e_k}{\partial o_k}
\end{align}
\textbf{Case I and II:} $\texttt{ReLU(.)}$ and $\texttt{SoftPlus(.)}$ to transform logits to evidence.
\begin{itemize}
    \item \textbf{Zero evidence region:} For $\texttt{ReLU(.)}$ based evidential models, if the logits value for class $k$ i.e. $o_k$ is negative, then the corresponding evidence for class $k$ i.e. $e_k = 0$, $\frac{\partial e_k}{\partial o_k} = 0 
    \; \& \; \text{grad}_k = \frac{\partial \mathcal{L}^{\texttt{Log}}(\mathbf{x}, \mathbf{y})}{\partial o_k}  = 0$. So, there is no update to the model through the nodes that output negative logits value.  
    In the case of $\texttt{SoftPlus}(.)$ based evidential models, there is no update to the model when training samples lie in zero-evidence regions. This is possible in the condition of $o_k \rightarrow - \infty$. In other cases, there will be some small finite small update in the accurate direction from the gradient. 
    \item \textbf{Range of gradients:} The range of gradients for both $\texttt{ReLU(.)}$ and $\texttt{SoftPlus}(.)$ based evidential models are identical. Considering the gradient for the ground truth node $ i.e. y_k = 1$, the range of gradients is $[\frac{1}{K}-1, 0]$. For all other nodes other than the ground truth node i.e. $y_k = 0$, the range of gradients is $[0, \frac{1}{K}]$. So, for classification problems with a large number of classes, the gradient updates to the nodes that do not correspond to the ground truth class will be bounded in a small range and is likely to be very small.
    \item \textbf{High incorrect evidence region:} If the evidence for class $k$ is very large i.e. $e_k \rightarrow \infty$, then for $\texttt{ReLU(.)}$, $\frac{\partial e_k}{o_k} = 1$,  and for $\texttt{SoftPlus(.)}$, $\frac{\partial e_k}{o_k} = \text{Sigmoid}(o_k) \rightarrow 1, \frac{1}{\alpha_k} = \frac{1}{e_k + 1} \rightarrow 0, \frac{1}{S} \rightarrow 0, \; \& \; \text{grad}_k = \frac{\partial \mathcal{L}^{\texttt{Log}}(\mathbf{x}, \mathbf{y})}{\partial o_k}  \rightarrow 0$. For large positive model evidence, there is no update to the corresponding node of the neural network. The evidence can be further broken down into correct evidence (corresponding to the evidence for the ground truth class), and incorrect evidence (corresponding to the evidence for any other class other than the ground truth class). When the correct class evidence is large, the corresponding gradient is close to zero and there is no update to the model parameters which is desired. When the incorrect evidence is large, the model should be updated to minimize such incorrect evidence. However, the evidential models with $\texttt{ReLU}$ and $\texttt{Softplus}$ fail to minimize incorrect evidence when the incorrect evidence value is large. These necessities the need for incorrect evidence regularization terms. 
\end{itemize}

\textbf{Case III:} $\exp(.)$ to transform logits to evidence. Considering Eqn. Eqn. \ref{eq:gradtype2} and Eqn \ref{eqn:gradExpActivation}, the gradient of the loss with respect to the logits becomes 
\begin{align} 
    \text{grad}_k = \frac{\partial \mathcal{L}^{\texttt{Log}}(\mathbf{x}, \mathbf{y})}{\partial o_k} = \Big(\frac{1}{S} - \frac{y_k}{\alpha_k} \Big)(e_k) = \Big(\frac{1}{S} - \frac{y_k}{\alpha_k}\Big) (\alpha_k - 1)
\end{align}
 
\begin{itemize}
    \item \textbf{Zero evidence region:} In case of $\exp(.)$ based evidential models, except in the extreme cases of $\alpha_k \rightarrow \infty$, there will be some signal to guide the model.
    In cases outside the zero-evidence region (i.e. outside $\alpha_k \rightarrow \infty$), there will be some finite small update in the accurate direction from the gradient. Moreover, for same evidence values, the gradient of $\exp$ based model is larger than the $\texttt{SoftPlus}$ based evidential model by a factor of $1 + \exp(o_k)$. Compared to $\texttt{SoftPlus}$ models, the larger gradient is expected to help the model learn faster in low-evidence regions. 
    \item \textbf{Range of gradients:} For the ground truth node $ i.e. y_k = 1$, the range of gradients is $[-1, 0]$. For all nodes other than the ground truth node i.e. $y_k = 0$, the range of gradients is $[0, 1]$. Thus, the gradients are expected to be more expressive and accurate in guiding the evidential model compared to $\texttt{ReLU}$ and $\texttt{SoftPlus}$ based evidential models.
    \item \textbf{High evidence region:} If the evidence for class $k$ is very large i.e. $e_k \rightarrow \infty$, then $\alpha_k - 1 \thickapprox \alpha_k $ and $\text{grad}_k = \texttt{sm}_k - y_k$. In other words, the model's gradient updates become identical to the standard classification model (see Section \ref{sec:appAnalysisStandardClassificationModels}) without any learning issues.
\end{itemize}
Due to smaller zero-evidence region, more expressive gradients, and no issue of learning in high incorrect evidence region, the exponential-based evidential models { are expected to be more effective }compared to $\texttt{ReLU}$ and $\texttt{SoftPlus}$ based evidential models. 
\section{Analysis of Evidential Losses}\label{ap:evLossAnalysis}
Here, we analyze the three variants of evidential loss. As seen in Section \ref{sec:impactLogitsEvTranf}, $\exp$ function is { expected to be} superior to $\texttt{ReLU}$ and $\texttt{SoftPlus}$ functions to transform the logits to evidence. Thus, in this section, we consider $\exp$ function to transform the logits into evidence. However, the analysis holds true for all three functions.
\begin{enumerate}
    \item Bayes risk with the sum of squares loss (Eqn. \ref{eqn:evMSEloss})
    \begin{align}
    \mathcal{L}^{\texttt{MSE}}(\mathbf{x}, \mathbf{y}) &= \sum_{j=1}^K (y_j - \frac{\alpha_j}{S})^2 + \frac{\alpha_j (S - \alpha_j)}{S^2(S+1)}
\end{align}
The loss can be simplified as 
\begin{align}
    \mathcal{L}^{\texttt{MSE}}(\mathbf{x}, \mathbf{y})  &= \sum_{j=1}^K (y_j - \frac{\alpha_j}{S})^2 + \frac{\alpha_j (S - \alpha_j)}{S^2(S+1)}\\
    &= 1 - \frac{2 \alpha_{gt}}{S} + \frac{\sum_k \alpha_k^2}{S^2} + \frac{2\sum_i \sum_j \alpha_i\alpha_j}{S^2(S+1)} \\
    &= 1 - \frac{2 \alpha_{gt}}{S} + \frac{\sum_k \alpha_k^2 + 2\sum_i \sum_j \alpha_i\alpha_j}{S^2} 
    +\frac{2\sum_i \sum_j \alpha_i\alpha_j}{S^2(S+1)} -\frac{2\sum_i \sum_j \alpha_i\alpha_j}{S^2} \\
    &= 2 - \frac{2 \alpha_{gt}}{S} + \frac{2\sum_i \sum_j \alpha_i\alpha_j}{S^2} \Big[\frac{1}{(S+1)} -1 \Big] \\
    &= 2 - \frac{2 \alpha_{gt}}{S} - \frac{2\sum_i \sum_j \alpha_i\alpha_j}{S(S+1)} 
\end{align}
The range of the two components in the loss is $0 \leq \frac{2\alpha_{gt}}{S} + \frac{2\sum_i\sum_j\alpha_i\alpha_j}{S(S+1)} \leq 2$ and the loss is bounded in the range $[0, 2]$. In other words, the loss for any sample in the entire sample space is bounded in the range of $[0, 2]$ no matter how severe the mistake is. Such bounded loss is expected to restrict the model's learning capacity.
    \item Bayes risk with cross-entropy loss (Eqn. \ref{eqn:evDigammaloss})
\begin{align}
    \mathcal{L}^{\texttt{CE}}(\mathbf{x}, \mathbf{y}) &= \sum_{j=1}^K y_k \Big( \Psi(S) - \Psi(\alpha_k) \Big) =  \Psi(S) - \Psi(\alpha_{gt})
\end{align}
Where $\Psi(.)$ is the Digamma function, and $\Gamma$ is the gamma function. The functions and their gradients are defined as 
\begin{align}
    \Gamma(z) &= \frac{e^{-\gamma z}}{z} \prod_{n=1}^{\infty} \Big(1 + \frac{z}{n}\Big)^{-1} e^{\frac{z}{n}}\\
    \Psi(z) &= \frac{d}{dz} \log \Gamma(z) = \frac{d}{dz} \bigg( -\gamma z - \log z + \sum_{n=1}^{\infty}\Big( \frac{z}{n} - \log \big(1 + \frac{z}{n}\big)\Big) \bigg) \\
    &= -\gamma -\frac{1}{z} + \sum_{n=1}^{\infty} \frac{1}{n} - \frac{1}{n+z} \\
    \frac{\partial \Psi(z)}{\partial z} &= \frac{\partial }{\partial z} \bigg( -\gamma -\frac{1}{z} + \sum_{n=1}^{\infty} \frac{1}{n} - \frac{1}{n+z} \bigg) = \frac{1}{z^2} + \sum_{n=1}^{\infty}\frac{1}{(n+z)^2}
\end{align}
    Now, the Bayes risk with cross-entropy loss becomes
    \begin{align}
    \mathcal{L}^{\texttt{CE}}(\mathbf{x}, \mathbf{y}) &=  \Psi(S) - \Psi(\alpha_{gt}) \\
    &= \frac{1}{\alpha_{gt}} - \frac{1}{S} + 
    S\sum_{n=1}^{\infty} \frac{1}{n(n+S)} - \alpha_{gt} \sum_{n=1}^{\infty} \frac{1}{n(n+\alpha_{gt})}
\end{align}
Both the infinite sums ($\sum_{n=1}^{\infty} \frac{1}{n(n+S)}$ and $\sum_{n=1}^{\infty} \frac{1}{n(n+\alpha_{gt})}$) converge and lie in the range of $0$ to $\frac{\pi^2}{6}$. The minimum possible value of this loss is 0 when $\alpha_{gt} \rightarrow \infty \& S \thickapprox \alpha_{gt}$. The maximum possible value is $\infty$ when only $S \rightarrow \infty$. The loss lies in the range $[0, \infty]$ and is more expressive compared to MSE-based evidential loss. 

Considering the gradient of the loss with respect to the ground truth node (i.e. $\alpha_{gt}, y_{gt}= 1$),

\begin{align}
    \frac{\partial  \mathcal{L}^{\texttt{CE}}(\mathbf{x}, \mathbf{y}) }{\partial \alpha_{gt}} &= \frac{\partial   }{\partial \alpha_{gt}} \Psi(S) - \Psi(\alpha_{gt}) = \frac{1}{S^2} + \sum_{n=1}^{\infty}\frac{1}{(n + S)^2} - \frac{1}{\alpha_{gt}^2} - \sum_{n=1}^{\infty}\frac{1}{(n + \alpha_{gt})^2}
\end{align}
As $\alpha_{gt} < S$, the gradient is always negative. Thus, the model aims to maximize the correct evidence $\alpha_{gt}$.
Considering the gradient of the loss with respect to nodes not corresponding to the ground truth (i.e. $\alpha_k, k \neq gt, y_k = 0$), \begin{align}
    \frac{\partial  \mathcal{L}^{\texttt{CE}}\mathbf{x}, \mathbf{y}) }{\partial \alpha_k} &= \frac{\partial   }{\partial \alpha_k} \Psi(S) - \Psi(\alpha_{gt}) =  \frac{\partial   \Psi(S)  }{\partial S} \frac{\partial   S  }{\partial \alpha_k} = \frac{1}{S^2} + \sum_{n=1}^{\infty}\frac{1}{(n + S)^2} \\
    \frac{\partial  \mathcal{L}^{\texttt{CE}}\mathbf{x}, \mathbf{y}) }{\partial o_k} &= \frac{\partial  \mathcal{L}^{\texttt{CE}}\mathbf{x}, \mathbf{y}) }{\partial \alpha_k} \times \frac{\alpha_k}{o_k} = \Big(\frac{1}{S^2} + \sum_{n=1}^{\infty}\frac{1}{(n + S)^2} \Big) (\alpha_k - 1)
\end{align}
The gradient at nodes that do not correspond to ground truth is always non-negative. However, this gradient is also minimum and 0 when $S  \rightarrow \infty\, \& \, \alpha_k \rightarrow \infty$. This is an undesired behavior as the model may be encouraged to always increase the evidence for all the classes. Moreover, the gradient is zero and there is no update to the nodes when $S \rightarrow \infty, \& \, \alpha_k \rightarrow \infty$. So, the incorrect evidence regularization to penalize the incorrect evidence is essential for the evidential model trained with this loss. 
    \item Type II Maximum Likelihood loss (Eqn. \ref{eqn:evLogloss})
\begin{align}
    \mathcal{L}^{\texttt{Log}}(\mathbf{x}, \mathbf{y}) = \sum_{k=1}^K y_k \Big( \log(S) - \log (\alpha_k) \Big) =\log(S) - \log (\alpha_{gt}) 
\end{align}
The loss is bounded in the range of $[0,\infty]$ as the loss is minimum and $0$ when $\alpha_{gt} \rightarrow S \rightarrow \infty$, and maximum loss when $\alpha_{gt} << S \, \& \, S \rightarrow \infty$. Thus, the loss is more expressive compared to MSE based evidential loss.
Now, the gradient of the loss is given by 
\begin{align} 
\frac{\partial \mathcal{L}^{\texttt{Log}}(\mathbf{x}, \mathbf{y})}{\partial o_k} 
    = \frac{1}{S}\frac{\partial S}{\partial o_k} - y_k \frac{1}{\alpha_k} \frac{\partial \alpha_k}{\partial o_k} 
    = \Big(\frac{1}{S} - \frac{y_k}{\alpha_k} \Big)\frac{\partial e_k}{\partial o_k} = \Big(\frac{1}{S} - \frac{y_k}{\alpha_k} \Big)(\alpha_k - 1)
\end{align}
Here, when $S  \rightarrow \infty\, \& \, \alpha_k \rightarrow \infty$, the gradient becomes $\frac{\partial \mathcal{L}^{\texttt{Log}}(\mathbf{x}, \mathbf{y})}{\partial o_k} \rightarrow (1 - y_k)$. This is highly desirable behavior for the model as it aims to minimize the evidence for the incorrect class and there will be no update to the node corresponding to the ground truth class if $\alpha_{k} = \alpha_{gt}, y_{gt} = 1$. Thus, the Type II based issue is expected to be superior to the other two losses as the range of loss is optimal (i.e. in the range $[0, \infty]$), and no learning issue arises for samples with high incorrect evidence. 
\end{enumerate}

\section{Additional Experiments and Results}\label{ap:AdditionalExpResults}
We first present the details of the models, hyperparameter settings, {clarification regarding dead neuron issue}, and experiments used in the work in Section \ref {apsubsec:hyperparameterdetails}. We then present additional results and discussions, including Few-shot classification, {and 200-class tiny-ImageNet Classification} results, that show the effectiveness of the proposed model RED in Section \ref{ap:EffectivenessOfRed}. Finally discuss some limitations and potential future works in Section \ref{ap:limitationsAndFutureWorks}.

\subsection{Hyperparameter details} 
\label{apsubsec:hyperparameterdetails}
For Table \ref{tab:ClassificationPerformanceCOmparison} results, $\lambda_1 = 1.0$ was used for MNIST experiments, $\lambda_1 = 0.1$ was used for Cifar10 experiments, and $\lambda_1 = 0.001$ was used for Cifar100 experiments. Table \ref{ap:CompleteMNISTResults}, \ref{ap:CompleteCifar10Results}, and \ref{ap:CompleteCifar100Results} present complete results across the hyperparameter values and experiment settings. MNIST model was trained on the LeNet model \cite{sensoy2018evidential} for $50$ epochs, and Cifar10/Cifar100 models were trained on Resnet-18 based classifier \cite{he2016deep} for 200 epochs. Few-shot classification experiments were carried out with $\lambda_1 = 0.1$ using Resnet-12 based classifier \cite{chen2021meta}. All results presented in this work are from local reproduction. MNIST models were trained with learning rate of 0.0001 and Adam optimizer \cite{kingma2014adam}, and all remaining models were trained with learning rate of 0.1 and Stochastic Gradient Descent optimizer with momentum. Tabular results represent the mean and standard deviation from 3 independent runs of the model. In the proposed model RED, correct evidence regularization is weighted by the parameter $\lambda_{\texttt{cor}}$ whose value is given by the predicted vacuity $\nu$. $\lambda_{\texttt{cor}}$ is treated as hyperparameter, \ie constant weighting term in the loss during model update.

\subsection{Dead Neuron Issue Clarification}
{
Instead of using ReLU as an activation function in a standard deep neural network, evidential models introduce ReLU as non-negative transformation function in the output layer to ensure that the predicted evidence is non-negative to satisfy the requirement of evidential theory. This non-negative evidence vector parameterizes a Dirichlet prior for fine-grained uncertainty quantification that covers second-order uncertainty, including vacuity and dissonance. We theoretically and empirically show the learning deficiency of ReLU based evidential models and justify the advantage of using an exponential function to output (non-negative) evidence. We further introduce a correct evidence regularization term in the loss that addresses the learning deficiency from zero-evidence samples. The ``dead neuron” issue in the activation functions has been studied, and ReLU variations such as Exponential Linear Unit, Parametric ReLU, and Leaky ReLU have been developed to address the issue. But, these activation functions will not be theoretically sound in the evidential framework as
they are can lead to negative evidences. In this case, they can not serve as Dirichlet parameters that are interpreted as pseudo counts.}

\subsection{Effectiveness of Regularized Evidential Model  (RED)}\label{ap:EffectivenessOfRed}
\subsubsection{Evidential Activation Function.}
In this section, we present additional results (for section \ref{subsec:impactActivationFunction}) with the MNIST classification problem using the LeNet model to empirically validate Theorem \ref{th:superirorityofExp}. We carry out experiments for evidential models trained using all three evidential losses: Evidential MSE loss in \eqref{eqn:evMSEloss}, Evidential cross-entropy loss in \eqref{eqn:evDigammaloss}, and Evidential Log loss in \eqref{eqn:evLogloss} with $\lambda_1 = \{0.0, 1.0, \& 10.0\}$.  As can be seen in Figure \ref{fig:appCorEvRegImpactMSELoss}, \ref{fig:appCorEvRegImpactDigammaLoss}, and \ref{fig:appCorEvRegImpactLogLoss}, using $\exp$ activation for transforming logits to evidence leads to superior performance in all settings compared to $\texttt{ReLU}$ and $\texttt{Softplus}$ based evidential models that empirically validates Theorem \ref{th:superirorityofExp}. 
\begin{figure}[ht!] 
\centering
\subfigure[Trend for $\lambda_1 = 0.0$]{
  \includegraphics[width=0.30\linewidth]{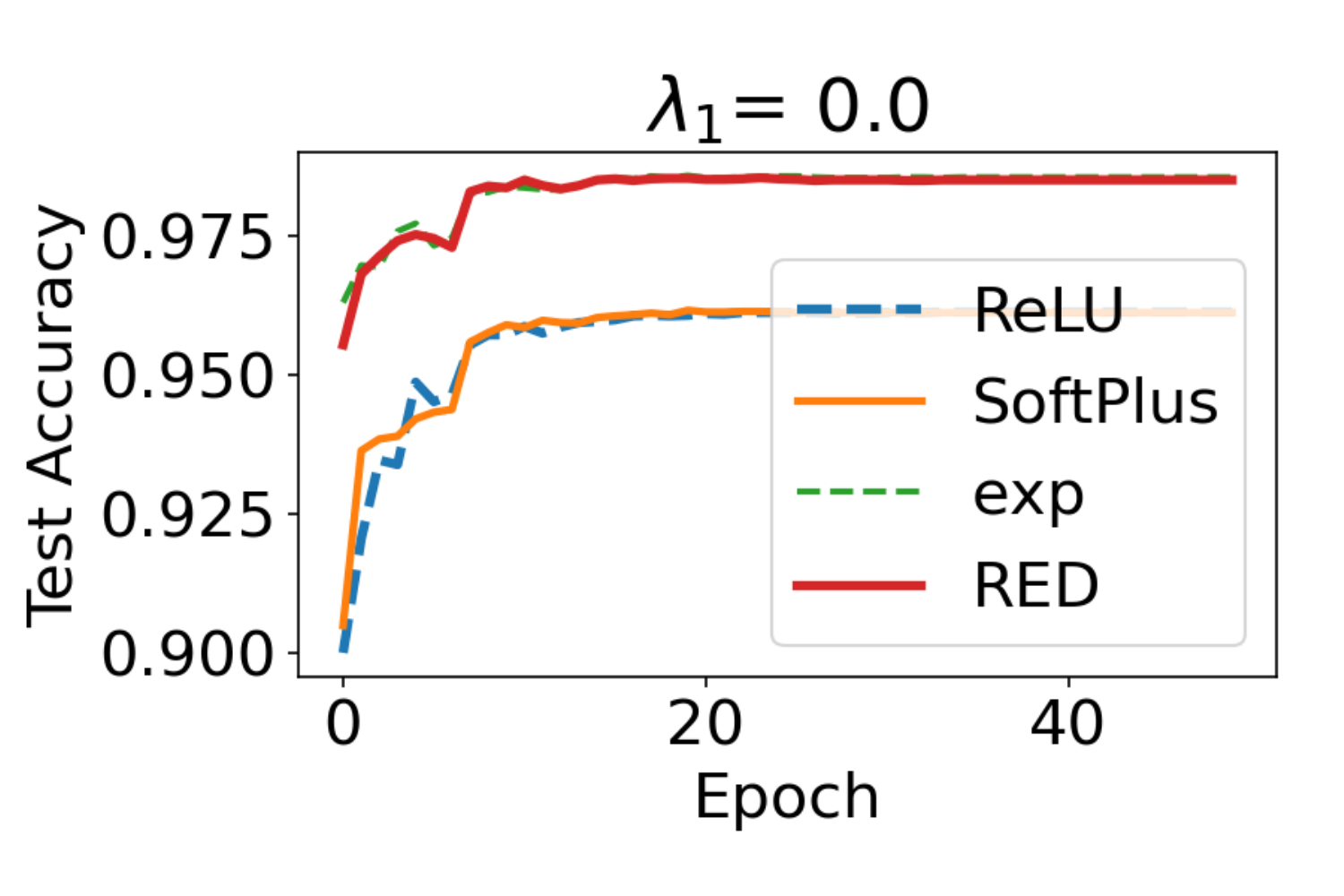}
}
\subfigure[Trend for $\lambda_1 = 1.0$]{
  \includegraphics[width=0.30\linewidth]{images/Mnist/impactActivationNew/new_Evid_act_mnist_exp_act_mse_lss_True_drp_1.0_kl.pdf}
}
\subfigure[Trend for $\lambda_1 = 10.0$]{
  \includegraphics[width=0.30\linewidth]{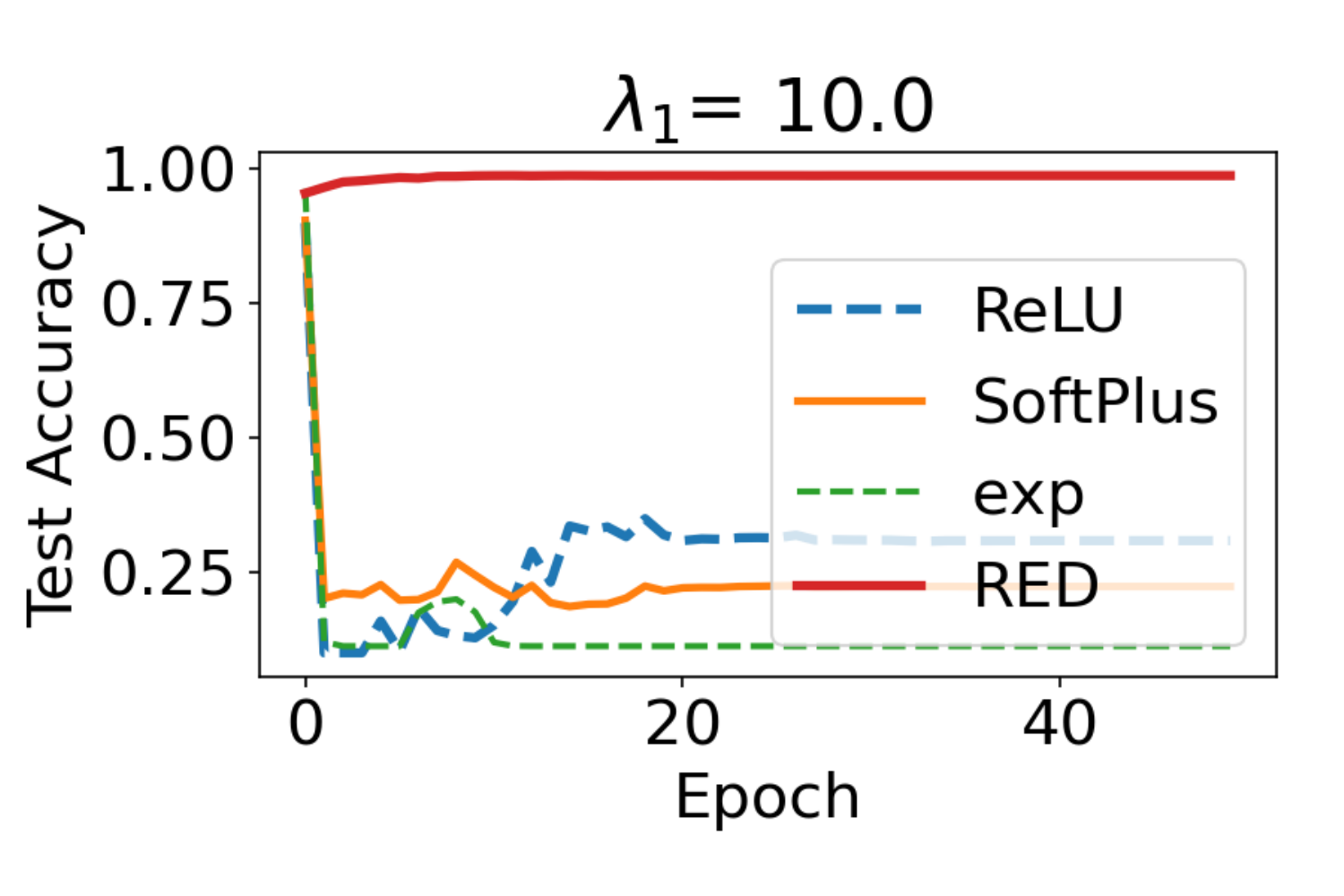}
}
\caption{Impact of Evidential Activation to the test set accuracy of the model trained with MSE based evidential loss (Eqn. \ref{eqn:evMSEloss})}
\label{fig:appCorEvRegImpactMSELoss}
\end{figure} 

\begin{figure}[ht!] 
\centering
\subfigure[Trend for $\lambda_1 = 0.0$]{
  \includegraphics[width=0.30\linewidth]{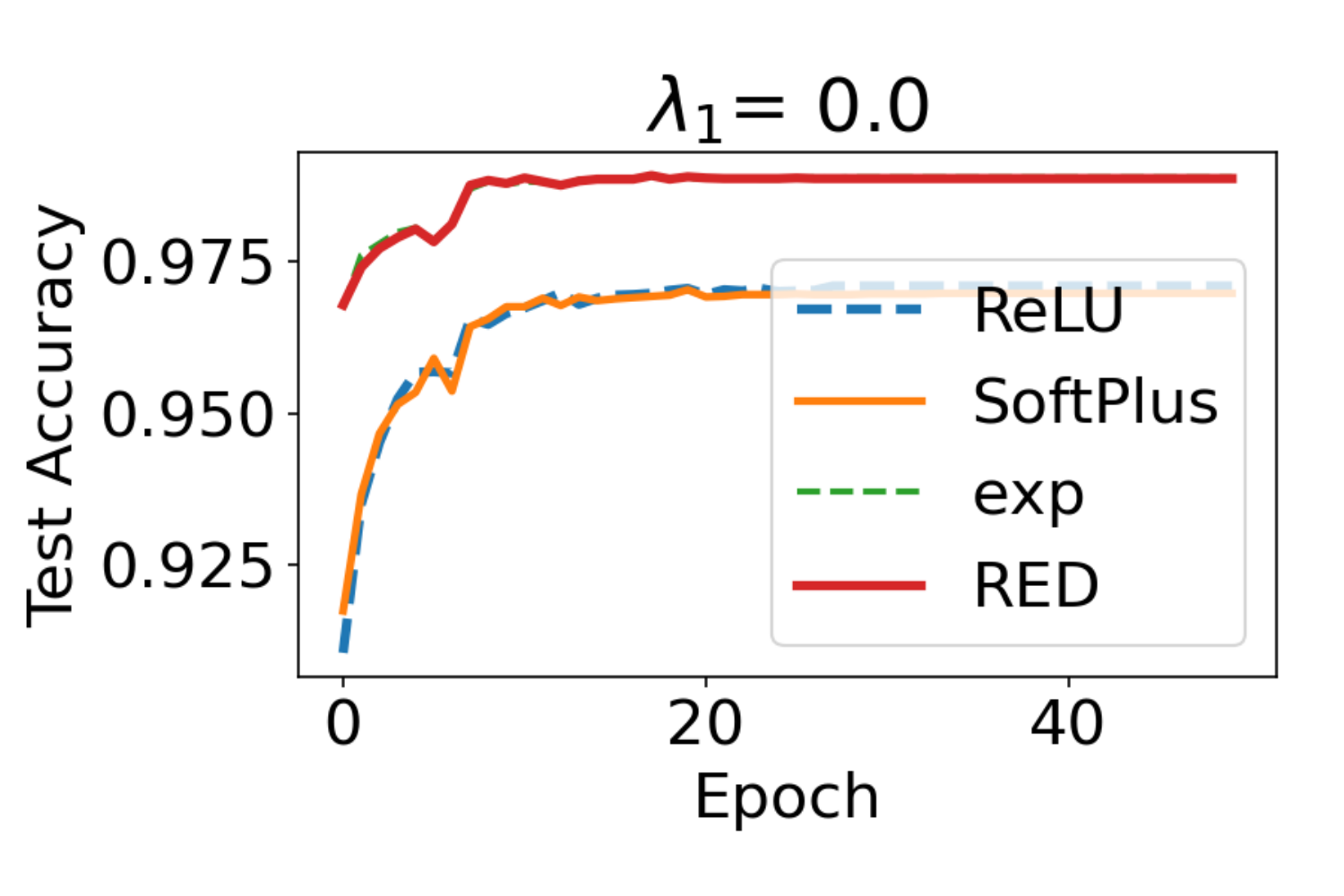}
}
\subfigure[Trend for $\lambda_1 = 1.0$]{
  \includegraphics[width=0.30\linewidth]{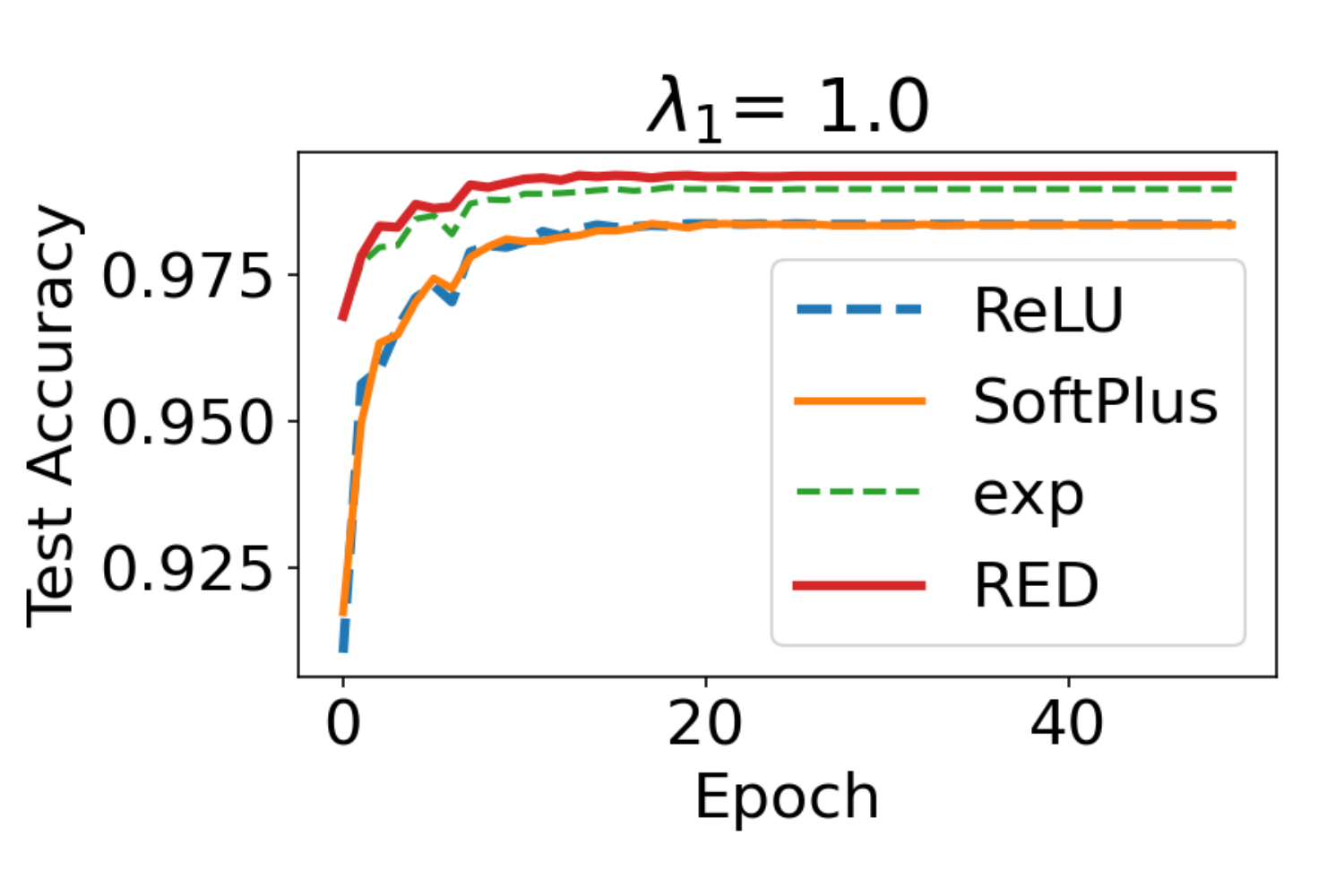}
}
\subfigure[Trend for $\lambda_1 = 10.0$]{
  \includegraphics[width=0.30\linewidth]{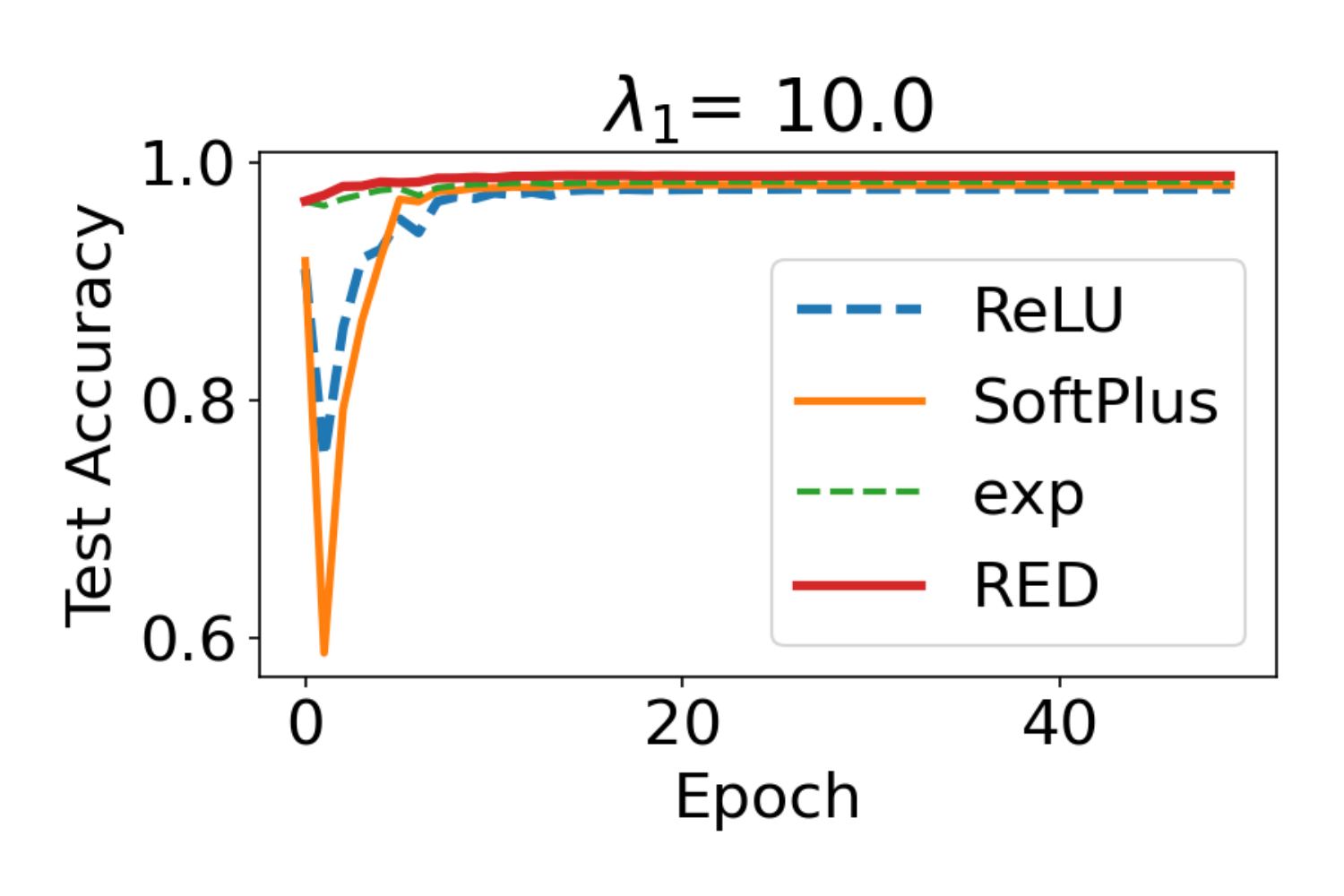}
}
\caption{Impact of Evidential Activation to test set accuracy of the model trained with cross-entropy based evidential loss (Eqn. \ref{eqn:evDigammaloss})}
\label{fig:appCorEvRegImpactDigammaLoss}
\vspace{-5mm}
\end{figure} 
\begin{figure}[ht!] 
\centering
\subfigure[Trend for $\lambda_1 = 0.0$]{
  \includegraphics[width=0.30\linewidth]{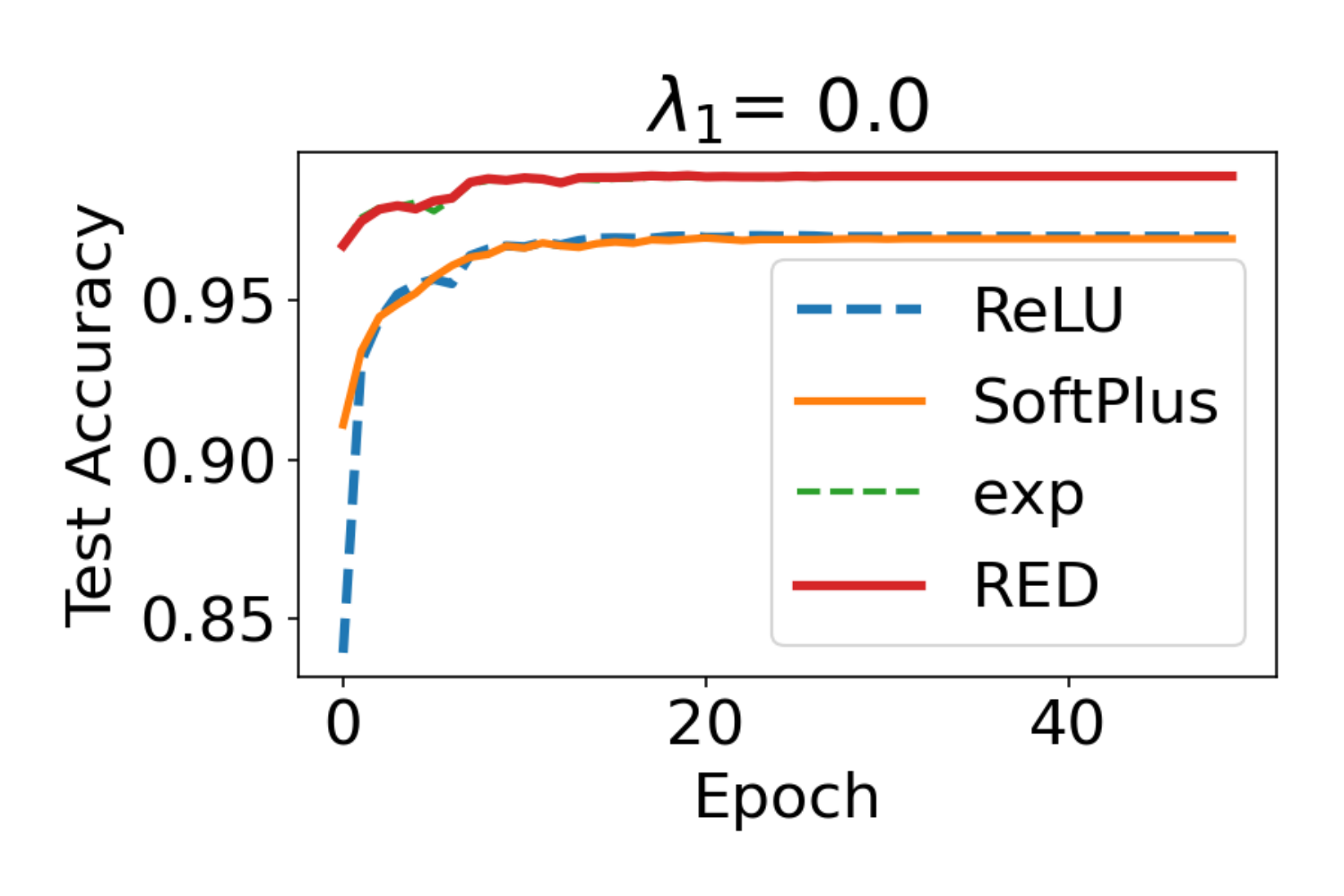}
}
\subfigure[Trend for $\lambda_1 = 1.0$]{
  \includegraphics[width=0.30\linewidth]{images/Mnist/impactActivationNew/new_Evid_act_mnist_exp_act_log_lss_True_drp_1.0_kl.pdf}
}
\subfigure[Trend for $\lambda_1 = 10.0$]{
  \includegraphics[width=0.30\linewidth]{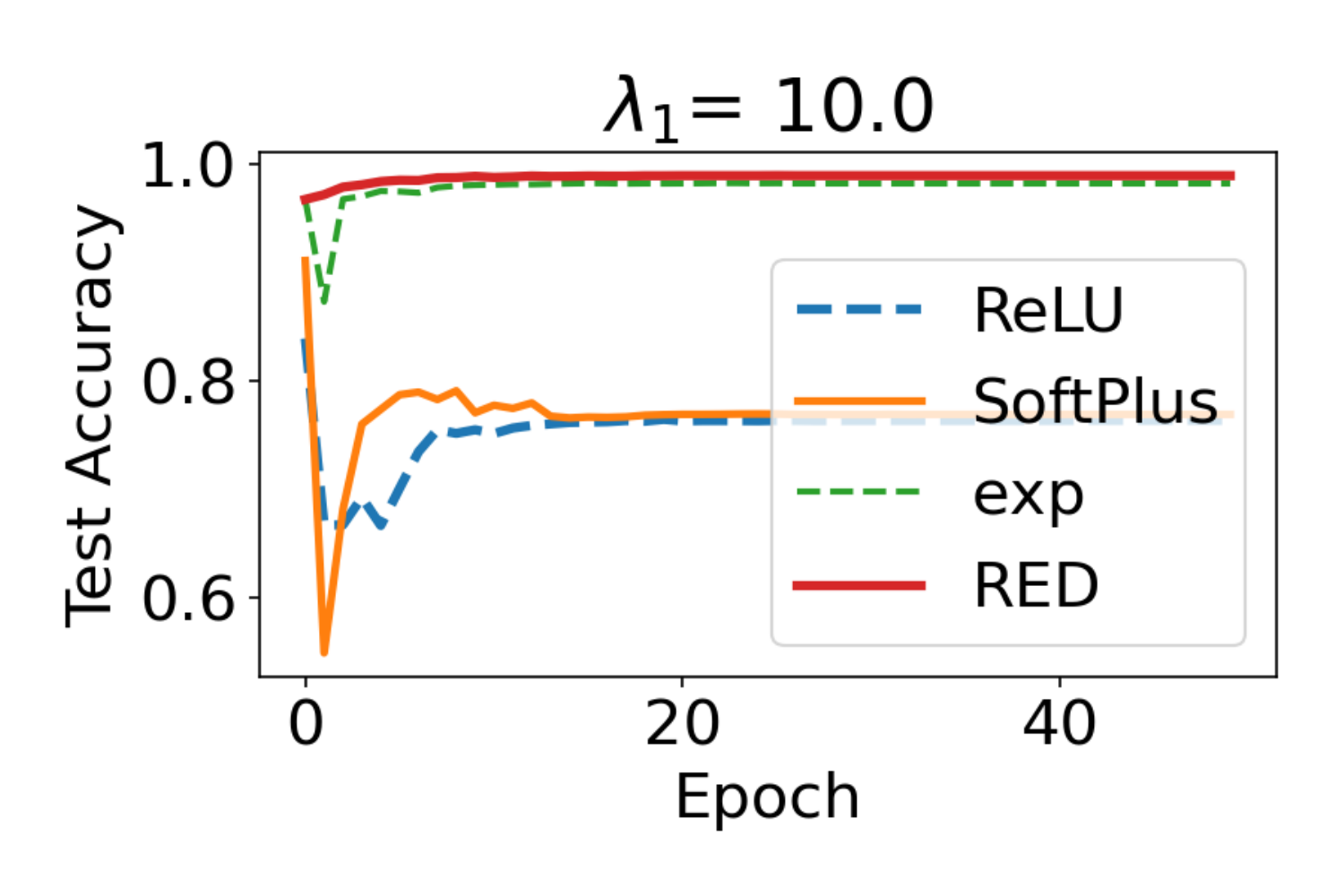}
}
\caption{Impact of Evidential Activation to the test set accuracy of the model trained with Type II based evidential loss (Eqn. \ref{eqn:evLogloss})}
\label{fig:appCorEvRegImpactLogLoss}
\end{figure} 

\subsubsection{Correct Evidence Regularization} \label{app:secImpactOfCorrectEvReg}
We introduce the novel correct evidence regularization term to train the evidential model (Section \ref{sec:evModelTrainingLoss}). 
In this section, we present additional results for the evidential model that uses $\exp$ activation. We trained the model using  evidential losses with different incorrect evidence regularization strengths ( $\lambda_1 = 0, \;  1.0 \; \& \; 10.0$). As can be seen( Figure \ref{fig:appCorEvRegImpactMSE}, and  \ref{fig:appCorEvRegImpactDig}), the model with proposed correct-evidence regularization leads to improved generalization compared to the baseline model as the proposed correct-evidence regularization term enables the evidential model to learn from zero-evidence samples instead of ignoring them. Moreover, even though strong incorrect evidence regularization hurts both model's generalization, the proposed regularization leads to a more robust model that generalizes better. Finally, the MSE-based evidential model is hurt the most with strong incorrect evidence regularization as thee MSE based evidential loss is bounded in the range $[0, 2]$, and the incorrect evidence-regularization term may easily dominate the overall loss compared to other evidential losses. This can be seen in Figure \ref{fig:appCorEvRegImpactMSE}(c) where the incorrect evidence regularization strength is large i.e. $\lambda_1 = 10.0$ and the evidential model fails to train. Due to strong incorrect evidence regularization, the model may have learned to map all training samples to zero-evidence region. However, with the proposed regularization, the model continues to learn and achieves good generalization performance.  
\begin{figure}[ht!] 

\centering
\subfigure[Trend for $\lambda_1 = 0.0$]{
  \includegraphics[width=0.30\linewidth]{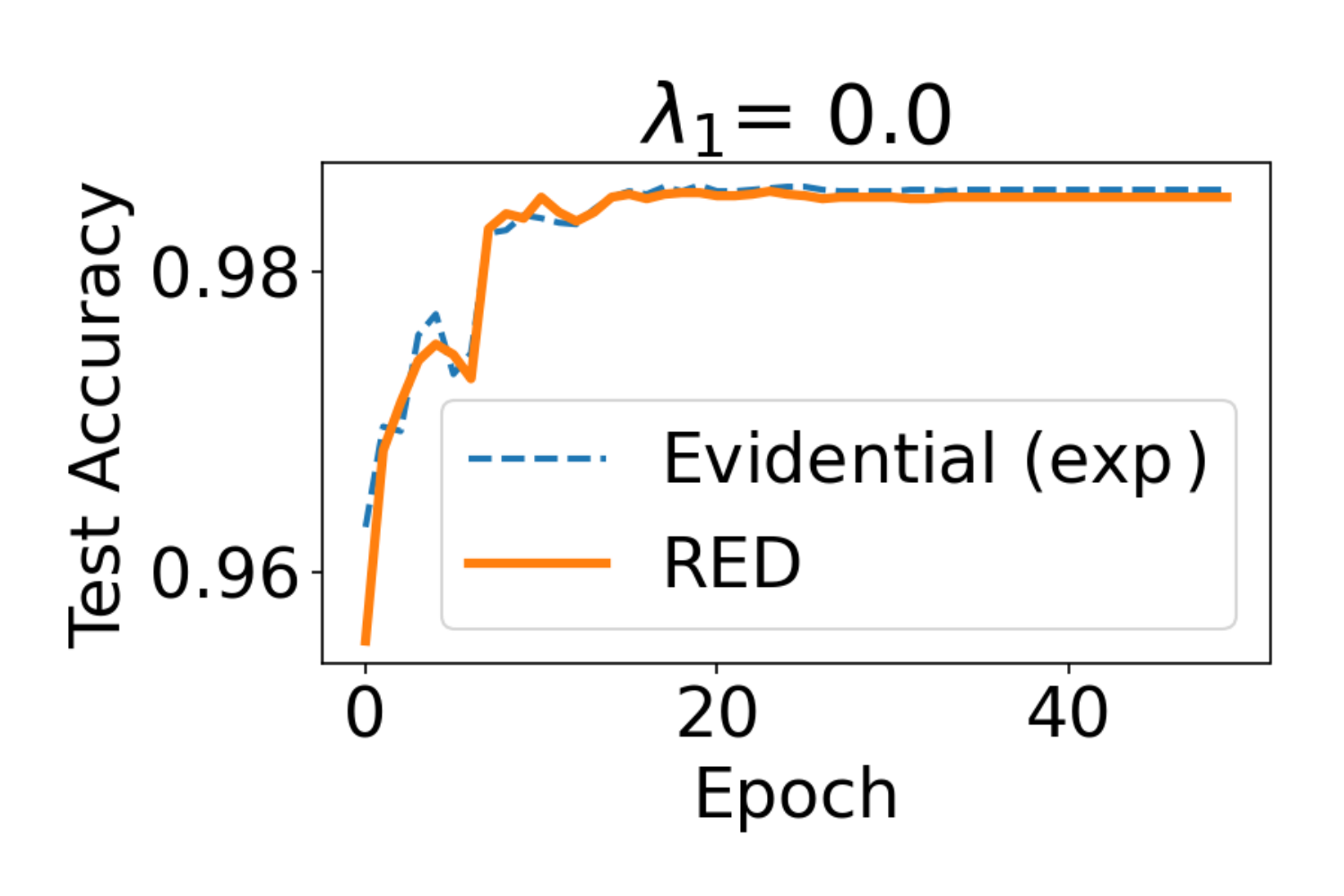}
}
\subfigure[Trend for $\lambda_1 = 1.0$]{
  \includegraphics[width=0.30\linewidth]{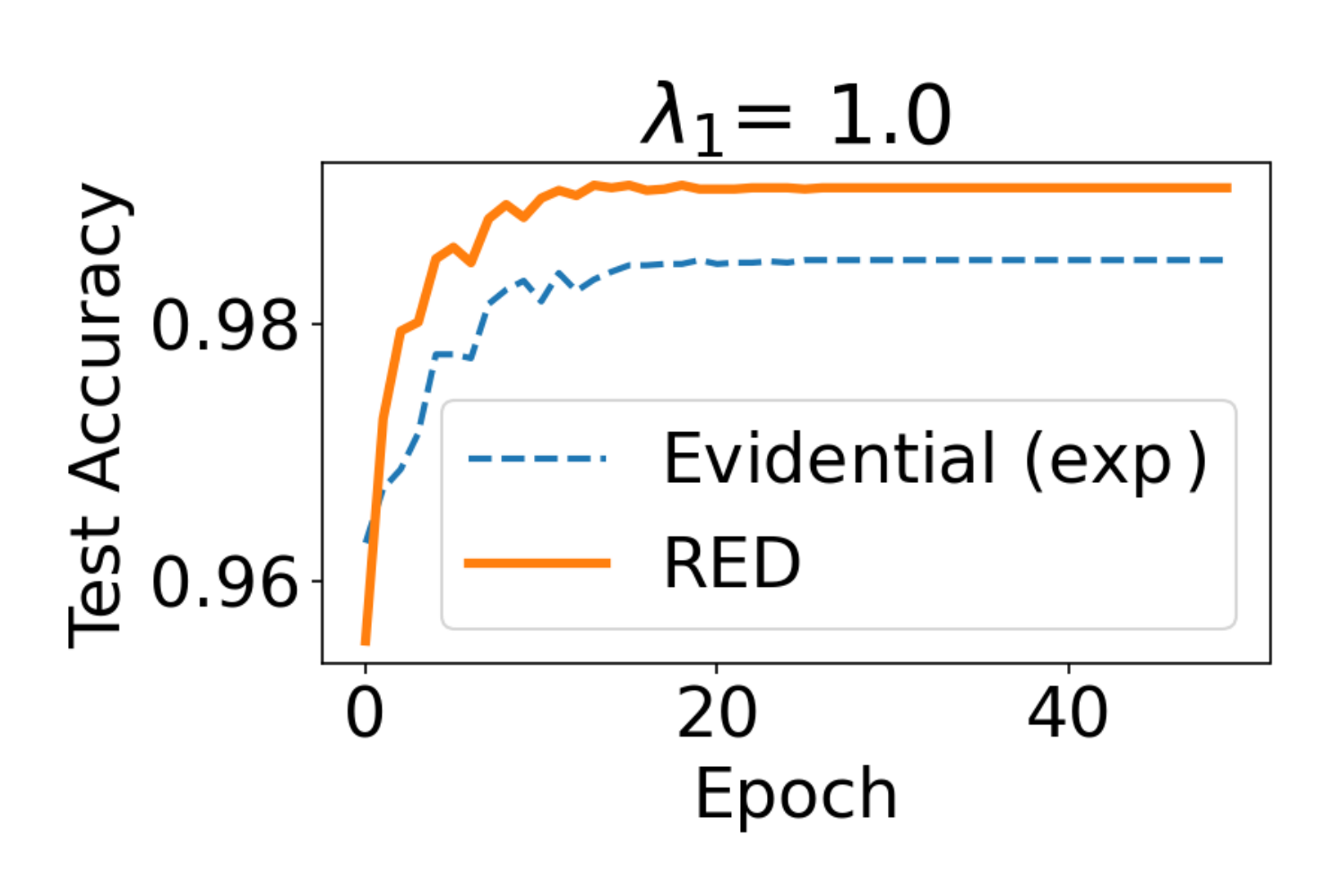}
}
\subfigure[Trend for $\lambda_1 = 10.0$]{
  \includegraphics[width=0.30\linewidth]{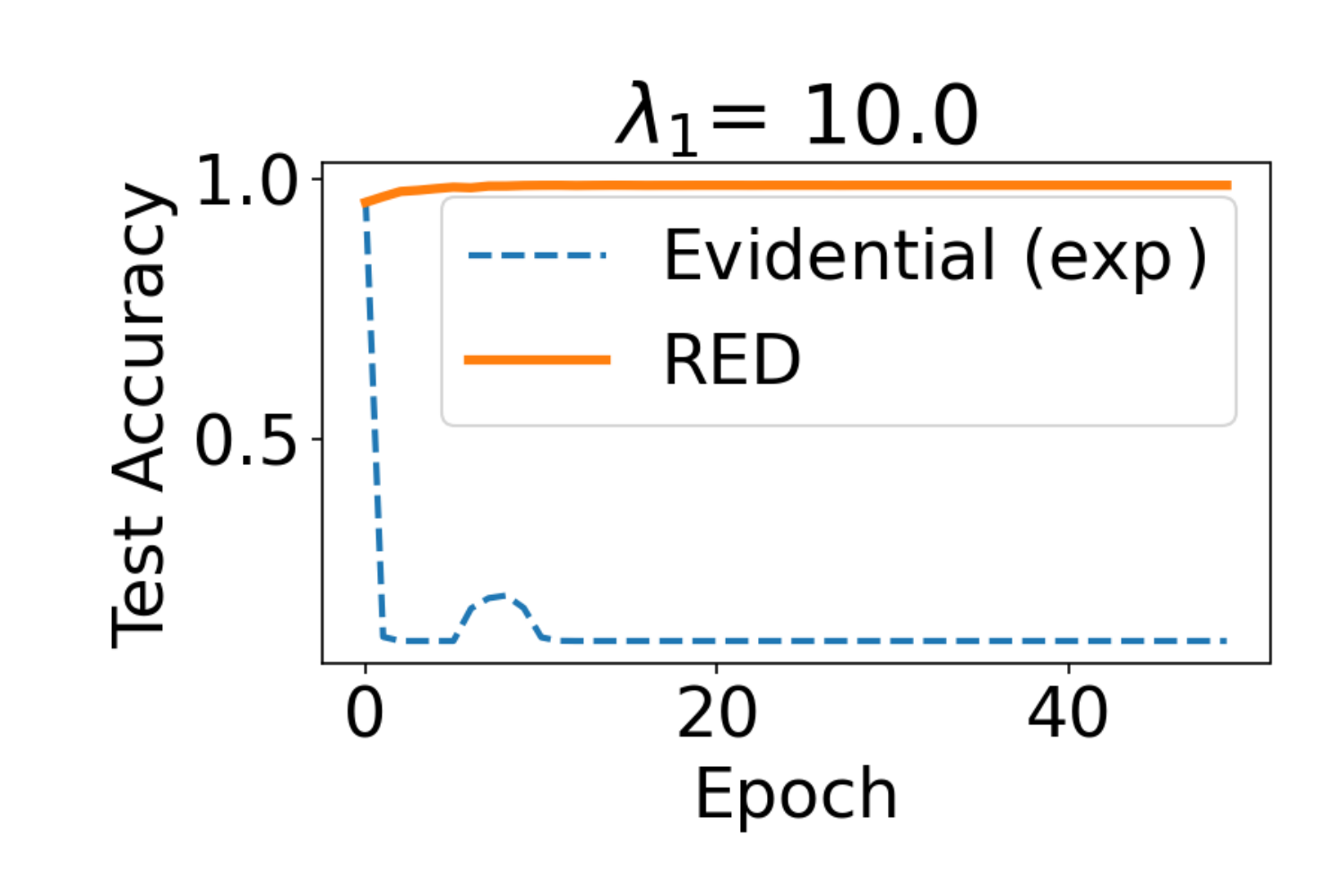}
}
\caption{Impact of proposed Correct Evidence Regularization to the test set accuracy of the evidential model( Trained with Eqn. \ref{eqn:evMSEloss})}
\label{fig:appCorEvRegImpactMSE}
\end{figure} 
\begin{figure}[ht!] 
\centering
\subfigure[Trend for $\lambda_1 = 0.0$]{
  \includegraphics[width=0.30\linewidth]{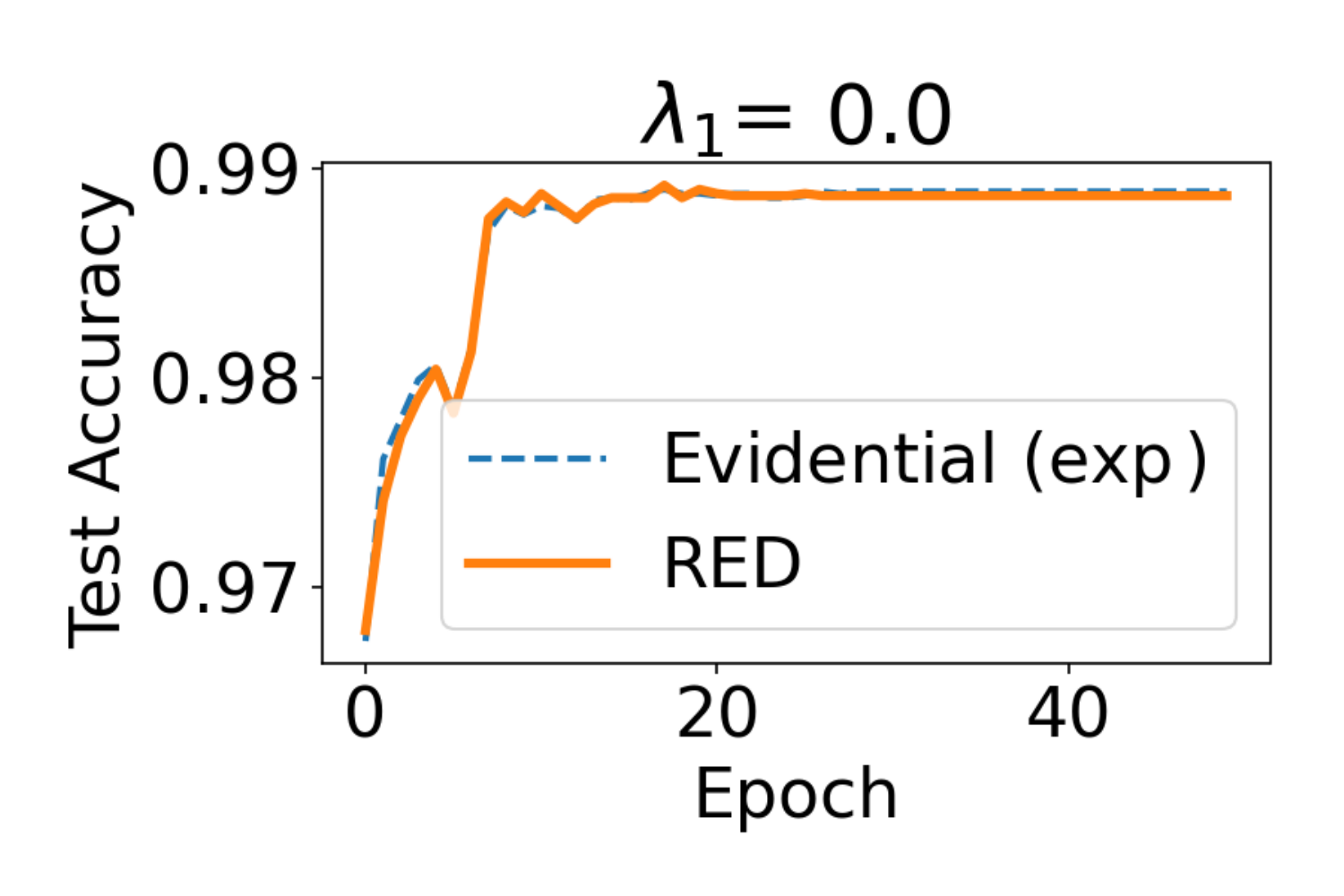}
}
\subfigure[Trend for $\lambda_1 = 1.0$]{
  \includegraphics[width=0.30\linewidth]{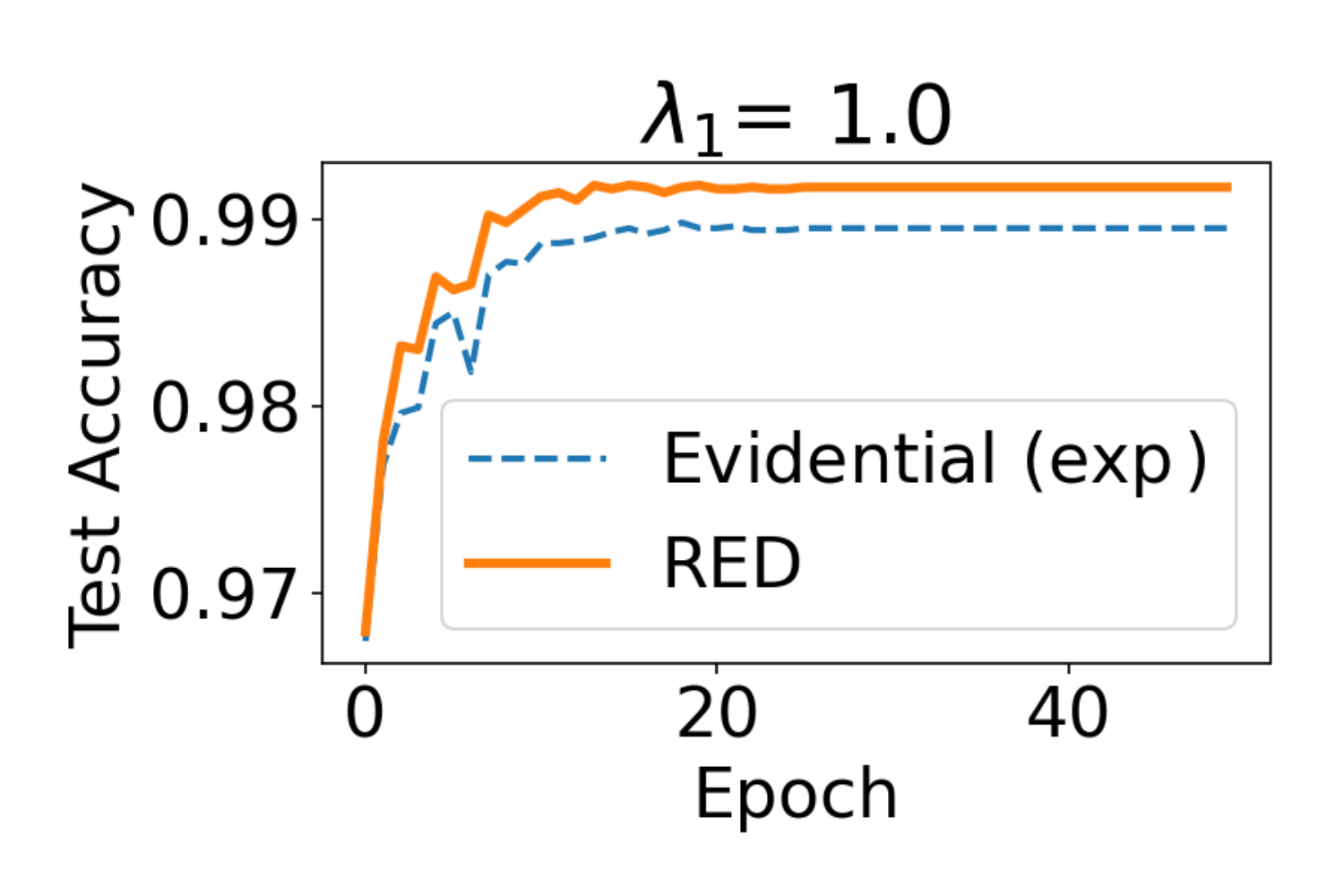}
}
\subfigure[Trend for $\lambda_1 = 10.0$]{
  \includegraphics[width=0.30\linewidth]{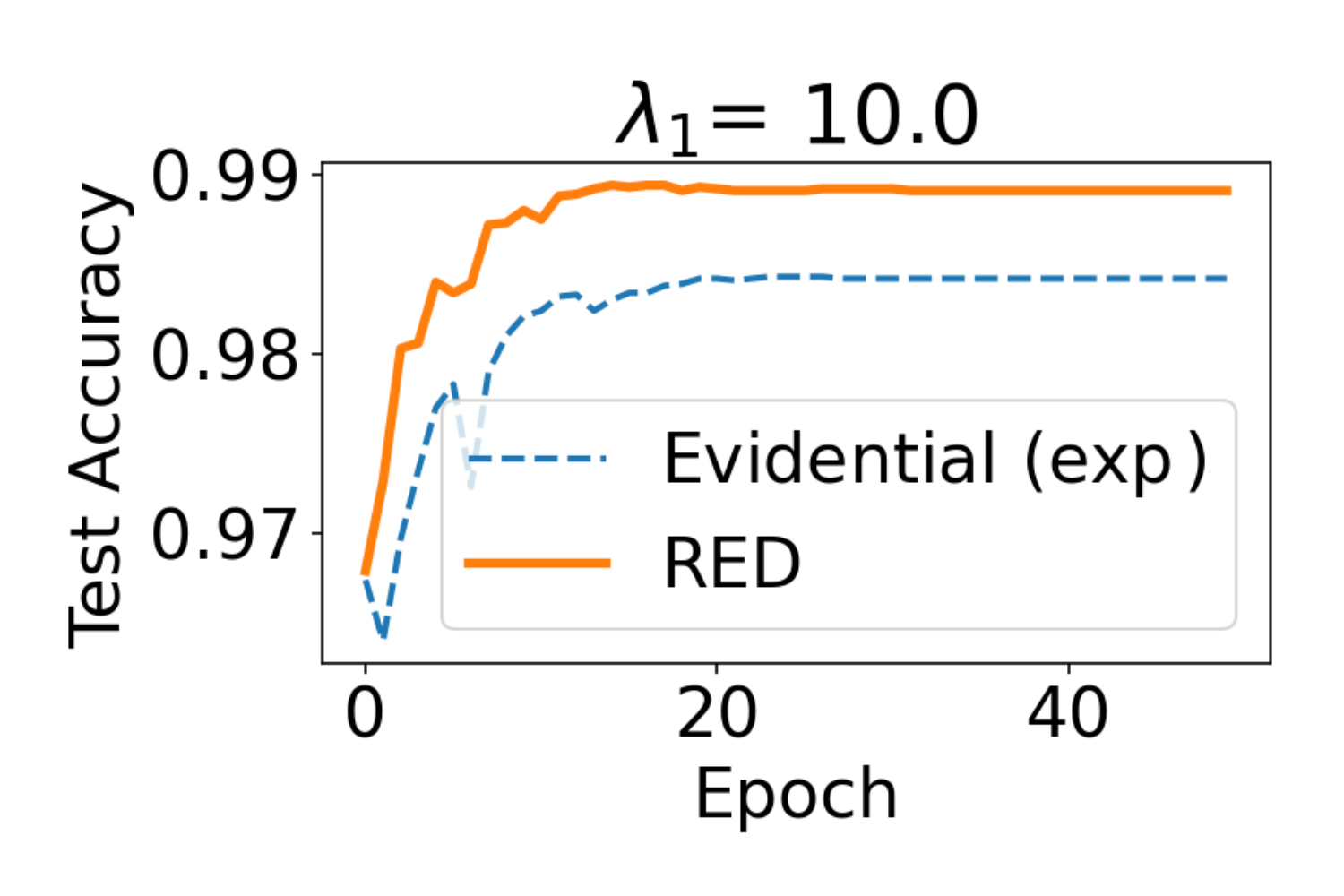}
}
\caption{Impact of proposed Correct Evidence Regularization to the test set accuracy of the evidnetial model (Trained with Eqn. \ref{eqn:evDigammaloss})}
\label{fig:appCorEvRegImpactDig}
\end{figure} 

\subsubsection{Few-Shot Classification Experiments}
\label{subsec:FewShotLearningSetting}
Ideas presented in this work address the fundamental limitation of evidential classification framework that enables the evidential model to acquire knowledge from all the training samples. Using these ideas, evidential framework can be extended to challenging classification problems to the reasonable predictive performance. To this end, we experiment with few-shot classification using $1$-shot and $5$-shot classification for the \textit{mini}-ImageNet dataset \cite{vinyals2016matching}. We consider the ResNet-12 backbone, classifier-baseline model \cite{chen2021meta}, and its evidential extension. Table \ref{tab:fewShotClassification} shows the results for $1$-shot and $5$-shot classification experiments. As can be seen, the \texttt{ReLU} and \texttt{Softplus} based evidential models have suboptimal performance as they avoid many training samples of the zero-evidence region. In contrast, the $\exp$ model has a better learning capacity that leads to superior performance. Finally, the proposed model RED can learn from all training samples, which leads to the best generalization performance among all the evidential models. 

\begin{table}[ht]
\centering
\small
    \caption{Few-Shot Classification Accuracy comparison: \textit{mini}-ImageNet dataset
    }
\begin{tabular}{|p{0.66\textwidth}|}
    \hline
    \centering
    Standard CE Model: $1$ Shot: $57.9_{\pm0.2}\%$; $5$-Shot: $76.9_{\pm 0.2}\%$
\end{tabular}
\begin{tabular}{|p{0.66\textwidth}|}
    \hline
    $1$-Shot Experiments\\
    \hline
\end{tabular}

\begin{tabular}{|p{0.12\textwidth}|p{0.11\textwidth}|p{0.11\textwidth}|p{0.11\textwidth}|p{0.11\textwidth}|}

\hline
Regularization & \texttt{ReLU} &\texttt{SoftPlus} & $\exp$  &\textbf{RED (Ours)}\\
\hline
 $\lambda_1 = 0.000$    &$38.78_{\pm3.75}$&$51.60_{\pm0.40}$&$57.11_{\pm0.09}$&$56.27_{\pm0.15}$\\
  $\lambda_1 = 0.100$   &$31.15_{\pm1.69}$&$48.87_{\pm0.21}$&$56.43_{\pm0.03}$&$\bf{58.03_{\pm0.39}}$\\
   $\lambda_1 = 1.000$ &$20.00_{\pm0.00}$&$43.81_{\pm0.56}$&$27.43_{\pm0.88}$&$54.68_{\pm0.45}$\\
\hline
\end{tabular}

\begin{tabular}{|p{0.66\textwidth}|}
    \hline
    $5$-Shot Experiments\\
    \hline
\end{tabular}

\begin{tabular}{|p{0.12\textwidth}|p{0.11\textwidth}|p{0.11\textwidth}|p{0.11\textwidth}|p{0.11\textwidth}|}

\hline
Regularization & \texttt{ReLU} &\texttt{SoftPlus} & $\exp$  &\textbf{Ours}\\
\hline
 $\lambda_1 = 0.000$    &$52.66_{\pm5.32}$&$67.22_{\pm0.17}$&$75.87_{\pm0.09}$&$75.31_{\pm0.13}$\\
  $\lambda_1 = 0.100$   &$43.95_{\pm3.72}$&$66.14_{\pm0.05}$&$74.08_{\pm0.13}$&$\bf{76.05_{\pm0.17}}$\\
   $\lambda_1 = 1.000$ &$20.00_{\pm0.00}$&$61.96_{\pm0.61}$&$34.01_{\pm1.46}$&$72.32_{\pm0.20}$\\
\hline
\end{tabular}
\label{tab:appFewShot}
\label{tab:fewShotClassification}
\end{table}

\subsubsection{Complex Dataset/Model Experiments}
{We also carry out experiment for a challenging 200-class classification problem over Tiny-ImageNet based on \cite{huynh2022vision}. We adapt the Swin Transformer to be
evidential, and train all the models for 20 epochs with Evidential log loss (Eqn. \ref{eqn:evLogloss}). In this setting, ReLU based evidential
model achieves 85.25\% accuracy, softplus based model achieves 85.15 \% accuracy, the exponential model improves over
both to achieve 89.93 \% accuracy, and our proposed model RED outperforms all the evidential models to achieve the greatest
accuracy of 90.14\%, empirically validating our theoretical analysis.}

\subsection{Limitations and Future works} \label{ap:limitationsAndFutureWorks}
We carried out a theoretical investigation of the Evidential Classification models to identify their fundamental limitation: their inability to learn from \textit{zero evidence regions}. The empirical study in this work is based on classification problems. We next plan to extend the ideas to develop Evidential Segmentation and Evidential Object Detection models. Moreover, this work identifies limitations of Evidential MSE loss in \eqref{eqn:evMSEloss}, and we plan to carry out a thorough theoretical analysis to analyze other evidential losses given in \eqref{eqn:evLogloss} and \eqref{eqn:evDigammaloss}). The proposed evidential model, similar to existing evidential classification models, requires hyperparameter tuning for $\lambda_1$ i.e. the incorrect evidence regularization hyperparameter.

{In addition, extending evidential models to noisy and incomplete data settings and investigating the benefits of leveraging uncertainty
information could be interesting future work. Finally, It will be an interesting future work to extend the analysis and evidential models to tasks beyond classification, for instance to build effective evidential segmentation and object detection models.}

\begin{table}[ht]
\centering
\small
    \caption{Classification performance comparison: MNIST dataset
    }
    \label{tab:appMnistRes}
\begin{tabular}{|p{0.66\textwidth}|}
    \hline
    \centering
    Standard CE Model: $99.21_{\pm 0.03}$\%
\end{tabular}
\begin{tabular}{|p{0.66\textwidth}|}
    \hline
    Log loss\\
    \hline
\end{tabular}
\begin{tabular}{|p{0.12\textwidth}|p{0.11\textwidth}|p{0.11\textwidth}|p{0.11\textwidth}|p{0.11\textwidth}|}

\hline
Regularization & \texttt{ReLU} &\texttt{SoftPlus} & $\exp$  &\textbf{RED (Ours)}\\
\hline
 $\lambda_1 = 0.000$    &$97.06_{\pm0.19}$&$97.07_{\pm0.24}$&$98.85_{\pm0.03}$&$98.82_{\pm0.04}$\\
  $\lambda_1 = 1.000$   &$98.19_{\pm0.08}$&$98.21_{\pm0.05}$&$98.79_{\pm0.02}$&$\bf{99.10_{\pm0.02}}$\\
   $\lambda_1 = 10.000$ &$83.17_{\pm4.54}$&$80.37_{\pm18.70}$&$98.14_{\pm0.07}$&$98.84_{\pm0.03}$\\
\hline
\end{tabular}
\begin{tabular}{|p{0.66\textwidth}|}
    \hline
    Evidential CE loss\\
    \hline
\end{tabular}
\begin{tabular}{|p{0.12\textwidth}|p{0.11\textwidth}|p{0.11\textwidth}|p{0.11\textwidth}|p{0.11\textwidth}|}

\hline
 $\lambda_1 = 0.000$    &$97.03_{\pm0.21}$&$97.09_{\pm0.21}$&$98.84_{\pm0.02}$&$98.81_{\pm0.01}$\\
  $\lambda_1 = 1.000$   &$98.27_{\pm0.02}$&$98.36_{\pm0.02}$&$98.87_{\pm0.03}$&$\bf{99.12_{\pm0.02}}$\\
   $\lambda_1 = 10.000$ &$97.46_{\pm1.02}$&$97.14_{\pm1.42}$&$98.31_{\pm0.07}$&$98.84_{\pm0.04}$\\
\hline
\end{tabular}
\begin{tabular}{|p{0.66\textwidth}|}
    \hline
    Evidential MSE loss\\
    \hline
\end{tabular}
\begin{tabular}{|p{0.12\textwidth}|p{0.11\textwidth}|p{0.11\textwidth}|p{0.11\textwidth}|p{0.11\textwidth}|}

\hline
 $\lambda_1 = 0.000$    &$96.18_{\pm0.02}$&$96.20_{\pm0.03}$&$98.42_{\pm0.03}$&$98.41_{\pm0.06}$\\
  $\lambda_1 = 1.000$   &$97.41_{\pm0.22}$&$97.45_{\pm0.16}$&$98.35_{\pm0.05}$&$\bf{99.02_{\pm0.00}}$\\
   $\lambda_1 = 10.000$ &$19.93_{\pm6.98}$&$27.14_{\pm6.37}$&$27.17_{\pm3.72}$&$98.76_{\pm0.03}$\\
\hline
\end{tabular}
\label{ap:CompleteMNISTResults}
\end{table}

\begin{table}[ht]
\centering
\small
    \caption{Classification performance comparison: Cifar10 Dataset
    }
    \label{tab:appCifar10Res}
\begin{tabular}{|p{0.66\textwidth}|}
    \hline
    \centering
    Standard CE Model: $95.43_{\pm0.02}\%$
\end{tabular}
\begin{tabular}{|p{0.66\textwidth}|}
    \hline
    Log loss\\
    \hline
\end{tabular}
\begin{tabular}{|p{0.12\textwidth}|p{0.11\textwidth}|p{0.11\textwidth}|p{0.11\textwidth}|p{0.11\textwidth}|}

\hline
Regularization & \texttt{ReLU} &\texttt{SoftPlus} & $\exp$  &\textbf{RED (Ours)}\\
\hline
 $\lambda_1 = 0.000$    &$43.83_{\pm14.60}$&$95.19_{\pm0.10}$&$95.35_{\pm0.02}$&$95.03_{\pm0.14}$\\
 $\lambda_1 = 0.100$    &$41.43_{\pm19.60}$&$95.18_{\pm0.11}$&$95.11_{\pm0.10}$&$\bf{95.24_{\pm0.06}}$\\
 $\lambda_1 = 1.000$    &$38.42_{\pm15.64}$&$94.94_{\pm0.22}$&$93.95_{\pm0.06}$&$94.78_{\pm0.17}$\\
 $\lambda_1 = 10.000$   &$10.00_{\pm0.00}$&$32.42_{\pm6.99}$&$23.29_{\pm5.24}$&$90.96_{\pm0.35}$\\
 $\lambda_1 = 50.000$   &$10.00_{\pm0.00}$&$10.00_{\pm0.00}$&$12.47_{\pm3.49}$&$65.09_{\pm0.74}$\\
\hline
\end{tabular}
\begin{tabular}{|p{0.66\textwidth}|}
    \hline
    Evidential CE loss\\
    \hline
\end{tabular}
\begin{tabular}{|p{0.12\textwidth}|p{0.11\textwidth}|p{0.11\textwidth}|p{0.11\textwidth}|p{0.11\textwidth}|}

\hline
 $\lambda_1 = 0.000$    &$79.19_{\pm16.06}$&$95.32_{\pm0.17}$&$95.38_{\pm0.10}$&$\bf{95.40_{\pm0.14}}$\\
 $\lambda_1 = 0.100$    &$75.97_{\pm20.56}$&$95.12_{\pm0.05}$&$95.33_{\pm0.03}$&$95.08_{\pm0.07}$\\
  $\lambda_1 = 1.000$   &$75.83_{\pm20.74}$&$94.99_{\pm0.08}$&$94.65_{\pm0.04}$&$94.74_{\pm0.11}$\\
 $\lambda_1 = 10.000$   &$10.00_{\pm0.00}$&$89.63_{\pm0.38}$&$56.54_{\pm4.80}$&$91.71_{\pm0.23}$\\
 $\lambda_1 = 50.000$   &$10.00_{\pm0.00}$&$27.03_{\pm2.62}$&$25.33_{\pm6.66}$&$62.98_{\pm0.84}$\\
\hline
\end{tabular}
\begin{tabular}{|p{0.66\textwidth}|}
    \hline
    Evidential MSE loss\\
    \hline
\end{tabular}
\begin{tabular}{|p{0.12\textwidth}|p{0.11\textwidth}|p{0.11\textwidth}|p{0.11\textwidth}|p{0.11\textwidth}|}

\hline
 $\lambda_1 = 0.000$   &$\bf{95.43_{\pm0.05}}$&$95.35_{\pm0.15}$&$95.10_{\pm0.04}$&$94.92_{\pm0.12}$\\
 $\lambda_1 = 0.100$    &$95.15_{\pm0.10}$&$95.04_{\pm0.05}$&$95.14_{\pm0.03}$&$95.03_{\pm0.13}$\\
  $\lambda_1 = 1.000$   &$49.68_{\pm29.48}$&$93.51_{\pm0.03}$&$18.98_{\pm1.82}$&$94.90_{\pm0.20}$\\
 $\lambda_1 = 10.000$   &$10.00_{\pm0.00}$&$10.00_{\pm0.00}$&$10.00_{\pm0.00}$&$90.15_{\pm0.71}$\\
 $\lambda_1 = 50.000$   &$10.00_{\pm0.00}$&$10.00_{\pm0.00}$&$10.00_{\pm0.00}$&$27.11_{\pm24.20}$\\
\hline
\end{tabular}
\label{ap:CompleteCifar10Results}
\end{table}

\begin{table}[ht]
\centering
\small
    \caption{Classification performance comparison: Cifar100 dataset
    }
    \label{tab:appCifar100AllRes}
\begin{tabular}{|p{0.66\textwidth}|}
    \hline
    \centering
    Standard CE Model: $75.67\pm0.11$
\end{tabular}
\begin{tabular}{|p{0.66\textwidth}|}
    \hline
    Log loss\\
    \hline
\end{tabular}
\begin{tabular}{|p{0.12\textwidth}|p{0.11\textwidth}|p{0.11\textwidth}|p{0.11\textwidth}|p{0.11\textwidth}|}

\hline
Regularization & \texttt{ReLU} &\texttt{SoftPlus} & $\exp$  &\textbf{RED (Ours)}\\
\hline
 $\lambda_1 = 0.000$    &$56.69_{\pm5.83}$&$73.85_{\pm0.20}$&$76.25_{\pm0.16}$&$76.26_{\pm0.27}$\\
 $\lambda_1 = 0.001$    &$61.27_{\pm3.79}$&$74.48_{\pm0.17}$&$76.12_{\pm0.04}$&$\bf{76.43_{\pm0.21}}$\\
 $\lambda_1 = 0.010$    &$54.20_{\pm5.93}$&$75.56_{\pm0.43}$&$76.02_{\pm0.16}$&$76.14_{\pm0.09}$\\
 $\lambda_1 = 0.100$    &$20.29_{\pm4.54}$&$75.67_{\pm0.22}$&$72.72_{\pm0.26}$&$74.62_{\pm0.21}$\\
  $\lambda_1 = 1.000$   &$1.00_{\pm0.00}$&$37.60_{\pm0.82}$&$2.59_{\pm0.52}$&$68.62_{\pm0.03}$\\
  $\lambda_1 = 2.000$   &$1.00_{\pm0.00}$&$1.57_{\pm0.35}$&$0.97_{\pm0.06}$&$62.33_{\pm0.52}$\\
\hline
\end{tabular}
\begin{tabular}{|p{0.66\textwidth}|}
    \hline
    Evidential CE loss\\
    \hline
\end{tabular}
\begin{tabular}{|p{0.12\textwidth}|p{0.11\textwidth}|p{0.11\textwidth}|p{0.11\textwidth}|p{0.11\textwidth}|}

\hline
 $\lambda_1 = 0.000$    &$66.37_{\pm3.47}$&$73.73_{\pm0.38}$&$75.91_{\pm0.20}$&$76.19_{\pm0.22}$\\
 $\lambda_1 = 0.001$    &$68.62_{\pm2.41}$&$74.44_{\pm0.08}$&$76.23_{\pm0.09}$&$\bf{76.35_{\pm0.06}}$\\
 $\lambda_1 = 0.010$    &$71.94_{\pm0.66}$&$75.45_{\pm0.12}$&$75.95_{\pm0.14}$&$76.13_{\pm0.24}$\\
 $\lambda_1 = 0.100$    &$67.25_{\pm1.84}$&$75.75_{\pm0.21}$&$74.02_{\pm0.09}$&$74.69_{\pm0.13}$\\
  $\lambda_1 = 1.000$   &$1.00_{\pm0.00}$&$73.10_{\pm0.20}$&$37.36_{\pm0.73}$&$69.40_{\pm0.16}$\\
  $\lambda_1 = 2.000$   &$1.00_{\pm0.00}$&$52.99_{\pm0.56}$&$12.94_{\pm1.11}$&$63.93_{\pm0.34}$\\
\hline
\end{tabular}
\begin{tabular}{|p{0.66\textwidth}|}
    \hline
    Evidential MSE loss\\
    \hline
\end{tabular}
\begin{tabular}{|p{0.12\textwidth}|p{0.11\textwidth}|p{0.11\textwidth}|p{0.11\textwidth}|p{0.11\textwidth}|}

\hline
 $\lambda_1 = 0.000$    &$35.76_{\pm2.81}$&$20.45_{\pm1.41}$&$75.70_{\pm0.47}$&$75.55_{\pm 0.24}$\\
 $\lambda_1 = 0.001$    &$31.49_{\pm 0.31}$&$15.74_{\pm 0.47}$&$42.95_{\pm 0.76}$&$\bf{75.73_{\pm 0.27}}$\\
 $\lambda_1 = 0.010$    &$13.60_{\pm2.44}$&$1.00_{\pm0.00}$&$1.00_{\pm0.00}$&$75.35_{\pm0.16}$\\
 $\lambda_1 = 0.100$    &$1.00_{\pm0.00}$&$1.00_{\pm0.00}$&$1.00_{\pm0.00}$&$74.00_{\pm0.13}$\\
  $\lambda_1 = 1.000$   &$1.00_{\pm0.00}$&$1.00_{\pm0.00}$&$1.00_{\pm0.00}$&$66.61_{\pm0.46}$\\
  $\lambda_1 = 2.000$   &$1.00_{\pm0.00}$&$1.00_{\pm0.00}$&$1.00_{\pm0.00}$&$63.01_{\pm0.83}$\\
\hline
\end{tabular}
\label{ap:CompleteCifar100Results}
\end{table}

\end{document}